\newtheorem{theorem}{Theorem}
\newtheorem{definition}{Definition}
\newtheorem{lemma}{Lemma}
\newtheorem{remark}{Remark}
\newtheorem{corollary}{Corollary}
\newtheorem{proposition}{Proposition}
\newtheorem*{theorem*}{Theorem}
\newtheorem*{example*}{Example} 
\newtheorem*{definition*}{Definition}
\newtheorem*{lemma*}{Lemma}
\newtheorem*{proposition*}{Proposition}
\newtheorem*{assumption*}{Assumption}
\newtheorem*{claim*}{Claim}
\newtheoremstyle{TheoremNum}
        {\topsep}{\topsep}              
        {\itshape}                      
        {}                              
        {\bfseries}                     
        {.}                             
        { }                             
        {\thmname{#1}\thmnote{ \bfseries #3}}
\theoremstyle{TheoremNum}
\newtheoremstyle{LemmaNum}
        {\topsep}{\topsep}              
        {\itshape}                      
        {}                              
        {\bfseries}                     
        {.}                             
        { }                             
        {\thmname{#1}\thmnote{ \bfseries #3}}
\theoremstyle{LemmaNum}
\newtheorem*{corollary*}{Corollary}
\newcommand{\hochkomma}{${\,\,}^{,}$}
\newcommand{\X}{ \mathcal{X} }
\renewcommand{\Pr}{ \mathbb{P} }
\newcommand{\x}{ \mathbf{x} }
\newcommand{\y}{ \mathbf{y} }
\newcommand{\z}{ \mathbf{z} }
\renewcommand{\u}{ \mathbf{u} }
\newcommand{\Dim}{ \mathrm{Dim} }
\newcommand{\Ind}{ \mathbb{I} }
\newcommand{\A}{ \mathcal{A} } 
\newcommand{\M}{\mathcal{M}}
\newcommand{\D}{\mathcal{D}}
\newcommand{\Exp}{{\mathrm{Exp}}}
\newcommand{\R}{\mathbb{R}}
\newcommand{\B}{\mathbb{B}}
\renewcommand{\S}{\mathbb{S}}
\newcommand{\E}{\mathbb{E}}
\renewcommand{\[}{\left[ }
\renewcommand{\]}{\right] }
\newcommand{\<}{\left< }
\renewcommand{\>}{\right> }
\renewcommand{\(}{\left( }
\renewcommand{\)}{\right) }
\newcommand{\wt}{\widetilde }
\newcommand{\wh}{\widehat }
\newcommand{\citep}{\cite}
\newcommand{\citet}{\cite}
\title{Batched Stochastic Bandit for Nondegenerate Functions} 
\author{Yu Liu\footnote{Equal contribution}\hochkomma\thanks{22110840006@m.fudan.edu.cn} \qquad Yunlu Shu${}^{*,}$\footnote{22110840008@m.fudan.edu.cn} \qquad Tianyu Wang\footnote{Correspondence to: wangtianyu@fudan.edu.cn}} 
\date{}
\begin{document}

\maketitle

\begin{abstract}
    This paper studies batched bandit learning problems for nondegenerate functions. Over a compact doubling metric space $(\mathcal{X}, \mathcal{D})$, a function $f : \mathcal{X} \to \mathbb{R}$ is called nondegenerate if there exists $ L \ge \lambda >0 $ and $q \ge 1$, such that 
    $$\lambda \left( \mathcal{D} (\mathbf{x}, \mathbf{x}^*) \right)^q \le f (\mathbf{x}) - f (\mathbf{x}^*) \le L \left( \mathcal{D} (\mathbf{x}, \mathbf{x}^*) \right)^q , \; \mathbf{x} \in \mathcal{X}, $$
where $\mathbf{x}^* = \arg\min_{\mathbf{z} \in \mathcal{X}} f (\mathbf{z}) $ is the unique minimizer of $f$ over $\mathcal{X}$. In this paper, we introduce an algorithm that solves the batched bandit problem for nondegenerate functions near-optimally. More specifically, we introduce an algorithm, called Geometric Narrowing (GN), whose regret bound is of order $ \widetilde{{\mathcal{O}}} \left( A_+^d \sqrt{T} \right) $, where $d$ is the doubling dimension of $(\mathcal{X},\mathcal{D})$, and $A_+$ is a constant independent of $ d $ and the time horizon $T$. In addition, GN only needs $ {\mathcal{O}} (\log \log T) $ batches to achieve this regret. We also provide lower bound analysis for this problem. More specifically, we prove that over some (compact) doubling metric space of doubling dimension $d$: 1. For any policy $\pi$, there exists a problem instance on which $\pi$ admits a regret of order ${\Omega} \left( A_-^d \sqrt{T} \right)$, where $A_-$ is a constant independent of $ d $ and $T$; 2. No policy can achieve a regret of order $ A_-^d \sqrt{T} $ over all problem instances, using less than $ \Omega \left( \log \log T \right) $ rounds of communications. Our lower bound analysis shows that the GN algorithm achieves near optimal regret with minimal number of batches. 
\end{abstract}

\section{Introduction}

In batched stochastic bandit, an agent collects noisy rewards/losses in batches, and aims to find the best option while exploring the space \citep{thompson1933likelihood,robbins1952some,gittins1979bandit,lai1985asymptotically,auer2002finite,auer2002nonstochastic,perchet2016batched,gao2019batched}. This setting reflects the key attributes of crucial real-world applications. For example, in experimental design \citep{robbins1952some,berry1985bandit}, the observations are often noisy and collected in batches \citep{perchet2016batched,gao2019batched}. In this paper, we consider batched stochastic bandits for an important class of functions, called ``nondegenerate functions''. 

\subsection{Nondegenerate Functions}

Over a compact doubling metric space $(\X, \D)$, a function $f : \X \to \R$ is called a nondegenerate function if there exists $ L \ge \lambda >0 $ and $q \ge 1$, such that 
\begin{align} 
    \lambda \( \D (\x, \x^*) \)^q \le f (\x) - f (\x^*) \le L \( \D (\x, \x^*) \)^q ,  \label{eq:def-nondegen} 
\end{align} 
$\forall \x \in \X,$ where $\x^* = \arg\min_{\z \in \X} f (\z) $ is the unique minimizer of $f$ over $\X$. 
Nondegenerate functions \citep{valko2013stochastic,10.5555/3294771.3294841,gemp2024approximating} hold significance as they encompass various important problems. 
Below we list two important {classes of motivating applications for} nondegenerate functions. 
\begin{itemize} 
    \item \textbf{(P0, real-world motivations) Revenue curve as a function of price:} Consider the space $\(\X, \D\)$ with $ \X = [0,1] $ and $\D (\x, \y) = | \x - \y |$. If $\x \in [0,1]$ models price, then functions satisfying (\ref{eq:def-nondegen}) provide a natural model for revenue curve as a function of price, up to a flip of sign. 
    Nondegenerate functions, which naturally extend the concept of strictly concave/convex functions, provide a general framework for modeling revenue as a function of price. 
    With the revenue curve modelled by a nondegenerate function of price, our results find applications in several real-world scenarios, including dynamic pricing (e.g., \citet{NEURIPS2019_0a3df703,chen2023robust,Perakis2023dynamic}). 

    As is widely accepted (e.g., \citet{mankiw1998principles}), the overall revenue curve typically exhibits a pattern where it first rises and then falls, as the price increases. These overall revenue curves can be adequately represented by nondegenerate functions. In contrast to standard models, such modeling permits fluctuations in regions that are relatively distant from the optimum. Recently, with the growing demand for online pricing strategies in e-commerce and repeated auctions, bandit algorithms have been employed to address dynamic pricing challenges, where the agent repeatedly selects prices, observes the corresponding revenue, and aims to maximize the gain on-the-fly. Therefore, our research on batched bandits for nondegenerate functions offers a novel approach to solving dynamic pricing problems. 
    \item \textbf{(P) Nonsmooth nonconvex objective over Riemannian manifolds:} Our study introduces a \emph{global} bandit optimization method for a class of nonconvex functions on compact Riemannian manifolds, where nontrivial convex functions do not exist \citep{Yau1974}. Let $ \( \X , \D \)$ be a compact finite-dimensional Riemannian manifold with the metric defined by the geodesic distance (e.g., \citet{petersen2006riemannian}). 
    Then a smooth function with nondegenerate Taylor approximation satisfies (\ref{eq:def-nondegen}) near its global minimum $\x^*$. More specifically, we can Taylor approximate the function $f$ near $\x^*$ and get, for $\x = \mathrm{Exp}_{\x^*} (\mathbf{v})$ with some $\mathbf{v} \in T_{\x^*} \mathcal{M}$, $ f (\x) \approx \sum_{i=0}^K \frac{1}{i!}\varphi_{\mathbf{v}}^{(i)} \( \| \mathbf{v} \| \) $ where $ \varphi_{\mathbf{v}}^{(i)} $ is the $i$-th derivative of $f \circ \Exp_{\x^*}$ along the direction of $\mathbf{v}$, and $K \ge 2$ is some integer. Since $ \x^* $ is a local minimum of $f$, we have $\varphi_{\mathbf{v}}^{(1)} = 0$ (for all $\mathbf{v}$), and thus $ f (\x) - f (\x^*) \approx \frac{1}{q!} \varphi_{\mathbf{v}}^{(q)} \( \| \mathbf{v} \| \) $ where $q \ge 2$ is the smallest integer such that $ \varphi_{\mathbf{v}}^{(q)} \neq 0 $ (for some $\mathbf{v}$). If $ \varphi_{\mathbf{v}}^{(q)} $ is nontrivial for all $\mathbf{v} \in T_{\x^*} \M$, that is, the leading nontrivial total derivative of $f$ is nondegenerate, then the function $ f $ satisfies (\ref{eq:def-nondegen}) in a neighborhood of $\x^*$. This justifies the name ``nondegenerate''. 
    In Figure \ref{fig:example-2}, we provide a specific example of a nondegenerate function over a Riemannian manifold. Over the entire manifold, the objective is nonsmooth and nonconvex. 
    
\end{itemize} 

A concrete illustrating example of nondegenerate function is the linear function constrained to the unit sphere: 
\begin{align*} 
    f (\x) = \< \mathbf{u} , \x \> , 
    \quad s.t. \quad 
    \| \x \|_2 = 1 , 
\end{align*} 
where $\mathbf{u} \in \S^{n-1}$ is an unknown fixed vector. It is evident that $ f $ obtains its minimum (over the sphere) at $ - \u $. This function $f$ satisfies, for any $\x \in \S^{n-1}$
\begin{align*} 
    f (\x) - f \( - \u \) 
    = 
    \< \u, \x \> + 1 
    = 
    \cos \( \D ( \u, \x ) \) + 1, 
\end{align*} 
where $\D (\u, \x) := \arccos \< \u, \x \> $ is the geodesic distance over the sphere. 
In metric space $\( \S^{n-1} , \D \)$, doubling dimention $d \asymp n-1$.
By noticing $ \cos \( \D ( \u, \x ) \) = \cos \( \pi - \D ( -\u, \x ) \) $ over the sphere, we have 
\begin{align*} 
    f (\x) - f \( - \u \) 
    =& \;  
    \cos \( \pi - \D ( -\u, \x ) \) + 1 \\ 
    =& \;  
    1 - \cos ( \D ( -\u, \x ) ) . 
\end{align*} 
From here one can easily verify that this function is a nondegenerate function. 
        Since $1-\theta^2/2 \le \cos \theta \le 1-\theta^2/5$ for $\theta \in [0,\pi]$, on the unit sphere the linear function $f$ satisfies, for all $\x \in \S^{d-1}$, 
        \begin{align*}
            \frac{\D ( -\u, \x )^2}{5}
            \le f (\x) - f \( - \u \)
            \le \frac{\D ( -\u, \x )^2}{2} . 
        \end{align*}

Also, it is worth emphasizing that nondegenerate functions can possess nonconvexity, nonsmoothness, or discontinuity. As an illustration, consider the following nondegenerate function $f(\mathbf{x})$ defined over the interval $[-2,2]$, which exhibits discontinuity: 
\begin{align} 
    f (\x) 
    = 
    \begin{cases} 
        -\x , & \text{if } \x \in [-2,-1) \\ 
        \x^2, & \text{if } \x \in [-1,1] \\ 
        \x + 1, & \text{if } \x \in (1,2] . 
    \end{cases} 
    \label{eq:exp} 
\end{align} 
A plot of the function (\ref{eq:exp}) is in Figure \ref{fig:example}, $\frac{\x^2}{2}$ (resp. $2\x^2$) is a lower bound (resp. upper bound) for $f (\x)$ over $[-2,2]$. 
More generally, over a compact Riemannian manifold, with the metric $\D$ defined by the geodesic distance, nondegenerate functions can still possess nonconvexity, nonsmoothness, or discontinuity. A specific example is shown in Figure \ref{fig:example-2}.

\begin{figure}[h!]
    \centering
    \includegraphics[width=3.25in]{./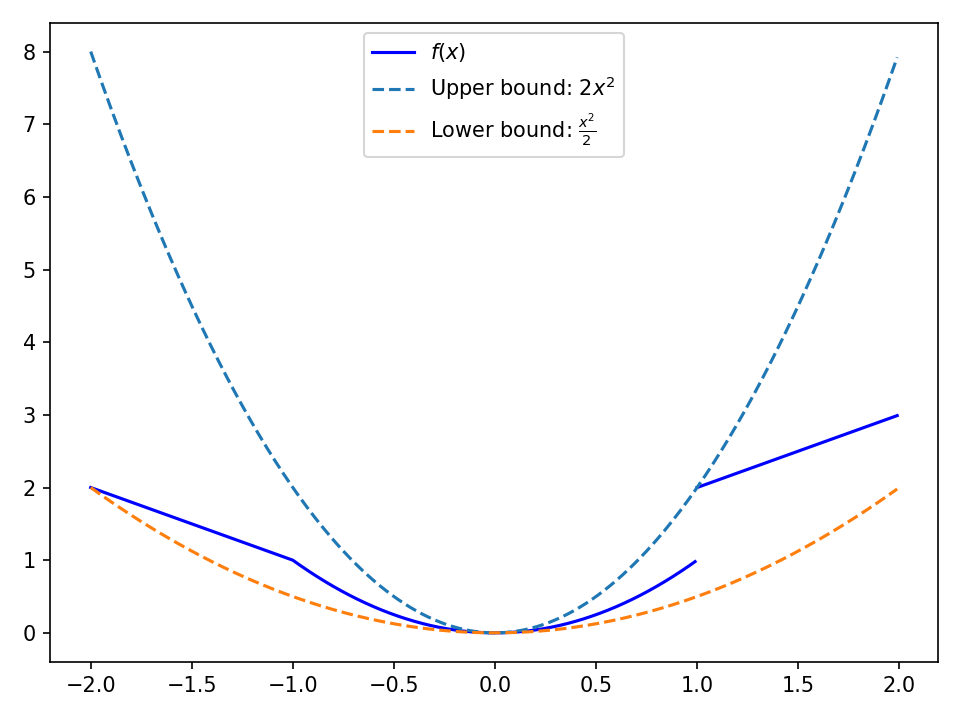}
    \caption{Plot of $f(\x)$ defined in (\ref{eq:exp}). $\frac{\x^2}{2}$ (resp. $2\x^2$) is a lower bound (resp. upper bound) for $f (\x)$ over $[-2,2]$. This plot shows that a nondegenerate function can be nonconvex, nonsmooth or discountinuous. }
    \label{fig:example}
\end{figure} 

\begin{figure}[h!]
    \centering
    \includegraphics[width=3.25in]{./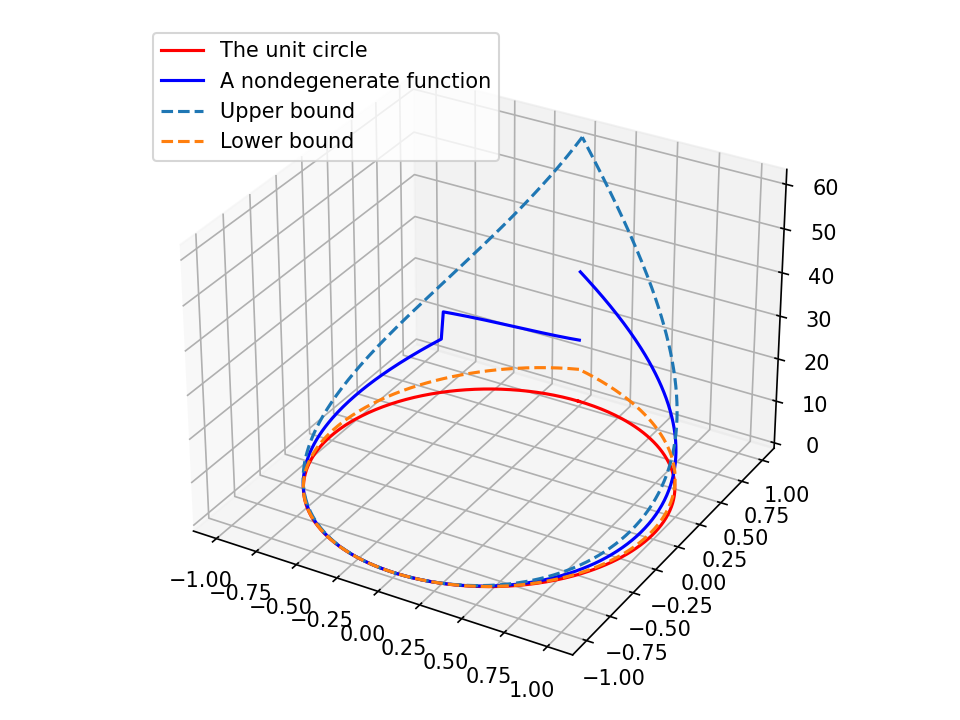}
    \caption{Plot of a nondegenerate function $f$ defined over the unit circle $\S^1$, and the metric is the arc length along the circle. This function is not convex and not continuous, but satisfies the nondegenerate condition.
    }
    \label{fig:example-2}
\end{figure}

Given the aforementioned motivating examples, developing an efficient stochastic bandit/optimization algorithm for nondegenerate functions is of great importance. In addition, we focus our study on the batched feedback setting, which is also important. 


\subsection{The Batched Bandit Setting} 

In bandit learning, more specifically stochastic bandit learning, the agent is tasked with sequentially making decisions based on noisy loss/reward samples associated with these decisions. The objective of the agent is to identify the optimal choice while simultaneously learning the expected loss function across the decision space. The effectiveness of the agent's policy is evaluated through regret, which quantifies the difference in loss between the agent's chosen decision and the optimal decision, accumulated over time. More formally, the $T$-step regret of a policy $\pi$ is defined as 
\begin{align} 
    R^\pi (T) := \sum_{t=1}^T f (\x_t) - f (\x^*), 
\end{align} 
where $\x_t \in \X$ is the choice of policy $ \pi $ at step $t$, $f$ is the expected loss function, and $\x^*$ is the optimal choice. 
Typically, the goal of a bandit algorithm is to achieve a regret rate that grows as slow as possible. 


\begin{remark} 
    For the rest of the paper, we will use a loss minimization formulation for the bandit learning problem. With a flip of sign, we can easily phrase the problem in a reward-maximization language. 
\end{remark} 



In the context of batched bandit learning, the primary objective remains to be minimizing the growth of regret. However, in this setting, the agent is unable to observe the loss sample immediately after making her decision. 
Instead, she needs to wait until a \emph{communication point} to collect the loss samples in batches. 
To elaborate further, in batch bandit problems, the agent in a $T$-step game dynamically selects a sequence of communication points denoted as $\mathcal{T}= \{t_0,\cdots,t_M \}$, where $0=t_0<t_1<\cdots<t_M=T$ and $M \ll T$. In this setting, loss observations are only communicated to the player at $t_1, \cdots, t_M $. Consequently, for any given time $t$ within the $j$-th batch ($t_{j-1}<t\leq t_j$), the reward $y_t$ remains unobserved until time $t_j$. The reward samples are corrupted by mean-zero, $iid$, 1-sub-Gaussian noise. The decision made at time $t$ is solely influenced by the losses received up to time $t_{j-1}$. The selection of the communication points $\mathcal{T}$ is \textcolor{black}{either adaptive or static. In the adaptive case, the player determines each point $t_j\in\mathcal{T}$ based on the previous operations and observations up to $t_{j-1}$. In the static case, all points $ t_j $ are specified before the algorithm starts. }

In batched bandit setting, the agent not only aims to minimize regret, but also seeks to minimize the number of communications points required. 

For simplicity, we use \emph{nondegenerate bandits} to refer to stochastic bandit problem with nondegenerate loss, and \emph{batched nondegenerate bandits} to refer to batched stochastic bandit problem with nondegenerate loss.


\subsection{Our Results} 

In this paper, we introduce an algorithm, called Geometric Narrowing (GN), that solves batched bandit learning problems for nondegenerate functions in a near-optimal way. The GN algorithm operates by successively narrowing the search space, and satisfies the properties stated in Theorem \ref{thm:upper}. 

\begin{theorem} 
    \label{thm:upper}
    Let $ \( \X, \D \) $ be a compact doubling metric space, and let $f $ be a nondegenerate function defined over $(\X,\D) $. Consider a stochastic bandit learning environment where all loss samples are corrupted by $ iid $ sub-Gaussian mean-zero noise. 
    For any $T \in \mathbb{N}_+$, with probability exceeding $1-2T^{-1}$, the $T$-step total regret of Geometric Narrowing, written $R^{GN}(T)$, 
    satisfies
    \begin{align*}
        R^{GN} (T) \le {K_+} A_+^{d} \sqrt{T \log T} \log \log \frac{T}{\log T},
    \end{align*}
    where $d$ is the doubling dimension of $(\X, \D)$, and $K_+$ and $A_+$ are constants independent of $d$ and $T$. 
    In addition, Geometric Narrowing only needs $\mathcal{O}\( \log \log T \)$ communication points to achieve this regret rate. 
        
\end{theorem}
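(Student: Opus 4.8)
The plan is to analyze a single run of Geometric Narrowing batch by batch, tracking an active ball that provably contains $\x^*$ and whose radius contracts doubly-exponentially. Fix notation for one run: batches $m = 1,\dots,M$; at batch $m$ an active ball $B_m = B(\widehat{\x}_{m-1}, r_m)$ with $B_1 \supseteq \X$ and $r_1 = \Diam(\X)$; a $\rho_m$-net $N_m$ of $B_m$, which by the doubling property has $|N_m| \le C_{\mathrm{db}}(r_m/\rho_m)^d$; $n_m$ pulls of each net point; empirical means $\widehat{f}_m$; $\widehat{\x}_m = \arg\min_{\x\in N_m}\widehat{f}_m(\x)$; and narrowing to $B_{m+1} = B(\widehat{\x}_m, r_{m+1})$, followed by committing to $\widehat{\x}_M$ for all remaining steps. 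Write $\ell_m := n_m|N_m|$ for the length of batch $m$ and $\epsilon_m$ for the target accuracy; the free parameters $(\rho_m,n_m)$ get pinned down by the schedule. The three things to establish are a high-probability accuracy event, the invariant $\x^* \in B_m$, and a doubly-exponential shrinkage of $r_m$.

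First I would introduce the \textbf{good event} $\mathcal{E}$ on which $|\widehat{f}_m(\x) - f(\x)| \le \epsilon_m$ for all $m \le M$ and all $\x\in N_m$. Since the noise is $iid$ mean-zero sub-Gaussian, a sub-Gaussian tail bound and a union bound over the at most $\sum_m|N_m| \le T$ estimation tasks give $\Pr[\mathcal{E}] \ge 1 - 2T^{-1}$ once $\epsilon_m \asymp \sqrt{\log(T|N_m|)/n_m}$; everything below is on $\mathcal{E}$. Next, the \textbf{invariant} $\x^*\in B_m$ is proved by induction: if $\x^*\in B_m$, some net point $\x_0 \in N_m$ has $\D(\x_0,\x^*) \le \rho_m$, so $f(\x_0) - f(\x^*) \le L\rho_m^q$ by the upper nondegeneracy bound, whence on $\mathcal{E}$
\begin{align*}
f(\widehat{\x}_m) \le \widehat{f}_m(\widehat{\x}_m) + \epsilon_m \le \widehat{f}_m(\x_0) + \epsilon_m \le f(\x_0) + 2\epsilon_m \le f(\x^*) + L\rho_m^q + 2\epsilon_m ,
\end{align*}
so the lower nondegeneracy bound gives $\D(\widehat{\x}_m,\x^*) \le \big((L\rho_m^q + 2\epsilon_m)/\lambda\big)^{1/q} =: r_{m+1}$, i.e. $\x^*\in B_{m+1}$; choosing $\epsilon_m$ of the same order as $L\rho_m^q$ yields $r_{m+1} \le c_\star\rho_m$ with $c_\star = (3L/\lambda)^{1/q}$. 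This invariant controls the \textbf{per-batch regret}: on $\mathcal{E}$ every point played in batch $m$ and $\x^*$ both lie in $B_m$, so each step costs at most $2^q L r_m^q$ and batch $m$ contributes at most $\ell_m\cdot 2^q L r_m^q$; the committing phase contributes at most $T\cdot L r_{M+1}^q$.

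The heart of the argument is the twin \textbf{schedule} $(\ell_m,\rho_m)$. I would choose $\ell_m$ so that the per-batch regret $\ell_m\cdot 2^q L r_m^q$ is $O(\sqrt{T\log T})$, i.e. $\ell_m \asymp \sqrt{T\log T}/(L r_m^q)$, and choose $\rho_m$ to balance the discretization bias $L\rho_m^q$ against $\epsilon_m \asymp \sqrt{\log(T|N_m|)/n_m}$, using $|N_m| \le C_{\mathrm{db}}(c_\star r_m/r_{m+1})^d$ and $r_{m+1} = c_\star\rho_m$. Solving this equation — after a crude a priori bound $|N_m| \le \mathrm{poly}(T)$ lets one replace $\log(T|N_m|)$ by $O(\log T)$ uniformly — yields a recursion of the shape $r_{m+1} \le \big(C_\star r_m^{q+d}/\sqrt{T}\big)^{1/(2q+d)}$, where $C_\star$ gathers $C_{\mathrm{db}}$, a factor $c_\star^{\Theta(d)}$, and a $\sqrt{\log T}$ term. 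Writing $u_m := \log(1/r_m)$, this is a contraction $u_{m+1} \ge \theta u_m + (1-\theta)u^\star$ with $\theta = \tfrac{q+d}{2q+d}\in(0,1)$ and fixed point $u^\star = \tfrac{1}{2q}\log(T/\mathrm{polylog}(T))$; since $u_1 = O(1)$ we get $u^\star - u_M \le \theta^{M-1}(u^\star - u_1) \le \theta^{M-1}u^\star$, so $M = O\big(\log\log(T/\log T)\big)$ batches already force $r_{M+1} \le K'(\log T/T)^{1/(2q)}$, while the geometric sum $\sum_{m\le M}\ell_m$ is dominated by its last term, which the constants are tuned to keep $\le T/2$, leaving $\ge T/2$ steps for committing.

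Assembling on $\mathcal{E}$,
\begin{align*}
R^{GN}(T) \le \sum_{m=1}^M \ell_m\cdot 2^q L r_m^q + T L r_{M+1}^q \le K_+ A_+^{d}\sqrt{T\log T}\,\log\log\tfrac{T}{\log T},
\end{align*}
because each of the $M = O(\log\log\tfrac{T}{\log T})$ batches contributes $O(\sqrt{T\log T})$ and the committing phase contributes $T L r_{M+1}^q = O(\sqrt{T\log T})$; here $A_+$ is built exactly from the $d$-free bases ($C_{\mathrm{db}}$, $c_\star = (3L/\lambda)^{1/q}$, $2^q$) that enter as $d$-th powers through $|N_m|$ and $C_\star$, and $\Pr[\mathcal{E}]\ge 1-2T^{-1}$ gives the stated confidence; the batch-count claim is the $M$ bound above. \textbf{The main obstacle} is the schedule step: the requirements — keeping per-batch regret $O(\sqrt{T\log T})$, making the radii contract to the statistically optimal scale $(\log T/T)^{1/(2q)}$, and holding the total budget below $T$ — pull against each other (longer batches narrow faster but cost more regret), and the $\log|N_m|$ term inside $\epsilon_m$ depends on the schedule being chosen, so one must close that loop and then verify the contraction lands in $\log\log T$ steps, all while corralling the $d$-dependence into the single factor $A_+^{d}$.
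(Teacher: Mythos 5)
Your proposal is correct and follows essentially the same route as the paper: a uniform sub-Gaussian concentration event over at most $T$ estimates, a nondegeneracy sandwich showing the empirical minimizer's neighborhood retains $\x^*$ with radius proportional to the current resolution, and a radius schedule whose recursion $r_{m+1}\asymp\bigl(r_m^{q+d}\sqrt{\log T/T}\bigr)^{1/(2q+d)}$ (contraction factor $\tfrac{q+d}{2q+d}$, the paper's $\hat{\eta}$) makes each of the $O(\log\log\tfrac{T}{\log T})$ batches and the final commit phase cost $O(\sqrt{T\log T})$, with the $d$-dependence entering only through covering numbers as $A_+^d$. The only differences are cosmetic: you keep a single ball around the empirical minimizer where the paper keeps a small collection of balls near the best ball, and the paper instantiates your schedule explicitly as the Rounded Radius sequence of Definition \ref{def:seq}.
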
 

\textcolor{black}{The upper bound for the GN algorithm presented in Theorem \ref{thm:upper} uses adaptive batch sizes, meaning that the communication points $\mathcal{T}$ are a set of random variables that are determined dynamically. In Corollary \ref{cor:sta}, we demonstrate that by applying a slight tweak to the algorithm, we can achieve the same outcome using a static batch size that is predetermined before the algorithm begins. 
}
As a corollary of Theorem \ref{thm:upper}, we prove that the simple regret of GN is of order $ \mathcal{O} \( \sqrt{ \frac{\log T}{T} } \log\log T\) $. This result is summarized in Corollary \ref{cor:opt}. 

\begin{corollary} 
    \label{cor:opt} 
    Let $(\X, \D)$ be a compact doubling metric space. 
    Let $f$ be a nondegenerate function 
    defined over $(\X, \D)$. 
    For any $T \in \mathbb{N}_+$, with probability exceeding $1-2T^{-1}$, the GN algorithm finds a point $ \x_{out} $ such that $ f (\x_{out}) - f (\x^*) \le \mathcal{O} \( \sqrt{ \frac{ \log T }{ T } } \log\log T \) $. In addition, Geometric Narrowing only needs $\mathcal{O}\( \log \log T \)$ communication points to achieve this rate. 
\end{corollary}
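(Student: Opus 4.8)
The plan is to derive Corollary \ref{cor:opt} directly from Theorem \ref{thm:upper}, exploiting the fact that Geometric Narrowing reports a point \emph{inside} its final (very small) search region rather than a time-average of past plays. Write $\mathcal{E}$ for the event, of probability at least $1-2T^{-1}$, on which all conclusions of Theorem \ref{thm:upper} hold; everything below is on $\mathcal{E}$. Let $\X_{\mathrm{fin}}\subseteq\X$ be the region to which GN is narrowed during its last batch, and let $\x_{out}$ be the point GN outputs (the ``center'' of $\X_{\mathrm{fin}}$). From the analysis underlying Theorem \ref{thm:upper} I would extract two facts: (i) a correctness invariant, namely that the minimizer $\x^*$ is never discarded by the narrowing procedure, so $\x^*\in\X_{\mathrm{fin}}$; and (ii) a diameter estimate, namely that the narrowing schedule built into GN drives $\Diam(\X_{\mathrm{fin}})$ down to order $\big((\log T)/T\big)^{1/(2q)}$ up to the polylogarithmic factors in Theorem \ref{thm:upper}, equivalently $L\,\big(\Diam(\X_{\mathrm{fin}})\big)^q=\mathcal{O}\!\big(\sqrt{(\log T)/T}\,\log\log T\big)$.

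Given (i) and (ii), the argument is immediate. Since both $\x_{out}$ and $\x^*$ lie in $\X_{\mathrm{fin}}$, we have $\D(\x_{out},\x^*)\le\Diam(\X_{\mathrm{fin}})$, and plugging this into the right-hand (the $L$) inequality of the nondegeneracy condition (\ref{eq:def-nondegen}) gives
\begin{align*}
    f(\x_{out})-f(\x^*)\;\le\;L\,\big(\D(\x_{out},\x^*)\big)^q\;\le\;L\,\big(\Diam(\X_{\mathrm{fin}})\big)^q\;=\;\mathcal{O}\!\left(\sqrt{\tfrac{\log T}{T}}\,\log\log T\right),
\end{align*}
where the last equality is fact (ii) with the explicit polylog factor tracked through from Theorem \ref{thm:upper}. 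The bound of $\mathcal{O}(\log\log T)$ communication points is inherited verbatim, since Corollary \ref{cor:opt} runs exactly the same algorithm. Note that only the $L$-side of (\ref{eq:def-nondegen}) enters; the $\lambda$-side is not used in this conversion.

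The only thing requiring care — and it is bookkeeping rather than a new obstacle — is fact (ii): that the radius to which GN narrows by its last batch is genuinely this small. This is effectively forced by Theorem \ref{thm:upper} itself: the last batch has length $\Theta(T)$ and every point it plays lies in $\X_{\mathrm{fin}}$, so its contribution to $R^{GN}(T)$ is at most $T\cdot L\,\big(\Diam(\X_{\mathrm{fin}})\big)^q$; since $R^{GN}(T)=\widetilde{\mathcal{O}}(A_+^d\sqrt{T})$, one cannot have $L\,\big(\Diam(\X_{\mathrm{fin}})\big)^q$ larger than $\widetilde{\mathcal{O}}(\sqrt{(\log T)/T})$, which is exactly the quantity needed above — indeed GN's schedule is designed with this balance in mind. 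An alternative route would be an online-to-batch conversion, outputting a uniformly random past play $\x_{out}$ so that $\E[f(\x_{out})-f(\x^*)\mid\mathcal{E}]=R^{GN}(T)/T=\widetilde{\mathcal{O}}(\sqrt{(\log T)/T})$; this also gives the rate, but only in expectation, whereas reporting the center of $\X_{\mathrm{fin}}$ yields the high-probability form stated in Corollary \ref{cor:opt}.
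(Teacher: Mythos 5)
Your overall skeleton --- the minimizer is never eliminated (your fact (i)), the final surviving region is small (your fact (ii)), and then only the $L$-side of the nondegeneracy condition converts distance into function gap --- is exactly the paper's route: fact (i) together with the diameter bound $\D(\x_{out},\x^*)\le (2+2((\lambda+L)/\lambda)^{1/q})\,\bar r_{2M}$ is Lemma \ref{lem:not-eli}, applied under the event $\mathcal E$ of Lemma \ref{lem:concen}. The genuine gap is in how you justify fact (ii). You argue it is ``forced by Theorem \ref{thm:upper} itself'': the cleanup phase plays $\Theta(T)$ points in $\X_{\mathrm{fin}}$, its contribution to the regret is at most $T\cdot L(\Diam(\X_{\mathrm{fin}}))^q$, and since $R^{GN}(T)=\widetilde{\mathcal O}(A_+^d\sqrt T)$ the diameter must be small. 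This inference runs backwards: $T\cdot L(\Diam(\X_{\mathrm{fin}}))^q$ is an \emph{upper} bound on the cleanup regret, not a lower bound, so a small total regret does not force a small diameter --- the arbitrary point GN plays during cleanup could happen to sit next to $\x^*$ and incur negligible regret even if the surviving region were large. (Nor is the cleanup phase guaranteed to last $\Theta(T)$ steps; the loop breaks when $t_{m+1}\ge T$, and the leftover budget can be anything between $0$ and roughly $T$.) More fundamentally, a numerical bound on an algorithm's regret never certifies a structural property of its internal state; to get (ii) you must compute the final radius from the schedule itself.

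That computation is the actual content of the paper's proof and is what should replace your ``forcing'' step: from Definition \ref{def:seq}, $\bar r_{2M}^q\le 2^{-q\hat c_1(1-\hat\eta^{M})/(1-\hat\eta)}=\sqrt{(\log T)/T}\cdot (T/\log T)^{\hat\eta^{M}/2}$, and with the choice $M=\log\log(T/\log T)/\log(1/\hat\eta)$ (the same choice that yields the $\mathcal O(\log\log T)$ communication count) this is at most $e^{1/2}\sqrt{(\log T)/T}$. Combining with Lemma \ref{lem:not-eli} then gives $f(\x_{out})-f(\x^*)\le L\,(2+2((\lambda+L)/\lambda)^{1/q})^q\, e^{1/2}\sqrt{(\log T)/T}$ on $\mathcal E$, which is in fact slightly stronger than the stated bound (no $\log\log T$ factor is needed). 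Your online-to-batch alternative is fine as a remark but, as you note yourself, only yields an in-expectation guarantee rather than the high-probability statement of the corollary.
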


Also, we prove that it is hard to outperform GN by establishing lower bound results in Theorems \ref{thm:standard}, \ref{thm:lower} and Corollary \ref{cor}. 
Theorem \ref{thm:standard}  states that no algorithm can uniformly perform better than $ \Omega \( A_-^d \sqrt{T} \) $ for some $A_-$ independent of $d$ and $T$. 

\begin{theorem}
    \label{thm:standard}
    For any $d \ge 1$ and $T \in \mathbb{N}_+$, there exists a compact doubling metric space $(\X_0, \D_0)$ that simultaneously satisfies the following: 1. The doubling dimension of $(\X_0, \D_0)$ is $\lfloor d \rfloor $. 2. 
    For any policy $\pi$, there exists a problem instance $I$ defined over $(\X_0, \D_0)$, such that the regret of running $\pi$ on $I$ satisfies 
    \begin{align*} 
        \E \[ R^\pi (T) \]  
        \ge 
        K_- A_-^{\lfloor d \rfloor } \sqrt{T} 
    \end{align*} 
    where $ \E $ is the expectation whose probability law is induced by running $\pi$ (for $T$ steps) on the instance $ I $, 
    and $ K_- $ and $A_-$ are numbers that do not depend on $d$ or $T$. 
\end{theorem}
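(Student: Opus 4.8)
The plan is to prove a minimax lower bound by reducing to a finite multi-armed-bandit problem whose number of ``arms'' is $2^{\lfloor d\rfloor}$, so that the classical $\sqrt{KT}$ barrier becomes $(\sqrt2)^{\lfloor d\rfloor}\sqrt T$. Write $m:=\lfloor d\rfloor$ and take $\X_0=[0,1]^m$ with the $\ell^\infty$ metric $\D_0(\x,\y)=\max_i|\x_i-\y_i|$: a ball of radius $r$ is a cube of side $2r$, which requires exactly $2^m$ cubes of side $r$ (balls of radius $r/2$) to cover, so $(\X_0,\D_0)$ is compact with doubling constant $2^m$, i.e.\ doubling dimension exactly $m$ (if the paper's normalization differs, replace $\ell^\infty$ by any metric realizing doubling dimension $m$; nothing else changes). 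I would then build $K:=2^m$ nondegenerate instances and show that every policy suffers expected regret $\gtrsim\sqrt{KT}$ on one of them.

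\emph{The hard family.} Fix $o=(\tfrac12,\dots,\tfrac12)$, a constant $\lambda>0$, and a scale $\rho\in(0,\tfrac12)$ chosen later as a function of $T$. Let $g(\x):=\lambda\,\D_0(\x,o)$; for each $a\in\{0,1\}^m$ put $p_a:=o+\rho(a-\tfrac12\mathbf 1)$, the corresponding corner of the cube of side $\rho$ about $o$, so that the $K=2^m$ points $p_a$ are pairwise at $\D_0$-distance exactly $\rho$ yet all lie within $\D_0(\cdot,o)\le\rho/2$. Set $\delta:=\rho/3$, $\gamma:=2\lambda\delta=2\lambda\rho/3$, and $f_a(\x):=g(\x)-\gamma\,(1-\D_0(\x,p_a)/\delta)_+$. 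Using only the triangle inequality for $\D_0$ one checks: (i) $p_a$ is the unique minimizer of $f_a$, with $f_a(p_a)=-\lambda\rho/6$; (ii) each $f_a$ is nondegenerate over $(\X_0,\D_0)$ with exponent $q=1$ and absolute-constant Hölder parameters $\lambda_0\le L_0$ (inside the dip both the $g$-part and the tent-part are $\Theta(\D_0(\x,p_a))$, and outside the dip $f_a(\x)-f_a(p_a)=\lambda(\D_0(\x,o)-\D_0(p_a,o))+\gamma$ is squeezed between constant multiples of $\D_0(\x,p_a)$ since there $\D_0(\x,p_a)\ge\delta$ and $\D_0(p_a,o)\le\rho/2$); (iii) all $f_a$ agree with $g$ off the pairwise-disjoint dips $B(p_a,\delta)$, so $\|f_a-f_b\|_\infty\le\gamma$ with $f_a-f_b$ supported on $B(p_a,\delta)\cup B(p_b,\delta)$. (Exact constants are irrelevant; any family with (i)--(iii) suffices, and one may fix $q=1$, exactly the regime of Theorem~\ref{thm:upper}. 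Losses are $f_a(\x_t)$ plus i.i.d.\ $\N(0,1)$ noise, which is $1$-sub-Gaussian.)

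\emph{The reduction.} Fix a policy $\pi$; let $\Pr_a,\E_a$ (resp.\ $\Pr_g,\E_g$) be the interaction law under $f_a$ (resp.\ under the auxiliary instance $g$), and $N_a:=\sum_{t=1}^T\Ind[\x_t\in B(p_a,\delta)]$, so $\sum_a N_a\le T$. Outside $B(p_a,\delta)$ we have $f_a(\x)-f_a(p_a)\ge\lambda\rho/6=:\Delta$, hence $\E_a[R^\pi(T)]\ge\Delta(T-\E_a[N_a])$. The standard divergence-decomposition identity for bandits gives $\mathrm{KL}(\Pr_g\Vert\Pr_a)=\E_g[\sum_t\tfrac12(g(\x_t)-f_a(\x_t))^2]\le\tfrac12\gamma^2\E_g[N_a]$ (the integrand vanishes off the dip), so Pinsker yields $\E_a[N_a]\le\E_g[N_a]+\tfrac{T\gamma}{2}\sqrt{\E_g[N_a]}$. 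Averaging over $a\in\{0,1\}^m$, using $\sum_a\E_g[N_a]\le T$ and Cauchy--Schwarz ($\sum_a\sqrt{\E_g[N_a]}\le\sqrt{K}\sqrt{\sum_a\E_g[N_a]}\le\sqrt{KT}$),
\begin{align*}
\frac1K\sum_a\E_a[R^\pi(T)]\;\ge\;\Delta\,T\Bigl(1-\frac1K-\frac{\gamma}{2}\sqrt{T/K}\Bigr).
\end{align*}
Choosing $\rho$ so that $\tfrac{\gamma}{2}\sqrt{T/K}=\tfrac14$ (i.e.\ $\gamma\asymp\sqrt{K/T}$, hence $\Delta\asymp\sqrt{K/T}$) — legitimate as long as $\rho\le\tfrac12$, i.e.\ $T\gtrsim 2^m$ — and using $K=2^m\ge2$ so that $1-\tfrac1K-\tfrac14\ge\tfrac14$, one gets $\max_a\E_a[R^\pi(T)]\ge\tfrac14\Delta T\gtrsim\sqrt{KT}=K_-(\sqrt2)^{m}\sqrt T$. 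This is the claim with $A_-=\sqrt2$; using a $\{0,1/n,\dots,1\}^m$ grid inside the small cube instead of its corners gives $A_-=\sqrt{n+1}$ for any fixed $n$ (at the cost of larger absolute Hölder constants). For $T\lesssim 2^m$ the stated inequality is either vacuous for the fixed constants $K_-,A_-$ or follows from a crude two-point argument.

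\emph{The main obstacle.} The information-theoretic half — divergence decomposition, Pinsker, Cauchy--Schwarz, i.e.\ the textbook $\sqrt{KT}$ minimax argument — is routine. The real work is the construction in the middle step: producing $\Theta(2^m)$ functions that are simultaneously (a) each two-sided Hölder about its own minimizer with $d$- and $T$-independent constants, (b) pairwise $O(\sqrt{K/T})$-close in sup-norm so per-step information is $O(K/T)$, and (c) endowed with $\Omega(\sqrt{K/T})$-separated minimizers so every ``wrong'' action costs $\Omega(\sqrt{K/T})$. The tension is that the ``dip'' at $p_a$ must be shallow enough to be statistically hard to locate yet deep enough that $f_a-f_a(p_a)$ still obeys the \emph{lower} Hölder bound $\ge\lambda_0\D_0(\cdot,p_a)$ everywhere; packing all $2^m$ minimizers into a ball of radius comparable to their mutual separation — possible via the corners of a small $\ell^\infty$-cube — is precisely what lets a single scale $\rho$ play both roles and produces a constant $A_->1$ raised to the power $\lfloor d\rfloor$.
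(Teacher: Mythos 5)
Your proposal is correct in substance and follows essentially the same route as the paper: an Assouad-type family of $2^{\lfloor d\rfloor}$ nondegenerate instances, each obtained from a common base function by a local perturbation of size $\asymp\sqrt{2^{\lfloor d\rfloor}/T}$ supported on pairwise-disjoint regions around candidate minimizers, followed by a standard change-of-measure averaging argument (per-step KL controlled by the squared sup-norm gap on the perturbed region, total pull budget $T$). The paper realizes the construction as ``bitten-apple'' modifications of $\|\x\|_\infty^q$ in the $2^d$ orthants of $(\R^d,\|\cdot\|_\infty)$ and converts KL to TV via Bretagnolle--Huber plus Jensen, whereas you subtract tent-shaped dips from a cone at the corners of a small subcube of $[0,1]^{\lfloor d\rfloor}$ and use Pinsker plus Cauchy--Schwarz; these differences are cosmetic, and your instance family (which I checked: the dip depth $2\lambda\rho/3$ against slope $\lambda$ indeed preserves two-sided Hölder bounds with absolute constants, and the choice $\gamma\asymp\sqrt{K/T}$ yields $\max_a \E_a[R^\pi(T)]\gtrsim (\sqrt2)^{\lfloor d\rfloor}\sqrt T$) is if anything cleaner, and it genuinely lives on a compact space, unlike the paper's $\R^{\lfloor d\rfloor}$.

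The one soft spot is the regime $T\lesssim 2^{\lfloor d\rfloor}$, which you dismiss as ``vacuous or a crude two-point argument.'' That fallback does not actually deliver the claimed $\Omega(A_-^{\lfloor d\rfloor}\sqrt T)$ bound: on a diameter-one compact domain with nondegenerate parameters bounded by absolute constants, total regret is at most $L\cdot T$, which is smaller than $K_-2^{\lfloor d\rfloor/2}\sqrt T$ when $T\ll 2^{\lfloor d\rfloor}$, so no construction of your type (and no two-point argument) can cover that regime without letting the instance scale grow with $d$ and $T$. Be aware, though, that the paper's own proof only escapes this by working over the unbounded space $(\R^{\lfloor d\rfloor},\|\cdot\|_\infty)$, where $\epsilon$ is allowed to exceed the unit scale when $T$ is small --- at the cost of contradicting the compactness asserted in the theorem statement. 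So your argument matches the paper's in every regime where the statement is actually attainable on a compact space; if you want to be airtight, either state the result for $T\ge c\,2^{\lfloor d\rfloor}$, or rescale the domain (or the Hölder constants) with $d$ as the paper implicitly does.
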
 

Theorem \ref{thm:standard} implies that the regret bound for GN is near-optimal. 
Also, we provide a lower bound analysis for the communication lower bound of batched bandit for nondegenerate functions. This result is stated below in Theorem \ref{thm:lower}. 

\begin{theorem} 
    \label{thm:lower} 
    Let $ M \in \mathbb{N}_+$ be the total rounds of communications allowed. 
    For any $d \ge 1$ and $T \in \mathbb{N}_+$ ($T \ge M$), there exists a compact doubling metric space $(\X_0, \D_0)$ that simultaneously satisfies the following: 1. the doubling dimension of $(\X_0, \D_0)$ is $\lfloor d\rfloor $, and 2. for any policy $\pi$, there exists a problem instance $I$ defined over $(\X_0, \D_0)$, such that the regret of running $\pi$ on $I$ for $T$ steps satisfies
    \begin{align*} 
        \E \[ R^{\pi} ( T ) \] 
        \geq 
        K_- A_-^{\lfloor d \rfloor } \cdot \frac{1}{M^2} \cdot T^{\frac{1}{2} \cdot \frac{1}{1-2^{-M}}}, 
    \end{align*} 
    where 
    $ K_- $ and $A_-$ are numbers that do not depend on $d$, $M$ or $T$. 
\end{theorem}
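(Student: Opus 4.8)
The plan is to upgrade the construction behind Theorem~\ref{thm:standard} to the batched setting by exploiting the ``a near-optimal batch grid must be almost static'' phenomenon. I work with $q=1$ and with $\lambda,L$ equal to fixed absolute constants; since the theorem only asks for \emph{one} hard instance, restricting to $q=1$ is legitimate (and is the hardest regime). For the space I take (a variant of) the space $(\X_0,\D_0)$ of Theorem~\ref{thm:standard}; the only features used are that it is compact, has doubling dimension exactly $\lfloor d\rfloor$, and that at every scale $r\in(0,1]$ and inside every ball of radius $\asymp r$ it contains an $r$-separated family of $\Theta(A_-^{\lfloor d\rfloor})$ points with pairwise-disjoint $\Theta(r)$-balls. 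Hiding the minimizer inside such a family is exactly the mechanism by which the exponential-in-$\lfloor d\rfloor$ factor enters, precisely as in Theorem~\ref{thm:standard}.

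\textbf{The multi-scale hard family.} On this space I build a nested family of nondegenerate instances indexed by a tuple $\vec k=(k_1,\dots,k_M)$: at level $j$, inside the already-selected level-$(j-1)$ cell, the minimizer is placed in one of $\Theta(A_-^{\lfloor d\rfloor})$ candidate cells of scale $\Delta_j$, and the loss is depressed by $\Theta(\Delta_j)$ on that cell, with a matching nondegenerate envelope (nondegeneracy holds because within each cell the value offsets are comparable to the diameters); I also keep a ``null'' instance $I_0$ that is flat below the finest scale, and ``mismatched-scale'' variants. The schedule $\Delta_1>\dots>\Delta_M$ is chosen so that, along the ideal static grid $1=n_0<n_1<\dots<n_M=T$, the level-$j$ gap sits just above the detectability threshold afforded by $n_{j-1}$ samples in the presence of $\Theta(A_-^{\lfloor d\rfloor})$ cells. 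Writing $R$ for the target regret, this forces a recursion of the form $n_j\asymp c\,\sqrt{n_{j-1}}$ with $c$ depending on $R$ and on the number of cells per level, and telescoping against $n_M=T$ yields $R\asymp A_-^{\lfloor d\rfloor}\,T^{\frac12\cdot\frac1{1-2^{-M}}}$; checking that this telescoping is consistent and that $\Delta_j(n_j-n_{j-1})\gtrsim R$ for the relevant $j$ is the routine algebra of the proof.

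\textbf{Information-theoretic core and adaptivity.} Fix an arbitrary policy $\pi$ with $M$ (adaptively chosen) communication points $t_1\le\cdots\le t_M=T$, and run it on $I_0$. Set $j^\star=\min\{j:t_j\ge n_j\}$, which exists since $t_M=n_M=T$; then $t_{j^\star-1}<n_{j^\star-1}$ and the batch $j^\star$ has length $\ge n_{j^\star}-n_{j^\star-1}$. A uniform average over $j\in\{1,\dots,M\}$ picks out a level $j$ with $\Pr_{I_0}[j^\star=j]\ge 1/M$. Now select the instance $I^{(j)}$ that hides the minimizer only at level $j$: since the event $\{j^\star=j\}$ is measurable with respect to the data up to time $t_{j-1}<n_{j-1}$, and the KL-divergence between $I_0$ and $I^{(j)}$ accumulated up to time $n_{j-1}$ is $O(n_{j-1}\Delta_j^2)=O(1)$, a careful change-of-measure (exploiting the small total divergence, and Le Cam/Fano over the $\Theta(A_-^{\lfloor d\rfloor})$ level-$j$ cells to harvest the dimension factor) shows that under $I^{(j)}$ the policy still enters batch $j$ with too little information to localize the level-$j$ cell with constant probability, hence plays an un-queried cell for $\gtrsim n_j-n_{j-1}$ rounds and incurs regret $\gtrsim\Delta_j(n_j-n_{j-1})\gtrsim R$; the mismatched-scale instances handle the case $j^\star=1$ and the case $t_{j-1}$ far below $n_j$. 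The two nested averaging/change-of-measure steps --- one over the $M$ levels, one over reaching-and-playing the offending batch under the alternative --- cost the factor $M^{-2}$, giving $\E[R^\pi(T)]\ge K_-A_-^{\lfloor d\rfloor}M^{-2}T^{\frac12\cdot\frac1{1-2^{-M}}}$.

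\textbf{Main obstacle.} The delicate point is controlling the adaptivity of the communication grid simultaneously against all scales: I must rule out any data-dependent placement of batch endpoints that dodges every level of the hierarchy at once. This is a stopping-time change-of-measure estimate --- bounding, uniformly over $\pi$, how far the random grid can drift from the ideal schedule before the level-$j$ gap becomes statistically visible --- together with the bookkeeping needed to (i) keep the factor $A_-^{\lfloor d\rfloor}$ strictly multiplicative rather than leaking into the exponent of $T$, and (ii) land on the exact closed form $\frac12\cdot\frac1{1-2^{-M}}$. Everything else (verifying nondegeneracy of the constructed instances, the doubling-dimension computation, and the telescoping algebra) is routine given Theorem~\ref{thm:standard}.
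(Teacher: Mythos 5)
Your overall architecture is essentially the paper's: a reference grid $T_j=T^{\frac{1-2^{-j}}{1-2^{-M}}}$, multi-scale pocket instances whose dip depth is tied to the pocket scale so that nondegeneracy holds with absolute constants (the paper's ``bitten-apple'' construction, organized around a common center rather than as a nested tree, and for general $q$ rather than $q=1$), a straddling-batch event that is measurable with respect to the data available before time $T_{j-1}$, a localized-KL change of measure, averaging/Fano over the $\Theta(2^d)$ disjoint cells to harvest the $A_-^{\lfloor d\rfloor}$ factor, and regret of order (gap)$\times$(batch length). So the route is not genuinely different; the issue is whether your sketch closes.

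One step fails as written: the calibration of the gaps against the probability of the straddling event. You select a level $j$ with $\Pr_{I_0}[j^\star=j]\ge 1/M$ and then transfer to the level-$j$ alternative using the estimate that the KL accumulated up to time $n_{j-1}$ is $O(n_{j-1}\Delta_j^2)=O(1)$. A change of measure with KL of constant order (via Pinsker or Bretagnolle--Huber) only preserves events of constant probability: an event of probability $1/M$ under $I_0$ can have probability $0$ under an alternative at constant total variation, so you cannot conclude that under $I^{(j)}$ the policy ``still enters batch $j$ \dots with constant probability,'' nor even with probability $\gtrsim 1/M$. To make the transfer survive you need the per-level total variation to be $O(1/M)$, i.e.\ KL $=O(1/M^2)$, which forces an extra factor $\frac1M$ in the gaps $\Delta_j$ --- exactly the $\frac{1}{M}$ appearing in the paper's definition (\ref{eq:def-eps-T}) of $\epsilon_j^q$, which is what makes the per-level TV at most $\frac{1}{8M}$ in Lemma \ref{lem:adaptive1}. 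This is also where the $M^{-2}$ in the statement actually comes from (one factor from the shrunken gaps, one from the level selection/averaging); attributing it to ``two nested change-of-measure steps'' while keeping KL $=O(1)$ is internally inconsistent. Relatedly, the within-batch regret claim needs the second layer of randomization over the hidden cell $l$ (the paper's $I_{j,k,l}$ family, your ``mismatched-scale variants''), since with no new information the policy could still sit in the correct cell by luck; you gesture at this with Fano, but it is part of the same calibration. Once the gaps are rescaled by $1/M$ and this averaging over $l$ is made explicit, your outline becomes, in substance, the paper's proof.
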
 

By setting $M$ to the order of $\log \log T$ in Theorem \ref{thm:lower}, we have the following corollary. 

\begin{corollary} 
    \label{cor} 
    For any $d \ge 1$ and $T \in \mathbb{N}_+$, there exists a compact doubling metric space $(\X_0, \D_0)$ that simultaneously satisfies the following: 1. The doubling dimension of $(\X_0, \D_0)$ is $\lfloor d\rfloor $; 2. 
    If less than $\Omega (\log \log T)$ rounds of communications are allowed, no policy can achieve a regret of order $ A_-^{\lfloor d \rfloor } \sqrt{T} $ over all nondegenerate bandit instances defined over $ (\X_0, \D_0) $, where $A_-$ is a number independent of $d$ and $T$. 
\end{corollary}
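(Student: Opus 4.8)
The plan is to obtain Corollary~\ref{cor} as a direct consequence of Theorem~\ref{thm:lower} via a substitution followed by an elementary asymptotic estimate of the exponent. Inspection of the construction behind Theorem~\ref{thm:lower} shows the metric space $(\X_0,\D_0)$ can be taken to depend only on $d$ and $T$ (its hardness as a function of the batch budget is carried entirely by the instances, not the space), so a single $(\X_0,\D_0)$ of doubling dimension $\lfloor d\rfloor$ serves all admissible $M$. Fix a constant $C_0$, and suppose toward a contradiction that some policy $\pi$ using at most $M$ communication rounds satisfies $\E[R^\pi(T)]\le C_0 A_-^{\lfloor d\rfloor}\sqrt{T}$ on every nondegenerate instance over $(\X_0,\D_0)$. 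Theorem~\ref{thm:lower} furnishes an instance on which $\E[R^\pi(T)]\ge K_- A_-^{\lfloor d\rfloor} M^{-2} T^{\frac12\cdot\frac{1}{1-2^{-M}}}$, so after cancelling the common factor $A_-^{\lfloor d\rfloor}$ we would need
\begin{align*}
  \frac{K_-}{M^2}\, T^{\frac12\left(\frac{1}{1-2^{-M}}-1\right)}\ \le\ C_0 .
\end{align*}
It therefore suffices to show that the left-hand side is forced to blow up once $M$ is asymptotically below $\log\log T$.

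To do this I would bound the exponent from below: for every $M\ge 1$ we have $\frac{1}{1-2^{-M}}-1=\frac{2^{-M}}{1-2^{-M}}\ge 2^{-M}$, hence
\begin{align*}
  T^{\frac12\left(\frac{1}{1-2^{-M}}-1\right)}\ \ge\ T^{2^{-M}/2}\ =\ \exp\!\Big(\tfrac12\,2^{-M}\ln T\Big).
\end{align*}
Now assume $M\le c\log_2\log_2 T$ for a constant $c\in(0,1)$ (taking $c=\tfrac12$ is already enough). Then $2^{-M}\ge(\log_2 T)^{-c}$, so the right-hand side is at least $\exp\!\big(\tfrac{\ln 2}{2}(\log_2 T)^{1-c}\big)$, which grows faster than every fixed power of $\log\log T$ and in particular dominates $M^2/K_-=O\big((\log\log T)^2\big)$ as well as the constant $C_0$. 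Consequently $\frac{K_-}{M^2} T^{\frac12(\frac{1}{1-2^{-M}}-1)}\to\infty$ as $T\to\infty$, contradicting the assumed upper bound $C_0$ for all large $T$. Hence any policy attaining the rate $A_-^{\lfloor d\rfloor}\sqrt{T}$ must use more than $c\log_2\log_2 T=\Omega(\log\log T)$ communication rounds, which is exactly the claim of Corollary~\ref{cor}; note the precondition $T\ge M$ of Theorem~\ref{thm:lower} is automatic in this regime.

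There is no deep obstacle here — the whole argument is a one-line substitution plus bookkeeping — but the one point requiring care is making precise the informal phrase ``less than $\Omega(\log\log T)$ rounds.'' I would state it quantifier-by-quantifier: there is a constant $c>0$ such that for every constant $C_0$ there is a threshold $T_0=T_0(c,C_0)$ so that no policy using at most $c\log_2\log_2 T$ batches can guarantee regret $\le C_0 A_-^{\lfloor d\rfloor}\sqrt{T}$ over all instances on $(\X_0,\D_0)$ once $T\ge T_0$. The estimate above delivers exactly this, since the super-poly-logarithmic factor $\exp\!\big(\tfrac{\ln 2}{2}(\log_2 T)^{1-c}\big)$ eventually swamps the product $M^2 C_0/K_-$ uniformly over the bounded range of $M$.
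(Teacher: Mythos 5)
Your proposal is correct and follows essentially the same route as the paper: both arguments simply substitute into the bound of Theorem~\ref{thm:lower}, use the identity $\frac{1}{1-2^{-M}}-1=\frac{1}{2^{M}-1}$, and do log-log–scale asymptotics on the exponent; the paper phrases this as solving for the minimal admissible $M$ via a short bootstrap, while you argue the contrapositive with an explicit threshold $c\log_2\log_2 T$, which is only a presentational difference. Your added remarks (that $(\X_0,\D_0)$ does not depend on $M$, and the quantifier-precise reading of ``less than $\Omega(\log\log T)$ rounds'') are sound and consistent with the paper's construction.
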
 

Corollary \ref{cor} implies that the communication complexity of the GN algorithm is near-optimal, since no algorithm can improve GN's communication complexity without worsening the regret. 

\emph{Note:} In Theorem \ref{thm:standard}, Theoerm \ref{thm:lower} and Corollary \ref{cor}, the specific values of $K_-$ and $A_-$ may be different at each occurrence. 

Our results also suggest a curse-of-dimensionality phenomenon, discussed below in Remark \ref{rem}. 
\begin{remark}[Curse of dimensionality] 
    \label{rem} 
    Our lower bounds (Theorems \ref{thm:standard} and \ref{thm:lower}) grow exponentially in the doubling dimension $d$. Therefore, no algorithm can uniformly improve this dependence on $d$, resulting in a phenomenon commonly referred to as curse-of-dimensionality. 
\end{remark} 


        

\subsection{Implications of Our Results}

Our results have several important implications. Firstly, our research gives a distinct method for the stochastic convex optimization with bandit feedback\citep{shamir2013complexity},  especially for the strongly-convex and smooth function which is a special kind of nondegenerate function. For the real-world problem discussed previously in \textbf{(P0)}, our GN algorithm provides a solution to the online/dynamic pricing problem (without inventory constraints) (e.g., \citet{chen2023robust,Perakis2023dynamic}, and references therein). 
Also, our results yield intriguing implications on Riemannian optimization, and offer a new perspective on stochastic Riemannian optimization problems. 



\textbf{(I) Implications on stochastic zeroth-order optimization over Riemmanian manifolds:} Our GN algorithm provides a solution for optimizing nondegenerate functions over compact finite-dimensional Riemannian manifolds (with or without boundary). Our results imply that, the global optimum of a large class of nonconvex and nonsmooth functions can be efficiently approximated. As stated in Corollary \ref{cor}, we show that GN finds the global optimum of the objective at rate $ \wt{\mathcal{O}} \( \frac{1}{\sqrt{T}} \) $. To our knowledge, for stochastic optimization problems, this is the first result that guarantees an $ \wt{\mathcal{O}} \( \frac{1}{\sqrt{T}} \) $ convergence to the global optimum for nonconvex nonsmooth optimization over compact finite-dimensional Riemannian manifolds. In addition, only $ {\mathcal{O}} \( \log \log T \) $ rounds of communication are needed to achieve this rate. 

\subsection{Challenges and Our Approach} 

As the first work that focuses on batched bandit learning for nondegenerate functions, we face several challenges throughout the analysis, especially in the lower bound proof. 
Unlike existing lower bound analyses, the geometry of the underlying space imposes challenging constraints on the problem instance construction. 
To further illustrate this challenge, we briefly review the lower bound instance construction for Lipschitz bandits \citep{kleinberg2005nearly,kleinberg2008multi,bubeck2008tree,bubeck2011x}, and explain why techniques for Lipschitz bandits do not carry through. 
Figures \ref{fig:height} illustrate some instance constructions, showing an overall picture (left figure), three instances for Lipschitz bandits lower bound (right top figure), and three instances of a ``naive attempt'' (right bottom figure). 
We start with the instances for Lipschitz bandits lower bounds (solid blue line in left figure). 
In such cases, as the ``height'' decreases with $T$, unfortunately the nondegenerate parameter $\lambda$ also decreases with $T$. 
Also, using the solid red line (left figure) instances as a ``naive attempt'' disrupts key properties of Lipschitz bandit instances. 
Specifically: (1) As shown in the right top figure, except for in $\{S_i\}_{i=1}^3$, the values of $f_{Lip}^i$ are identical. In contrast, for the ``naive attempt'' $\{f_{naive}^i\}_{i=1}^3$ in the right bottom figure, the function values vary across the domain $a.e.$ (2) From an information-theoretic perspective, distinguishing between $f_{Lip}^1$ and $f_{Lip}^2$ is as difficult as differentiating $f_{Lip}^1$ from $f_{Lip}^3$, regardless of the distance between their optima. Conversely, telling apart $f_{naive}^1$ from $f_{naive}^2$ can be harder than distinguishing $f_{naive}^1$ from $f_{naive}^3$, if the optima of $f_{naive}^1$ and $f_{naive}^2$ are closer than those of $f_{naive}^1$ and $f_{naive}^3$. 
 
\begin{figure}[h!] 
    \centering 
    \begin{minipage}{0.48\textwidth} 
        \centering 
        \includegraphics[width = \textwidth]{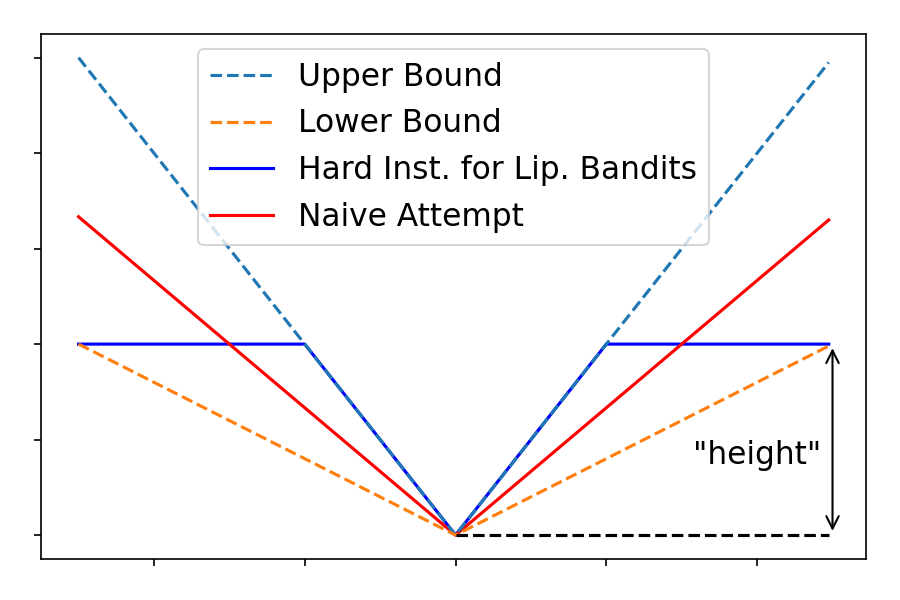} 
    \end{minipage} 
    \begin{minipage}{0.48\textwidth} 
        \centering 
        \includegraphics[width = \textwidth]{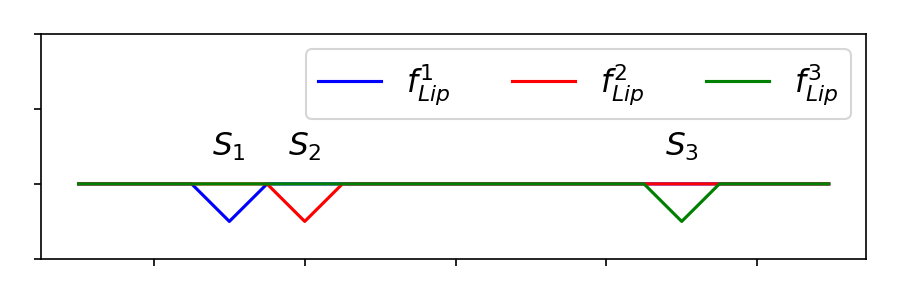} 
        \includegraphics[width = \textwidth]{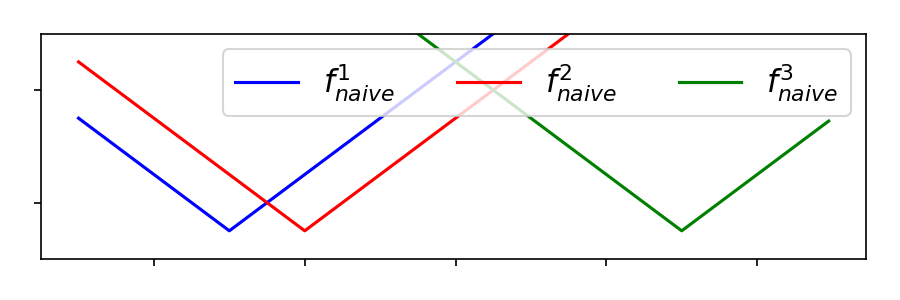} 
    \end{minipage} 
    \caption{Explanation of the instance for nondegenerate bandits} 
    \label{fig:height} 
\end{figure}

This implies we cannot change the instances as freely as previously done in the literature, not to mention that all of the lower bound arguments need to take the communication patterns into consideration. 
To overcome this difficulty, we use a trick called \emph{bitten apple} construction \textcolor{black}{(See Figure \ref{fig:fjkl})}. \textcolor{black}{
The term `bitten apple' derives from the need to focus on regions that are set-subtraction of two metric balls. 
Such regions resemble the shape of bitten apples. Using regions of this shape, we can create nondegenerate instances with suitable lower and upper bounds that are hard to distinguish from one another.
} 
This trick overcomes the constraints imposed by the nondegenerate property, and is critical in proving a lower bound that scales exponentially with the doubling dimension $d$. As a result, this trick is critical in justifying the curse-of-dimensionality phenomenon in Remark \ref{rem}. 

For the algorithm design and analysis, we need to carefully utilize the properties of nondegenerate functions to design an algorithm with regret upper bound $\wt{\mathcal{O}} ( \sqrt{T} )$ and communication complexity $\mathcal{O} (\log \log T)$. 
We carefully integrate in the properties of the nondegenerate functions in both the algorithm procedure and the communication pattern, \textcolor{black}{and introduce an algorithm framework that simultaneously supports both static communication grids and adaptive communication grids; See Section \ref{sec:related-works} for discussion in context of related works, and Corollary \ref{cor:sta} for further details}. In addition, we design the algorithm in a succinct way, so that the GN algorithm has additional advantages in time complexity (Remark \ref{remark:time}) and space complexity (Remark \ref{remark:space}).

\subsection{Related Works}
\label{sec:related-works}



Compared to many modern machine learning problems, the stochastic Multi-Armed Bandit (MAB) problem has a long history \citep{thompson1933likelihood,robbins1952some,gittins1979bandit,lai1985asymptotically,auer2002finite,auer2002nonstochastic}. Throughout the years, many solvers for this problem has been invented, including Thompson sampling \citep{thompson1933likelihood,agrawal2012analysis}, the UCB algorithm \citep{lai1985asymptotically,auer2002finite}, exponential weights \citep{auer2002nonstochastic,arora2012multiplicative}, and many more; See e.g., \citep{MAL-024,MAL-068,lattimore2020bandit} for an exposition.  

Throughout the years, multiple variations of the stochastic MAB problems have been intensively investigated, including linear bandits \citep{auer2002using,dani2007price,chu2011contextual,abbasi2011improved}, Gaussian process bandits \citep{srinivas2012information,contal2014gaussian}, bandits in metric spaces \citep{kleinberg2005nearly,kleinberg2008multi,bubeck2008tree,bubeck2011x,pmlr-v134-podimata21a}, just to name a few. 
\textcolor{black}{The problem of identifying the best arm in MAB is considerd by \citet{agarwal2017learning}, \citet{ jin2019efficient}, and \citet{ karpov2024parallel}. \citet{karpov2024parallel} studied this problem in the heterogeneous collaborative learning setting, while \citet{jin2019efficient} explored the exact top-$k$ arm identification problems in an adaptive round model.}
Among enormous arts on bandit learning, bandits in metric spaces are particularly related to our work. 
In its early stage, bandits in metric spaces primarily focus on bandit learning over $[0,1]$ \citep{agrawal1995continuum,kleinberg2005nearly,auer2007improved,cope2009regret}. Afterwards, algorithms for bandits over more general metric spaces were developed \citep{kleinberg2008multi,bubeck2008tree,bubeck2011x,bubeck2011lipschitz,magureanu2014lipschitz,lu2019optimal,krishnamurthy2020contextual,majzoubi2020efficient,10239433}.
In particular, the Zooming bandit algorithm \cite{kleinberg2008multi,slivkins2011contextual} and the Hierarchical Optimistic Optimization (HOO) algorithm \cite{bubeck2008tree,bubeck2011x} were the first algorithms that optimally solve the Lipschitz bandit problem (up to logarithmic factors). Subsequently, \citet{valko2013stochastic} considered an early version of nondegenerate functions, and built its connection to Lipschitz bandits \citep{kleinberg2008multi,bubeck2008tree,bubeck2011x}. 
\citet{valko2013stochastic} proposed StoSOO algorithm for pure exploration of function that is locally smooth with respect to some semi-metric. But, to our knowledge, bandit problems with such functions have not been explored.




In recent years, urged by the rising need for distributed computing and large-scale field experiments (e.g., \citet{berry1985bandit,cesa2013online}), the setting of batched feedback has gained attention. \citet{perchet2016batched} initiated the study of batched bandit problem, and \citet{gao2019batched} settled several important problems in batched multi-armed bandits. Over the last few years, many researchers have contributed to the batched bandit learning problem \textcolor{black}{\citep{jun2016top,agarwal2017learning,tao2019collaborative,han2020sequential,karpov2020collaborative,esfandiari2021regret,ruan2021linear,li2022gaussian,agarwal2022batched,jin2021almost, jin2021double}}. 
For example, \citet{han2020sequential} and \citet{ruan2021linear} provide solutions for batched contextual linear bandits. \citet{li2022gaussian} studies batched Gaussian process bandits. 
\textcolor{black}{\cite{jin2021double} improved the explore-then-commit strategy while addressed the batched multi-armed bandit problem. Meanwhile, \cite{jin2021almost} delved into the anytime batched multi-armed bandit problem.}

Despite all these works on stochastic bandits and batched stochastic bandits, no existing work focuses on batched bandit learning for nondegenerate functions.




\subsubsection{Additional related works from stochastic zeroth-order Riemannian optimization}

Since our work has some implications on stochastic zeroth-order Riemannian optimization, we also briefly survey some related works from there; See \citep{absil2008optimization,boumal2023introduction} for modern expositions on general Riemannian optimization. 

In modern terms, \citet{li2023stochastic} provided the first oracle complexity analysis for zeroth-order stochastic Riemannian optimization. Afterwards, \citet{li2023zeroth} introduced a new stochastic zeroth-order algorithm that leverages moving average techniques. 
In addition to works specific to stochastic zeroth-order Riemannian optimization, numerous researchers have contributed to the field of Riemannian optimization, including \citep{doi:10.1137/140955483,doi:10.1137/16M1098759,doi:10.1137/17M1116787,doi:10.1137/18M122457X,gao2021riemannian,doi:10.1137/20M1312952}, just to name a few.

Yet to the best of our knowledge, no prior art from (stochastic zeroth-order) Riemannian optimization literature focuses on approximating the global optimum over a compact Riemannian manifold for functions that can have discontinuities in its domain. Therefore, our results might be of independent interest to the Riemannian optimization community.

\textcolor{black}{
\textbf{Additional advantages contextualized in related works.} 
Compared to several most related works \citep{gao2019batched,10239433}, our method simultaneously supports both static time grids and adaptive time grids. Specifically, we introduce a \textbf{Rounded Radius (RR) Sequence} (Definition 3), which is carefully designed to facilitate the algorithm's operations and the derivation of the upper bound. With this RR sequence, our GN algorithm is versatile, simultaneously supporting both adaptive and static grid settings. This improves the BLiN algorithm of \citet{10239433} since BLiN does not achieve the near-optimal performance with a sequence of pre-determined communication points. 
}

\textbf{Paper Organization.} The rest of the paper is organized as follows. In Section \ref{sec:prelim}, we list several basic concepts and conventions for the problem. In Section \ref{sec:algo}, we introduce the Geometric Narrowing (GN) algorithm. In Section \ref{sec:lower}, we provide lower bound analysis for batched bandits for nondegenerate functions.

\section{Preliminaries} 
\label{sec:prelim}

Perhaps we shall begin with the formal definition of doubling metric spaces, since it underpins the entire problem.  

\begin{definition}[Doubling metric space] 
    \label{def:doubling} 
    The doubling constant of a metric space $(\X,\D)$ is the minimal $N$ such that for all $\x \in \X$, for all $r > 0$, the ball $\B (\x, r) := \{ \z \in \X: \D (\z, \x) \le r \}$ can be covered by $ N $ balls of radius $\frac{r}{2}$. 
    A metric space is called doubling if $N < \infty$. 
    The doubling dimension of $\X$ is 
    $d = \log_2 (N)$ where $N$ is the doubling constant of $\X$. 
\end{definition} 

An immediate consequence of the definition of doubling metric spaces is the following proposition. 

\begin{proposition} 
    \label{prop:cover} 
    Let $ (\X, \D) $ be a doubling metric space. For each $\x \in \X$ and $r \in (0,\infty)$, the ball $ \B \( \x, r \) $ can be covered by $ 2^{kd} $ balls of radius $ r\cdot 2^{-k} $ for any $k \in \mathbb{N}$, where $ d $ is the doubling dimension of $(\X, \D)$. 
\end{proposition}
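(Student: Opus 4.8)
The plan is to iterate the doubling property $k$ times. By Definition~\ref{def:doubling}, any ball of radius $r$ can be covered by $N = 2^d$ balls of radius $r/2$. I would prove Proposition~\ref{prop:cover} by induction on $k$.

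\textbf{Base case.} For $k=0$, the ball $\B(\x,r)$ is covered by $1 = 2^{0\cdot d}$ ball of radius $r \cdot 2^0 = r$ (itself), so the claim holds trivially.

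\textbf{Inductive step.} Suppose $\B(\x, r)$ can be covered by $2^{kd}$ balls of radius $r \cdot 2^{-k}$, say $\B(\x,r) \subseteq \bigcup_{i=1}^{2^{kd}} \B(\y_i, r\cdot 2^{-k})$. Applying the defining property of the doubling constant to each $\B(\y_i, r\cdot 2^{-k})$, each such ball can itself be covered by $N = 2^d$ balls of radius $\tfrac{1}{2} \cdot r \cdot 2^{-k} = r \cdot 2^{-(k+1)}$. Taking the union over $i$, we cover $\B(\x,r)$ by at most $2^{kd} \cdot 2^d = 2^{(k+1)d}$ balls of radius $r\cdot 2^{-(k+1)}$, completing the induction. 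The conclusion then holds for all $k \in \mathbb{N}$.

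Since each step is a direct application of the definition, there is no real obstacle here; the only mild subtlety is that the covering balls produced at each stage are centered at points of $\X$ (which the definition guarantees, since it quantifies over $\x \in \X$), so the iteration stays inside the metric space and the argument is clean. One could also phrase this without explicit induction by simply noting that $2^{-k} = (2^{-1})^k$ and composing the halving operation $k$ times, but the induction is the cleanest write-up.
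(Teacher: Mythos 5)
Your induction is correct and is exactly the standard iteration of the doubling property that the paper has in mind — indeed the paper states Proposition~\ref{prop:cover} as an immediate consequence of Definition~\ref{def:doubling} without writing out a proof. Your write-up, including the observation that $N = 2^d$ and that the covering balls remain centered in $\X$, fills in precisely those routine details.
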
 

On the basis of doubling metric spaces, we formally define nondegenerate functions. 

\begin{definition}[Nondegenerate functions] 
    \label{def:nondegen} 
    Let $(\X, \D)$ be a doubling metric space. A function $f : \X \to \R$ is called nondegenerate if the followings hold: 
    \begin{itemize} 
        \item $\inf_{\x \in \X} f (\x) > -\infty $ and $ f $ attains its unique minimum at $\x^* \in \X$. 
        \item There exist $ L \ge \lambda > 0 $ and $q \ge 1$, such that $\lambda \( \D (\x, \x^*) \)^q \le f (\x) - f (\x^*) \le L \( \D (\x, \x^*) \)^q ,$ for all $ \x \in \X$. 
    \end{itemize} 
    The constants $L, \lambda, q$ are referred to as nondegenerate parameters of function $f$. 
\end{definition}

\color{black}

    When the function admits multiple global minimum, one can alter the space so that all global minima are consolidated into a single point, and adjust the metric accordingly. 
    Alternatively, we can also adopt other notions to describe the function landscape. In the Appendix, we show that our approach can also operate under assumptions similar to those in \cite{wang2018optimization}. 

\color{black}





Before proceeding further, we introduce the following notations and conventions for convenience. 
\begin{itemize} 
    \item For two set $S,S' \subset \X$, define 
    \begin{align} 
        D (S, S') := \sup_{\x\in S, \x' \in S'} \D (\x, \x') . 
    \end{align} 
    \item For any $z > 0$, define $ \[z\]_2 : = 2^{\left\lceil \log_2 z \right\rceil } $. 
    \item Throughout the paper, 
    all numbers except for the time horizon $T$, doubling dimension $d$, and rounds of communications $M$, are regarded as constants. 
\end{itemize}






\color{black}

\section{The Geometric Narrowing Algorithm} 
\label{sec:algo}
Our algorithm for solving batched nondegenerate bandits is called Geometric Narrowing (GN). As the name suggests, the GN algorithm progressively narrows down the search space, and eventually lands in a small neighborhood of $\x^*$. 
To achieve this, we need to identify the specific regions of the space that should be eliminated. Additionally, we want to achieve a near-optimal regret rate using only approximately $ \log \log T$ batches. 

Perhaps the best way to illustrate the idea of the algorithm is through visuals. In Figure \ref{fig:gn}, we provide an example of how function evaluations and nondegenerate properties jointly narrow down the search space. 
Yet a naive utilization of the observations in Figure \ref{fig:gn} is insufficient to design an efficient algorithm. Indeed, the computational cost grows quickly as the number of function value samples accumulates, even for the toy example shown in Figure \ref{fig:gn}. To overcome this, we succinctly summarize the observations illustrated in Figure \ref{fig:gn} as an algorithmic procedure. 

In addition to the narrowing procedure shown in Figure \ref{fig:gn}, we also need to determine the batching mechanism, in order to achieve the $\mathcal{O} (\log\log T)$ communication bound. This communication scheme is described through a radius sequence in Definition \ref{def:seq}. 
The procedure of GN is in Algorithm \ref{alg}. In Figure \ref{fig:example2-2}, we demonstrate an example run of the GN algorithm.

\begin{figure}[h!]
    \centering
    \includegraphics[scale = 0.55]{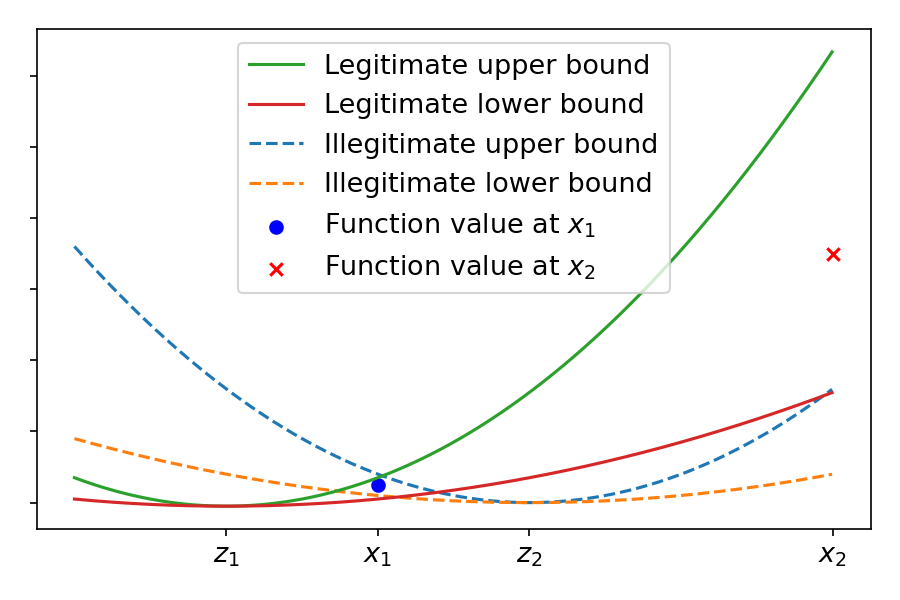}
    \caption{Illustration of the execution procedure of the GN algorithm over an interval. The function values at $\x_1$ and $\x_2$ jointly narrow down the range of $\x^*$. To ensure the function values at $\x_1$ and $\x_2$ fall between the upper and lower bounds for the nondegenerate function, the minimum of the function has to reside in a certain range. In this figure, the solid lines show a pair of legitimate bound, implying that the underlying functions may take its minimum at $\z_1$; the dashed lines show a pair of legitimate bound, implying that the underlying functions cannot take its minimum at $ \z_2$, neither in a neighborhood of $\z_2$. } 
    \label{fig:gn} 
\end{figure} 

\begin{algorithm*}[ht] 
	\caption{Geometric Narrowing (GN) for Nondegenerate Functions} 
	\label{alg} 
	\begin{algorithmic}[1]  
		\STATE \textbf{Input.} Space $(\mathcal{X}, \D)$; time horizon $T$; Number of batches $2M$.
            \STATEx /* Without loss of generality, let the diameter of $\X$ be $1$: $ \Dim \( \mathcal{X} \) = 1 $. */
		\STATE \textbf{Initialization.} 
            Rounded Radius sequence $\{\bar{r}_m\}_{m=1}^{2M}$ defined in Definition \ref{def:seq}; The first communication point $t_0=0$; Cover $ \X $ by $\bar{r}_1$-balls, and define $\mathcal{A}_{1}^{pre}$ as the collection of these balls. 
		\STATE Compute $ n_m = \frac{ 16 \log T }{ \lambda^2 \bar{r}_m^{2q} } $ for $m=1,\cdots,2M$. 
		
		\FOR{$m=1,2,\cdots,2M$}
            \STATE {If} $ \bar{r}_{m+1} > \bar{r}_{m} $, then \textbf{continue}. /* Skip the rest of the steps in the current iteration, and enter the next iteration. */ 
            \STATE For each ball $B\in\mathcal{A}_m^{pre}$, play arms $\x_{B,1},\cdots, \x_{B,n_m}$, all located at the region of $ B $. 
            \STATE Collect the loss samples $y_{B,1},\cdots, y_{B,n_m}$ associated with $\x_{B,1},\cdots, \x_{B,n_m}$. 
                Compute the average loss for each $B$, $\wh{f}_m(B):=\frac{\sum_{i=1}^{n_m}y_{B,i}}{n_m}$ for each ball $B\in\mathcal{A}_m^{pre}$. 
			Find  $\wh{f}_m^{\min}=\min_{B\in\mathcal{A}_m^{pre}}\wh{f}_m (B)$. Let $B_m^{\min}$ be the ball where $ \wh{f}_m^{\min} $ is obtained. 
            \STATE Define 
            \begin{align*} 
                \mathcal{A}_m := \left\{ B \in \mathcal{A}_m^{pre} : D (B, B_{m}^{\min}) \le 
                \( 2+ \( \frac{\lambda+L}{\lambda} \)^\frac{1}{q} \) \bar{r}_m \right\} . 
            \end{align*} 
            
            \STATE For each ball $ B \in \mathcal{A}_m$, use $\left(\bar{r}_m/\bar{r}_{m+1}\right)^d$ balls of radius $ \bar{r}_{m+1} $ to cover $ B $, and define $\mathcal{A}_{m+1}^{pre}$ as the collection of these balls.  
            \STATEx /* Due to Definition \ref{def:doubling}, we can cover $ B \in \mathcal{A}_m $ by $\left(\bar{r}_m/\bar{r}_{m+1}\right)^d$ balls of radius $ \bar{r}_{m+1} $. */ 
            \STATE Compute $t_{m+1}=t_m+(\bar{r}_{m}/\bar{r}_{m+1})^d \cdot|\mathcal{A}_m|\cdot n_{m+1}$. 
            If $t_{m+1}\geq T$ then \textbf{break}.
    \ENDFOR
    \STATE \textbf{Cleanup:} Pick a point in the region that is not eliminated, and play this point. Repeat this operation until all $T$ steps are used.  
    \STATE \textbf{Output (optional):} Arbitrarily pick $ \x_{out} \in \cup_{B \in \mathcal{A}_{2M}} B $ as an approximate for $\x^*$. /* This output step is optional, and only used for best arm identification or stochastic optimization tasks. */
	\end{algorithmic} 
\end{algorithm*}


\begin{figure*}[h!]
    \centering
    \includegraphics[width=\textwidth]{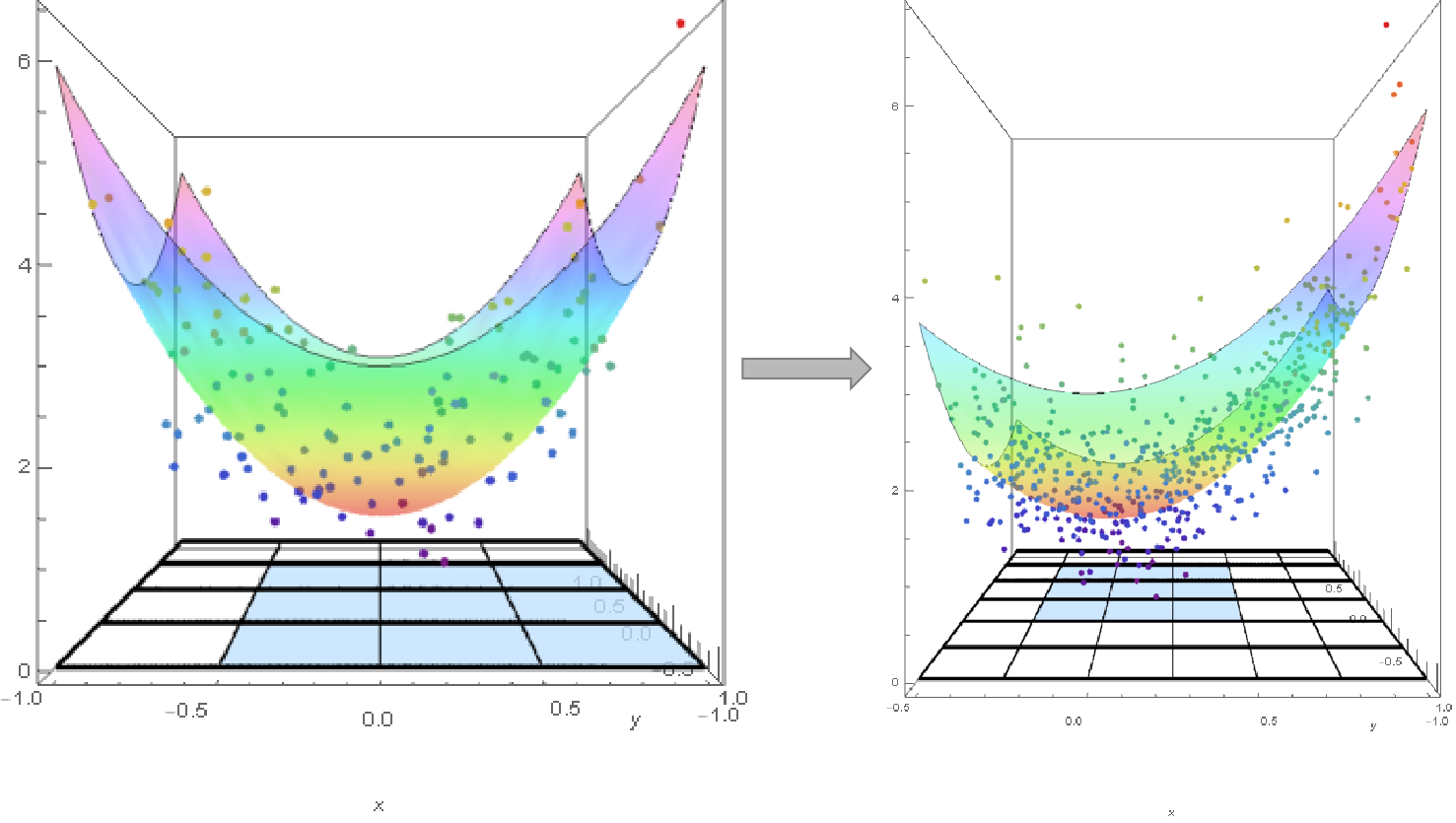}
    \caption{An example run of the GN algorithm. The surface shows the expected loss function, and the scattered points are loss samples over the current domain. These two plots describe the delete and split operations between adjacent batches of a GN run. } 
    \label{fig:example2-2}
\end{figure*}

\begin{definition} 
    \label{def:seq} 
    For $d > 0$ and $q\ge 1$, we define $\hat{c}_1=\frac{1}{2 (2q+d)\log 2 } \log\frac{T}{\log T} $ and $\hat{c}_{i+1}=\hat{\eta} \hat{c}_i$ for $i=1,2\dots$, where $\hat{\eta}=\frac{q+d}{2q+d}$. Then we define a sequence $\{\hat{r}_m\}_m$ by $\hat{r}_m=2^{-\sum_{i=1}^m \hat{c}_i}$ for $m=1,2\dots$. 
    On the basis of $\{\hat{c}_m\}_m$, we define $\hat{l}_m=\left \lfloor \sum_{i=1}^m \hat{c}_i \right \rfloor $ and $\hat{u}_m=\left \lceil \sum_{i=1}^m \hat{c}_i \right \rceil$. Then we define \textup{Rounded Radius (RR) Sequence}: $\Bar{r}_m$, $m=1,\dots,2M$: 
     \begin{align*} 
        \Bar{r}_m = 
        \begin{cases} 
            \Bar{r}_{2k-1} = 2^{-\hat{l}_k } = 2^{-\left \lfloor \sum_{i=1}^k \hat{c}_i \right \rfloor} , & \\
            \qquad \qquad \text{if } m=2k-1, k=1,\dots,M, \\ 
            \Bar{r}_{2k} = 2^{-\hat{u}_k } = 2^{-\left \lceil \sum_{i=1}^k \hat{c}_i \right \rceil} , &\\
            \qquad \qquad \text{if } m=2k, k=1,\dots,M.
        \end{cases} 
    \end{align*} 
\end{definition}

From the above definition, we have $\Bar{r}_{2k} \le \hat{r}_k \le \Bar{r}_{2k-1}$ for $k=1,\dots,M$. 
\textcolor{black}{
Below, we present an analysis of the algorithm's time complexity in remark \ref{remark:time}.
\begin{remark}
    \label{remark:time}
     The time complexity of GN algorithm is $\mathcal{O}(T)$. The algorithm consists of $2M$ cycles and a clean-up phase. In the $m$-th cycle, we perform $(t_{m+1} - t_m)$ samplings, $\mathcal{O}(t_{m+1} - t_m)$ arithmetic operations, and $|\mathcal{A}_m^{pre}|$ comparisons to identify $\wh{f}_m^{\min}$. During the clean-up phase, an additional $\mathcal{O}(T - t_{2M+1})$ samplings are required. Therefore, the total time complexity can be expressed as
        \begin{align*}
            \sum_{m=1}^{2M}\mathcal{O}(t_{m+1} - t_m)+\sum_{m=1}^{2M}|\mathcal{A}_m^{pre}|+\mathcal{O}(T - t_{2M+1}).
        \end{align*}
        It holds that $\sum_{m=1}^{2M} |\mathcal{A}_m^{pre}| \cdot n_m \leq T$. Consequently, $\sum_{m=1}^{2M} |\mathcal{A}_m^{pre}|$ is bounded by $\mathcal{O}(T)$, ensuring that the overall time complexity of GN algorithm remains $\mathcal{O}(T)$.
\end{remark}
}

\textcolor{black}{
\begin{remark} 
    \label{remark:space}
    Notice that the RR sequence $ \bar{r}_m $ can also take values $ \bar{r}_m = 2^{-m} $. With this choice of ball radii, the space complexity of GN does not increase with the time horizon $T$. 
\end{remark} 
}

\subsection{Analysis of the GN Algorithm} 

\label{sec:gn}

We start with the following simple concentration lemma. 

\begin{lemma} 
    \label{lem:concen} 
    Under Theorem \ref{thm:upper}'s assumptions, define 
    \begin{align*} 
        \mathcal{E} := & \;  
        \Bigg\{ \left| \wh{f}_m(B) - \E \[ \wh{f}_m(B) \] \right| \leq \sqrt{\frac{ 4 \log T}{n_m}}, \\ 
        & \qquad \forall 1 \le m \le 2 M , \; \forall B \in \A_m^{pre}\Bigg\} . 
    \end{align*} 
    It holds that $ \Pr \left( \mathcal{E} \right) \ge 1 - 2 T^{-1} $. 
\end{lemma}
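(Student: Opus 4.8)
The plan is a textbook sub-Gaussian concentration estimate combined with a union bound; the only point requiring genuine care is that the partition sequence $\{\A_m^{pre}\}$ is produced adaptively, so the collection of balls over which we must union is itself random.

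First I would fix a round index $m \in \{1, \dots, 2M\}$ and a ball $B \in \A_m^{pre}$, and condition on all the randomness of the algorithm generated strictly before the loss samples of $B$ in round $m$ are drawn. Under this conditioning $\A_m^{pre}$, the count $n_m$, and the arm locations $\x_{B,1}, \dots, \x_{B,n_m} \in B$ are all determined, so $\E[\wh f_m(B)] = \frac1{n_m}\sum_{i=1}^{n_m} f(\x_{B,i})$ is a fixed number, and by the modelling assumption $\wh f_m(B) - \E[\wh f_m(B)] = \frac1{n_m}\sum_{i=1}^{n_m}\xi_{B,i}$, where the $\xi_{B,i}$ are i.i.d.\ mean-zero $1$-sub-Gaussian and independent of the conditioning $\sigma$-field. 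Thus this average is $n_m^{-1/2}$-sub-Gaussian, and since $\big(\sqrt{4\log T / n_m}\big)^2 \cdot n_m / 2 = 2\log T$, the standard sub-Gaussian tail bound gives
\begin{align*}
\Pr\!\left( \left| \wh f_m(B) - \E[\wh f_m(B)] \right| > \sqrt{\frac{4\log T}{n_m}} \ \middle|\ \mathcal F_{m,B} \right) \le 2 \exp(-2\log T) = 2 T^{-2}.
\end{align*}
Because the right-hand side is deterministic, the bound survives after removing the conditioning; moreover, unioning over $B \in \A_m^{pre}$ conditionally on the algorithm's state at the start of round $m$ and then taking expectations shows that the probability that some ball in round $m$ violates its estimate is at most $2 T^{-2}\, \E\big[|\A_m^{pre}|\big]$.

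Next I would bound $\sum_{m=1}^{2M}\E\big[|\A_m^{pre}|\big]$ by the time budget. The algorithm plays exactly $T$ arms over its whole run, and in round $m$ it plays $n_m$ arms inside each of the $|\A_m^{pre}|$ balls of $\A_m^{pre}$, hence $\sum_{m=1}^{2M} |\A_m^{pre}|\, n_m \le T$; since $n_m \ge 1$ (which holds once $T$ exceeds an absolute constant depending only on $\lambda$, below which the statement is trivial), we get the deterministic bound $\sum_{m=1}^{2M} |\A_m^{pre}| \le T$. Combining with the per-round estimate and the tower rule,
\begin{align*}
\Pr(\mathcal E^c) \ \le\ \sum_{m=1}^{2M} 2 T^{-2}\, \E\big[ |\A_m^{pre}| \big] \ =\ 2T^{-2}\, \E\!\left[ \sum_{m=1}^{2M} |\A_m^{pre}| \right] \ \le\ 2 T^{-2} \cdot T \ =\ 2 T^{-1},
\end{align*}
which is the claim $\Pr(\mathcal E) \ge 1 - 2T^{-1}$.

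The one genuinely delicate point is the adaptivity bookkeeping: since $\A_m^{pre}$ depends on the data from rounds $1, \dots, m-1$, there is no fixed a priori list of balls to union over, and I expect the write-up to need care precisely in the conditioning/tower argument — in particular in using that the sub-Gaussian tail bound is uniform over the conditioning event, and that the deterministic step-budget identity $\sum_m |\A_m^{pre}|\, n_m \le T$ is what keeps the effective number of union-bound terms at $T$ rather than something that grows with the branching of the partition refinement. Everything else (sub-Gaussianity of an average of i.i.d.\ sub-Gaussians, and the constant $16$ in $n_m$ being chosen to make the tail exponent equal $2\log T$) is routine.
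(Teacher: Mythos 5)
Your proposal is correct and follows essentially the same route as the paper: a sub-Gaussian Chernoff bound giving a $2T^{-2}$ failure probability per ball, followed by a union bound over at most $T$ sampled balls (the paper compresses your step-budget count $\sum_m |\mathcal{A}_m^{pre}| n_m \le T$ into the remark that ``there are no more than $T$ balls that contain observations''). Your explicit conditioning/tower argument for the adaptively generated partition is a careful spelling-out of what the paper leaves implicit, not a different method.
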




\begin{proof}[Proof of Lemma \ref{lem:concen}]
    Fix a ball $B\in\mathcal{A}_m^{pre}$. Recall the average loss of $B \in \mathcal{A}_m^{pre}$ is defined as 
    \begin{equation*} 
        \wh{f}_m(B)=\frac{\sum_{i=1}^{n_m}y_{B,i}}{n_m}. 
    \end{equation*}
    We also have 
    \begin{equation*}
        \E \left[ \wh{f}_m(B) \right] =\frac{\sum_{i=1}^{n_m}f( \x_{B,i})}{n_m}. 
    \end{equation*}
    
    Since $\wh{f}_m (B)-\E \left[ \wh{f}_m(B) \right]$ is centered at zero, and is $\frac{1}{n_m}$-sub-Gaussian (e.g., Section 2.3 in \citet{10.1093/acprof:oso/9780199535255.001.0001}), applying the Chernoff bound gives 
    \begin{align*} 
        \Pr \left(\left|\wh{f}_m (B)-\E \left[ \wh{f}_m(B) \right]\right| \ge \sqrt{\frac{4 \log T}{n_m}}\right) \le \frac{2}{T^{2}}. 
    \end{align*} 
    
    Apparently, there are no more than $T$ balls that contain observations. Thus a union bound over these balls finishes the proof. 
\end{proof} 

Next in Lemma \ref{lem:not-eli}, we show that under event $\mathcal{E}$, the GN algorithm has nice properties. 

\begin{lemma} 
    \label{lem:not-eli} 
    Under event $\mathcal{E}$ (defined in Lemma \ref{lem:concen}), the following properties hold: 
    \begin{itemize} 
        \item The optimal point $\x^*$ is not removed; 
        \item For any $B \in \A_m$, $ \D ( \x , \x^* ) \le \Big( 2+ 2 \big( \frac{\lambda+L}{\lambda} \big)^\frac{1}{q} \Big) \bar{r}_m $ for all $\x \in \cup_{B \in \A_m} B $. 
    \end{itemize} 
\end{lemma}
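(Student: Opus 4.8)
The plan is to prove both bullets by a single induction on the round index $m$, run along the subsequence of rounds that are \emph{not} discarded by the ``\textbf{continue}'' step (the rounds with $\bar{r}_m \le \bar{r}_{m-1}$, so that $\A_m^{pre}$ genuinely refines the region retained after the previous active round). The invariant to carry is: \emph{$\x^*$ lies in some ball of $\A_m^{pre}$}. The first bullet is then just this invariant, and the second bullet will come out of the estimates used to maintain it.

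The first thing I would do is rewrite $\mathcal{E}$ in a usable form. Substituting the algorithm's choice $n_m = \frac{16\log T}{\lambda^2\bar{r}_m^{2q}}$ into the deviation bound defining $\mathcal{E}$ gives, on $\mathcal{E}$,
\begin{align*}
    \bigl| \wh{f}_m(B) - \E[\wh{f}_m(B)] \bigr| \;\le\; \sqrt{\tfrac{4\log T}{n_m}} \;=\; \tfrac{\lambda}{2}\,\bar{r}_m^{\,q}
    \qquad \text{for all } m \text{ and all } B\in\A_m^{pre}.
\end{align*}
I would also record that $\E[\wh{f}_m(B)] = f(\x_B)$, where $\x_B\in B$ is the single point at which the $n_m$ arms of cell $B$ are played; together with the two-sided bound of Definition \ref{def:nondegen}, this is exactly what converts function-value information at $\x_B$ into metric information about $\x^*$. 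All constants below depend only on $\lambda,L,q$.

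The inductive step is the heart of the matter. The base case is immediate since $\A_1^{pre}$ covers $\X$. Assume $\x^* \in B_m^\star \in \A_m^{pre}$. Since $\x^*\in B_m^\star$, the point $\x_{B_m^\star}$ is within $O(\bar{r}_m)$ of $\x^*$, so the \emph{upper} inequality of Definition \ref{def:nondegen} together with the deviation bound gives $\wh{f}_m^{\min} \le \wh{f}_m(B_m^\star) \le f(\x^*) + c_1\bar{r}_m^{\,q}$ with $c_1 = L+\tfrac{\lambda}{2}$ (up to the cell-radius convention). Applying the deviation bound once more to $B_m^{\min}$ and then the \emph{lower} inequality of Definition \ref{def:nondegen},
\begin{align*}
    \lambda\,\D\bigl(\x_{B_m^{\min}},\x^*\bigr)^q
    &\;\le\; f\bigl(\x_{B_m^{\min}}\bigr) - f(\x^*) \;=\; \E[\wh{f}_m(B_m^{\min})] - f(\x^*) \\
    &\;\le\; \wh{f}_m^{\min} + \tfrac{\lambda}{2}\bar{r}_m^{\,q} - f(\x^*) \;\le\; (L+\lambda)\,\bar{r}_m^{\,q},
\end{align*}
so $\x_{B_m^{\min}}$ — hence, up to the cell radius, all of $B_m^{\min}$ — lies within $\bigl(\tfrac{\lambda+L}{\lambda}\bigr)^{1/q}\bar{r}_m$ of $\x^*$. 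A triangle-inequality bound, with the constant calibrated to the covering convention of Algorithm \ref{alg}, then gives $D(B_m^\star,B_m^{\min}) \le \bigl(2+(\tfrac{\lambda+L}{\lambda})^{1/q}\bigr)\bar{r}_m$, which is exactly the threshold in the definition of $\A_m$; hence $B_m^\star\in\A_m$. Since $\A_{m+1}^{pre}$ is formed by covering each ball of $\A_m$ by radius-$\bar{r}_{m+1}$ balls, one of them contains $\x^*$, so the invariant (and the first bullet) is preserved. The second bullet then follows from the same ingredients: for $B\in\A_m$ and $\x\in B$, $\D(\x,\x^*) \le \D(\x,\x_{B_m^{\min}}) + \D(\x_{B_m^{\min}},\x^*) \le D(B,B_m^{\min}) + (\tfrac{\lambda+L}{\lambda})^{1/q}\bar{r}_m \le \bigl(2+2(\tfrac{\lambda+L}{\lambda})^{1/q}\bigr)\bar{r}_m$, using the threshold defining $\A_m$ for the first term and the confinement of $B_m^{\min}$ for the second.

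I expect the only real difficulty to be constant bookkeeping in the inductive step, since the chain of triangle inequalities must close exactly at the threshold $\bigl(2+(\tfrac{\lambda+L}{\lambda})^{1/q}\bigr)\bar{r}_m$ that is hard-wired into the algorithm; getting this right forces one to fix precisely (i) where inside a cell the $n_m$ arms are played (so that $\E[\wh{f}_m(B)]$ is $f$ at one well-understood point), (ii) the radius/diameter convention for the cover cells, and (iii) how the $\tfrac{\lambda}{2}\bar{r}_m^{\,q}$ concentration slack is absorbed. A secondary, mechanical point is to handle the discarded (``\textbf{continue}'') rounds so that the refinement chain $\A_m\to\A_{m+1}^{pre}$ and the index subsequence driving the induction are well defined, and to note that at most $T$ cells ever carry observations — as used in Lemma \ref{lem:concen} — because rounds execute only while $t_m < T$.
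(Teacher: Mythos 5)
Your proposal is correct and follows essentially the same route as the paper's proof: concentration ($4\sqrt{\log T/n_m}=\lambda\bar r_m^q$) plus the two-sided nondegeneracy bounds confine $B_m^{\min}$ within $\bigl(\tfrac{\lambda+L}{\lambda}\bigr)^{1/q}\bar r_m$ of $\x^*$, the elimination threshold then keeps the ball containing $\x^*$, and a triangle inequality gives the second bullet. Your explicit induction on the active rounds merely formalizes what the paper leaves implicit (that $\x^*$ lies in some ball of $\mathcal{A}_m^{pre}$), and your flagged constant/convention caveats are shared by the paper's own argument.
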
 

\begin{proof} 
    For each $m$, let $B_m^*$ denote the ball in $\A_m$ such that $B_m^* \ni \x^*$. For each $m$ and $ B \in \A_m $, we use $ \x_m (B) $ to denote the center of the ball $B$.  
    Let $\mathcal{E}$ be true. We know 
    \begin{align*} 
        0 
        \ge& \; 
        \wh{f}_m (B_m^{\min}) - \wh{f}_m (B_m^*) \\ 
        =& \; 
        \underbrace{\wh{f}_m (B_m^{\min}) - f ( \x_{m} ( B_m^{\min} ) )}_{\textcircled{1}} + \underbrace{f (\x_{m} (B_m^{\min}) ) - f (\x^*)}_{\textcircled{2}} \\ 
        &+ \underbrace{f (\x^*) - f ( \x_{m} ( B_m^* ) )}_{\textcircled{3}} + \underbrace{f ( \x_{m} (B_m^*) ) - \wh{f}_m ( B_m^* )}_{\textcircled{4}} \\ 
        \ge& \; 
        - 4 \sqrt{ \frac{\log T}{n_m} } + \lambda \( \D \( \x_{m} ( B_m^{\min} ) , \x^* \) \)^q - L \bar{r}_m^q, 
    \end{align*} 
    where for \textcircled{1} and \textcircled{4} we use Lemma \ref{lem:concen}, for \textcircled{3} we use property of the nondegenerate function, and \textcircled{2} is evidently nonnegative. 
    
    Since $4 \sqrt{ \frac{\log T}{n_m} } = \lambda \bar{r}_m^q$, we know that with high probability 
    \begin{align*} 
        \D \( \x_{m} ( B_m^{\min} ) , \x^* \) 
        \le 
        \( \frac{\lambda+L}{\lambda} \)^{1/q} \cdot \bar{r}_m . 
    \end{align*} 
    Let $B_m^*$ be the cube in $\mathcal{A}_m^{pre}$ that contains $\x^*$.
    This implies that $ D (B_m^*, B_m^{\min}) \le \( 2 + \( \frac{\lambda+L}{\lambda} \)^{1/q} \) \bar{r}_m $, and thus the optimal arm is not eliminated. Since (1) the optimal arm is not  eliminated, and (2) the diameter of $ \cup_{B \in \A_m} B $ is no larger than $ \( 2+ 2 \( \frac{\lambda+L}{\lambda} \)^{1/q} \) \bar{r}_m $, we have also proved the second item.  
    

    
\end{proof}

With Lemmas \ref{lem:concen} and \ref{lem:not-eli} in place, we are ready to prove Theorem \ref{thm:upper}. 


\begin{proof}[Proof of Theorem \ref{thm:upper}]

    For each $m$, we introduce $ S_m := \cup_{B \in \A_m} B $ to simplify notation. 
    By the algorithm procedure, the diameter of $S_m$ is bounded by 
    $ \( 3 + 2 \( \frac{\lambda + L}{\lambda} \)^{1/q} \) \bar{r}_m $. By Proposition \ref{prop:cover}, we have 
    \begin{align*} 
        | \A_m | 
        \le 
        \[ 3 + 2 \( \frac{\lambda + L}{\lambda} \)^{1/q} \]_2^d , 
    \end{align*} 
    which gives 
    \begin{align} 
        | \A_m^{pre} | 
        \le 
        \( \frac{ \bar{r}_{m-1} }{ \bar{r}_m} \)^d \[ 3 + 2 \( \frac{\lambda + L}{\lambda} \)^{1/q} \]_2^d . \label{eq:bound-Am}
    \end{align} 
    
    
    Note that the $m$-th batch incurs no regret if it is skipped. Thus it suffices to consider the case where the $m$-th batch is not skipped. 
    For $m = 2k-1$, we can bound the regret in the $ (2k-1) $-th batch (denoted by $R_{2k-1}$) by 
    \begin{align*} 
        R_{2k-1} 
        \le& \;  
        |\mathcal{A}_{2k-1}^{pre}| \cdot n_{2k-1} \cdot {L} A_{\lambda,L,q} \Bar{r}_{2k-2}^{q} \\  
        \le& \;  
        \( \frac{ \bar{r}_{m-1} }{ \bar{r}_m} \)^d B_{\lambda,L,q}^d n_{2k-1} \cdot {L} A_{\lambda,L,q} \Bar{r}_{2k-2}^{q} , 
    \end{align*} 
    where $ A_{\lambda, L, q}:= \( 2+ 2 \( \frac{\lambda+L}{\lambda} \)^\frac{1}{q} \)^q $ and $ B_{\lambda, L, q} := \[ 3+2\( \frac{\lambda+L}{\lambda} \)^\frac{1}{q} \]_2 $ are introduced for simplicity, and the first line uses Lemma \ref{lem:not-eli}. 
    Plugging $ n_{2k-1} = \frac{16 \log T}{\lambda^2 \bar{r}_{2k-1}^{2q} } $ into the above inequality gives 
    \begin{align*} 
         R_{2k-1} 
        \le &\; 
        \( \frac{ \bar{r}_{m-1} }{ \bar{r}_m} \)^d B_{\lambda,L,q}^d \cdot \frac{16 \log T}{\lambda^2 \bar{r}_{2k-1}^{2q} } \cdot {L} A_{\lambda,L,q} \cdot \Bar{r}_{2k-2}^{q} \\ 
        \le& \; 
        L A_{\lambda,L,q} B_{\lambda,L,q}^d \cdot \frac{16 \log T}{\lambda^2 } \cdot  \Bar{r}_{2k-2}^{q+d} \Bar{r}_{2k-1}^{-2q-d} \\ 
        \le& \; 
        L A_{\lambda,L,q} B_{\lambda,L,q}^d \cdot \frac{16 \log T}{\lambda^2 } \cdot \hat{r}_{{k-1}}^{q+d} \hat{r}_{{k}}^{-2q-d} , 
    \end{align*} 
    where the last inequality follows from the definitions of $ \hat{r}_m $ and $ \bar{r}_m $. 
    
    By definition of the sequence $\{ \hat{r}_m \}$, we have, for any $m$, $ \hat{r}_{m-1}^{q+d} \hat{r}_m^{-2q-d} = 2^{ - (q+d) \sum_{i=1}^{m-1} \hat{c}_i + (2q+d) \sum_{i=1}^m \hat{c}_i } = 2^{ (2q + d)\hat{c}_m + q \sum_{i=1}^{m-1} \hat{c}_i } = 2^{(2q + d ) \hat{c}_1 } $. Thus we can upper bound $R_{2k-1}$ by 
    \begin{align} 
        R_{2k-1} 
        \le& \;  
        L A_{\lambda,L,q} B_{\lambda,L,q}^d \cdot \frac{16}{\lambda^2} \cdot \sqrt{T \log T} . \label{eq:reg-odd} 
    \end{align} 
    
    
    For $m=2k$, the regret in batch $ 2k $ (written $R_{2k}$) is bounded by 
    \begin{align*} 
        R_{2k} & \le | \A_{2k}^{pre} | \cdot n_{2k} \cdot L \cdot A_{\lambda,L,q} \cdot \bar{r}_{2k}^q .
    \end{align*} 
    Bringing (\ref{eq:bound-Am}) and definition of $n_{2k}$ into the above inequality, and noticing $ \frac{\bar{r}_{m-1}}{\bar{r}_m} \le 2 $ (for even $m$) gives 
    \begin{align*} 
        R_{2k} 
        & \le 
        L 2^d A_{\lambda,L,q} B_{\lambda,L,q}^d \frac{16 \log T}{\lambda^2} \Bar{r}_{2k}^{-q} \\
        & \le 
        L 2^{d+q} A_{\lambda,L,q} B_{\lambda,L,q}^d \frac{16 \log T}{\lambda^2} \hat{r}_{{k}}^{-q}, 
    \end{align*} 
    where for the last inequality, we use definitions of $\bar{r}_m$ and $\hat{r}_m$ to get $ \bar{r}_{{2k}}^{-1} \le 2\cdot \hat{r}_{{k}}^{-1} $. 
    Again by definition of $\wh{r}_m$, we have $\hat{r}_{m}^{-1} \le 2^{\hat{c}_1\frac{1}{1-\hat{\eta}}}$, and thus the regret in batch $2k$ is at most 
    \begin{align} 
        R_{2k} \le &\; L 2^{d+q} A_{\lambda,L,q} B_{\lambda,L,q}^d \frac{16 }{\lambda^2} \sqrt{T \log T}. \label{eq:reg-even}
    \end{align} 

    For the cleanup phase, the regret (written $R_{2M+1}$) is bounded by  
    \begin{align} 
        R_{2M+1} 
        \le& \; 
        L A_{\lambda,L,q} \Bar{r}_{2M}^q T \nonumber \\ 
        \le&\;  
         L A_{\lambda,L,q} \sqrt{T \log T} \(\frac{T}{\log T} \)^{\frac{1}{2}\hat{\eta}^M} . \label{eq:reg-rest}
    \end{align} 

    Let there be in total $2M+1$ batches. Collecting terms from (\ref{eq:reg-odd}), (\ref{eq:reg-even}) and (\ref{eq:reg-rest}) gives 
    \begin{align*} 
         R^{GN}(T)  
        \le & \; 
        L A_{\lambda, L, q} B_{\lambda, L, q}^d \cdot \frac{16 }{\lambda^2 } \sqrt{T \log T} \cdot M \\
        &+ L2^{d+q} A_{\lambda, L, q} B_{\lambda, L, q}^d \cdot \frac{16 }{\lambda^2 } \sqrt{T \log T} \cdot M \\
        &+ L A_{\lambda,L,q} \sqrt{T \log T} \(\frac{T}{\log T} \)^{\frac{1}{2}\hat{\eta}^M} . 
    \end{align*}
     Now choose $M={\hat{M}}^* = \frac{\log\log \frac{T}{\log T}}{\log \frac{1}{\hat{\eta}}} $, we have $\hat{\eta}^{\hat{M}^*} = \( \log \frac{T}{\log T} \) ^{-1}$, then 
     \begin{align*}
          R^{GN} (T) 
         \le&\;   
         L (2^{d+q} + 1) A_{\lambda, L, q} \\ 
         & \cdot \( \frac{16 B_{\lambda,L,q}^d}{\lambda^2} \cdot  \frac{\log\log \frac{T}{\log T}}{\log (\frac{2q+d}{q+d}) } + e^\frac{1}{2} \) \sqrt{T \log T}. 
     \end{align*}
     

    With this choice of $M$, only $ \mathcal{O} \( \log \log T \) $ batches are needed. Q.E.D. 

\end{proof}

Following the proof of Theorem \ref{thm:upper}, we can readily prove Corollary \ref{cor:opt}. 

\begin{proof}[Proof of Corollary \ref{cor:opt}]

    Let the event $\mathcal{E}$ be true. 
    From Definition \ref{def:seq}, we know 
    \begin{align*} 
        \bar{r}_{2M}^q 
        \le& \; 
        2^{- q \hat{c}_1 \frac{1 - \hat{\eta}^{M} }{1 - \hat{\eta}} } \\ 
        =& \; 
        2^{ - \frac{q }{ 2 (2q+d) \log 2 } \log \frac{T}{\log T}  \cdot \frac{1 - \hat{\eta}^{M}}{1 - \hat{\eta} } } \\ 
        =& \; 
        \( \frac{T}{ \log T } \)^{ - \frac{q}{2(2q+d)} \cdot \frac{ 1 - \( \frac{q+d}{2q+d} \)^{M} }{ \frac{q}{2q+d} } } \\ 
        =& \; 
        \( \frac{T}{ \log T } \)^{ - \frac{1}{2} \(  1 - \( \frac{q+d}{2q+d} \)^{M} \) } \\ 
        =& \; 
        \sqrt{ \frac{\log T}{T} } \cdot \( \frac{T}{\log T} \)^{ \frac{1}{2} \hat{\eta}^{M} } . 
    \end{align*} 
    
    Let $ M = \frac{\log \log \frac{T}{\log T}}{\log \frac{1}{\hat{\eta}}} $, we have $\hat{\eta}^{M} = \( \log \frac{T}{ \log T} \)^{-1}$, and thus 
    \begin{align*}
        \bar{r}_{2M}^q 
        \leq e^{\frac{1}{2}} \sqrt{ \frac{ \log T }{ T } } . 
    \end{align*}
    
    By Lemma \ref{lem:not-eli}, we know, under event $\mathcal{E}$, 
    \begin{align*} 
        f (\x_{out} ) - f (\x^*) 
        \le & \; 
        L \D \( \x_{out} , \x^*\)^q \\
         \leq &\; L\( 2+ 2 \( \frac{\lambda+L}{\lambda} \)^\frac{1}{q} \)^q\bar{r}_{2M}^q \\
        \le& \;  
        e^{\frac{1}{2}} L \( 2+ 2 \( \frac{\lambda+L}{\lambda} \)^\frac{1}{q} \)^q \sqrt{ \frac{ \log T }{ T } }. 
    \end{align*} 

    We conclude the proof by noticing that the $\mathcal{E}$ holds true with probability exceeding $1 - \frac{2}{T}$. 
\end{proof}

\textcolor{black}{
In case there is a need to allocate computing resources ahead of time, meaning determining communication grids $\mathcal{T}= \{t_0,\cdots,t_M \}$ in advance.
We can predefine a sequence of communication grids $\mathcal{T}= \{t_0,\cdots,t_M \}$ while ensuring optimal $ \wt{\mathcal{O}} \( \sqrt{T} \) $ regret with $\mathcal{O}\( \log \log T \)$ communication points.
Detailed definition are provided in the proof of Corollary \ref{cor:sta}.  In the definition, two aspects require particular attention: firstly, the radius must decrease as the batch number increases, and secondly, the total number of steps must not exceed $T$.
\begin{corollary} 
    \label{cor:sta} 
    Under the assumptions of Theorem \ref{thm:upper}, we can define a static communication grids $\mathcal{T}_{s}= \{\tau_0,\cdots,\tau_{M_s} \}$ ahead of queries, where $M_s$ is the index of the last batch. If we run GN algorithm under static communication grid $\mathcal{T}_{s}$, it achieves the same regret bound and communication complexity as stated in Theorem \ref{thm:upper}. 
\end{corollary} 
}

\textcolor{black}{
\begin{proof}[Proof of Corollary \ref{cor:sta}]
    Here we show how to define $0=\tau_0, \tau_1,\tau_{2},\cdots$ sequentially.\\
    The setup of these static time grids also rely on the Rounded Radius sequence $\{\bar{r}_m\}_{m=1}^{2M}$ defined in Definition 3. We use $\{ \tau_m \}_{m=1}^{2M}$ to represent time grids, and $\{s_m\}_{m=1}^{2M}$ to denote the indexes corresponding to radius. Here $\tau_m$ is the ``deadline'' (specified below) for the $m$-th batch, and $\bar{r}_{s_m}$ is the radius of balls in the $m$-th batch.
    We will initialize the first batch using
    \begin{align*}
        \{\tau_1 , s_1\} = \left\{ \(\frac{1}{2\bar{r}_{1}}\)^d \cdot B_{\lambda,L,q}^d \cdot \frac{ 16 \log T }{ \lambda^2 \bar{r}_1^{2q} } , 1 \right\}.
    \end{align*}
    For $m \in \{1,2,\cdots,2M\}$, define $\{\tau_{m+1} , s_{m+1}\}$ based on $\{\tau_m , s_m\}$, iteratively as 
    \begin{align*} 
        \left\{
            \begin{aligned} 
                s_{m+1} :=&\; \inf\{ k : k > m \text{ and } \bar{r}_k \le \bar{r}_{s_m}\},\\
                \tau_{m+1} :=&\; \tau_m+(\bar{r}_{s_m}/\bar{r}_{s_{m+1}})^d 
                  \cdot B_{\lambda,L,q}^d \cdot \frac{ 16 \log T }{ \lambda^2 \bar{r}_{s_{m+1}}^{2q} }.
            \end{aligned} 
        \right.
    \end{align*} 
    With the complete couples $\{\tau_m , s_m\}_{m=1}^{2M}$, to fix $\mathcal{T}_{s}= \{\tau_0,\cdots,\tau_{M_s} \}$ we only need to determine the time to stop: $M_s$. Considering both the batch number and time horizon $T$, define $M_s := \max \{ m : m \le 2\hat{M}^* \text{ and } \tau_m \le T \}$ where $\hat{M}^*$ is defined in Proof of Theorem \ref{thm:upper}. By now, we figure out the static communication grids $\mathcal{T}_{s}= \{\tau_0,\cdots,\tau_{M_s} \}$.
    \\
    Under this definition, each batch will have more queries than an adaptive algorithm would, as we have substituted $\[ 3 + 2 \( (\lambda + L)/\lambda \)^{1/q} \]_2^d$ (the upper bound of $|\mathcal{A}_m|$) for $|\mathcal{A}_m|$. The GN algorithm can be fine-tuned to address this modification. After querying each ball in $\mathcal{A}_m^{pre}$ $n_m$ times, we randomly distribute the excess queries among $\mathcal{A}_m^{pre}$ until $\tau_m - \tau_{m-1}$ steps are used in batch $m$. This still maintain the same bound in lemmas and Theorem \ref{thm:upper}.
\end{proof}
}





\section{Lower Bound Analysis}

\label{sec:lower}


First of all, we need to identify a particular doubling metric space to work with. Hinted by the celebrated Assouad's embedding theorem, we turn to the Euclidean space with a specific metric. For any $d$, the doubling metric space we choose is $ (\R^{\lfloor d \rfloor }, \| \cdot \|_\infty) $. One important reason for this choice is that the doubling dimension of this space equal its dimension as a vector space. Throughout the rest of this paper, without loss of generality, we let $d$ be an integer, and consider the metric space $ (\R^{ d }, \| \cdot \|_\infty) $. 

\begin{remark}
    By Assouad’s embedding
theorem, one can embed a separable metric space $(\X, \D)$ with doubling number $N$ into a Euclidean space with some distortion, hence our research works in general doubling metric space. 
\end{remark}

After settling the metric space to work with, we still need to overcome  previously unencountered challenges. To further illustrate these challenges, let us review the lower bound strategy for Lipschitz bandits. In proving the lower bound for Lipschitz bandits \citep{kleinberg2005nearly,kleinberg2008multi,bubeck2011x}, one essentially use the packing/covering number for the underlying space, and this packing number essentially serves as number of arms in the lower bound proof. For our problem, however, the lower bound argument for Lipschitz does not carry through. The reasons are: 
\begin{itemize} 
    \item First and foremost, a nondegenerate function may be discontinuous. Restriction to Lipschitz bandit instances rules out a large class of problem instances.
    \item More importantly, in the lower bound argument for Lipschitz bandits, one construct instances with small ``peaks'' in the domain.
    We then let the height of the peak to decrease with the total time horizon $T$, so that no algorithm can quickly find the peaks for all instances. However, for nondegenerate functions, the nondegenerate parameters do not depend on $T$. Therefore, we are not allowed to tweak the landscape of the instances as freely as previously done for Lipschitz bandits. 
\end{itemize} 

On top of the above challenges, we need to incorporate the communication pattern into the entire analysis. To tackle all these difficulties, we use a \emph{bitten-apple} trick in the instance construction. Specific examples of bitten-apple instances are shown in Figures \ref{fig:lower-Mk} and \ref{fig:lower-jk}.





\subsection{The instances} 
\label{sec:inst}

To formally define the instances, we first-of-all partition the space $\R^d$ into $2^d$ orthants $O_1, O_2,\cdots, O_{2^d}$. 
We represent the natural numbers $1,2,\cdots, 2^d$ by a sequence of $+/-$ signs. That is, for any $k = 1,2,\cdots, 2^d$, we use $\( s_1^k, \cdots, s_d^k \) \in \{ -1 , +1 \}^d $ to represent $k$. This representation is equivalent to writing $k$ as a base-two number. 
For $k = 1,2,\cdots, 2^d$ and a number $\epsilon \in (0,1)$, we define $ \x_{k,\epsilon}^* = \( s_1^k \epsilon, s_2^k \epsilon, \cdots, s_d^k \epsilon \) $. Clearly, $ \| \x_{k,\epsilon}^* \|_\infty = \epsilon $ for $k = 1,2,\cdots, 2^d$. As a convention, we let $ O_1 $ be the orthant associated with $ \( +, +, \cdots, +  \) $.  

Firstly, we introduce a sequence of reference communication points $\mathcal{T}_r=\{T_1,\cdots,T_M\}$ and the corresponding gaps $\{\epsilon_1^q,\cdots,\epsilon_M^q\}$, defined as  
\begin{align} 
    \begin{split}
        T_j &=\lfloor T^{\frac{1-2^{-j}}{1-2^{-M}}} \rfloor, \\
    \qquad 
    \epsilon_j^q &= \frac{1}{4} \cdot \frac{\sqrt{2}}{2} \cdot \frac{\sqrt{2^d-1}}{2^q+2} \cdot \frac{1}{M} \cdot T^{-\frac{1}{2} \cdot \frac{1-2^{1-j}}{1-2^{-M}}},    
    \end{split}
     \label{eq:def-eps-T}
\end{align} 
for $j =1,2,\cdots,M$. 
Then we construct collections of instances $\mathcal{I}_1,\cdots,\mathcal{I}_M$. Each instance is defined by a mean loss function $f$ and a noise distribution. For our purpose, we let the noise be standard Gaussian. That is, the observed loss samples at $\x$ are $iid$ from the Gaussian distribution $\mathcal{N} (f(\x), 1)$. 
For $1\leq j \leq M-1 $, we let $\mathcal{I}_j= \left\{ I_{j,k} \right\}_{k=1}^{2^d-1}$ and the expected loss function of $I_{j,k}$ is defined as 
\begin{align} 
\label{eq:f_{j,k}} 
    f_{j,k}^{\epsilon_j} (\x) = 
    \begin{cases} 
        \| \x - \x_{k,\epsilon_j}^* \|_{\infty}^q - \| \x_{k,\epsilon_j}^*\|_{\infty}^q, &\\
        \qquad \text{if } \x \in \B (\x_{k,\epsilon_j}^*, \epsilon_j ) \backslash \B (0, \frac{\epsilon_j}{2}) , \\ 
        \| \x - \x_{2^d,\frac{\epsilon_M}{3}}^* \|_{\infty}^q - \| \x_{2^d,\frac{\epsilon_M}{3}}^*\|_{\infty}^q, &\\
        \qquad \text{if } \x \in \B (\x_{2^d,\frac{\epsilon_M}{3}}^*, \frac{\epsilon_M}{3} ) \backslash \B (0, \frac{\epsilon_M}{6}) , \\ 
        \| \x \|_{\infty}^q, \qquad \text{otherwise. } 
    \end{cases} 
\end{align} 
For $j=M$, we let $\mathcal{I}_M=\{ I_M\}$ and the expected loss function of $I_M$ is defined as 
\begin{align} 
\label{eq:f_{M,k}}
    f_{M,k}^{\epsilon_M} (\x)
    = 
    \begin{cases} 
      \| \x - \x_{2^d,\frac{\epsilon_M}{3}}^* \|_{\infty}^q - \| \x_{2^d,\frac{\epsilon_M}{3}}^*\|_{\infty}^q, &\\
      \qquad \text{if } \x \in \B (\x_{2^d,\frac{\epsilon_M}{3}}^*, \frac{\epsilon_M}{3} ) \backslash \B (0, \frac{\epsilon_M}{6}) , \\ 
        \| \x \|_{\infty}^q , \qquad \text{otherwise. } 
    \end{cases} 
\end{align} 

Note that $f_{M,k}^{\epsilon_M} (\x)$ is independent of $k$. Here we keep the subscript $k$ for notational consistency. Figures \ref{fig:lower-Mk} and \ref{fig:lower-jk} plot examples of $f_{j,k}^{\epsilon_j}$ and $f_{M,k}^{\epsilon_M}$. 

\begin{figure}[h!]
    \centering
    \begin{minipage}{0.48\textwidth}
        \centering
        \includegraphics[width = 0.8\textwidth]{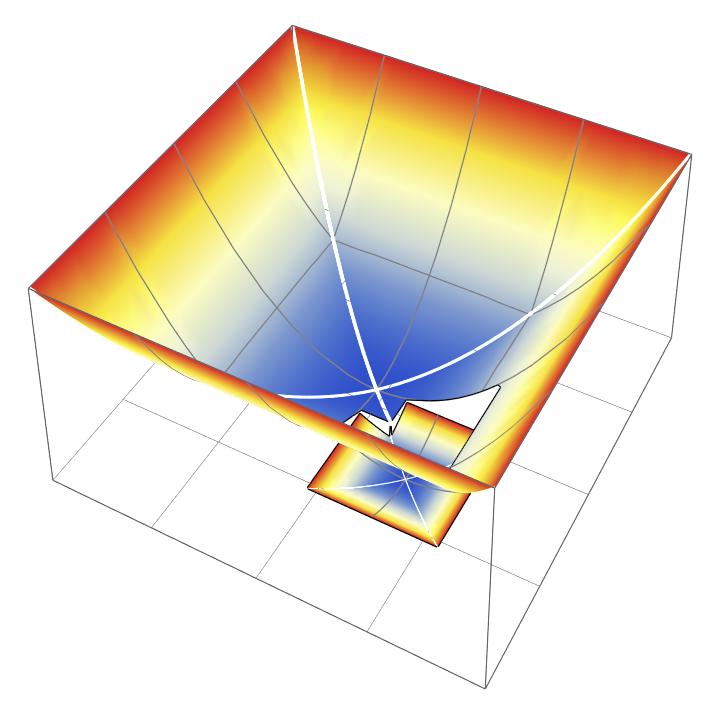}
    \end{minipage}
    \begin{minipage}{0.48\textwidth}
        \centering
        \includegraphics[width = 0.8\textwidth]{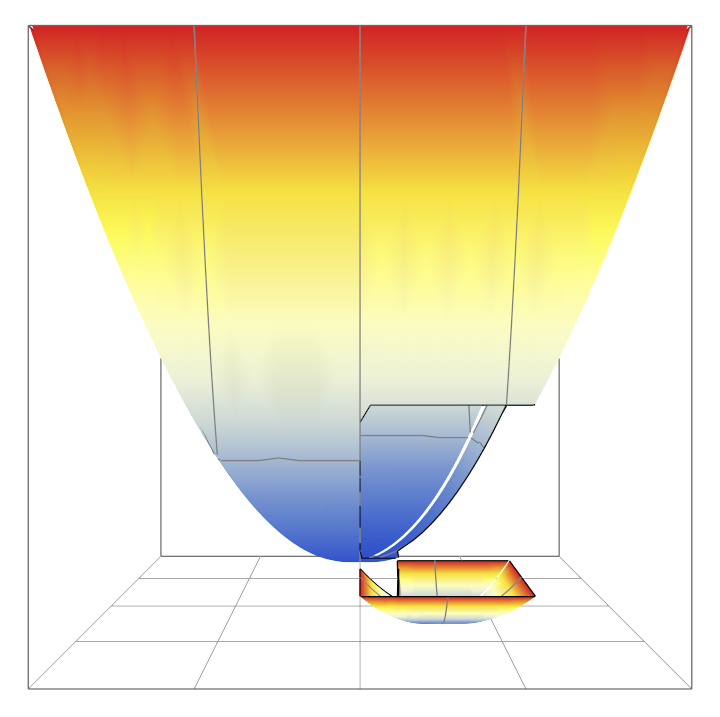}
    \end{minipage}
    \caption{Example plot of $f_{M,k}^{\epsilon_M} (\x)$ with $d=q=2$. The two graphs come from different views of the same function. 
    }
    \label{fig:lower-Mk}
\end{figure}

\begin{figure*}[h!]
    \centering
        \subfloat[$f_{j,1}^{\epsilon_j} (\x) $]{\includegraphics[width=0.32\textwidth ]{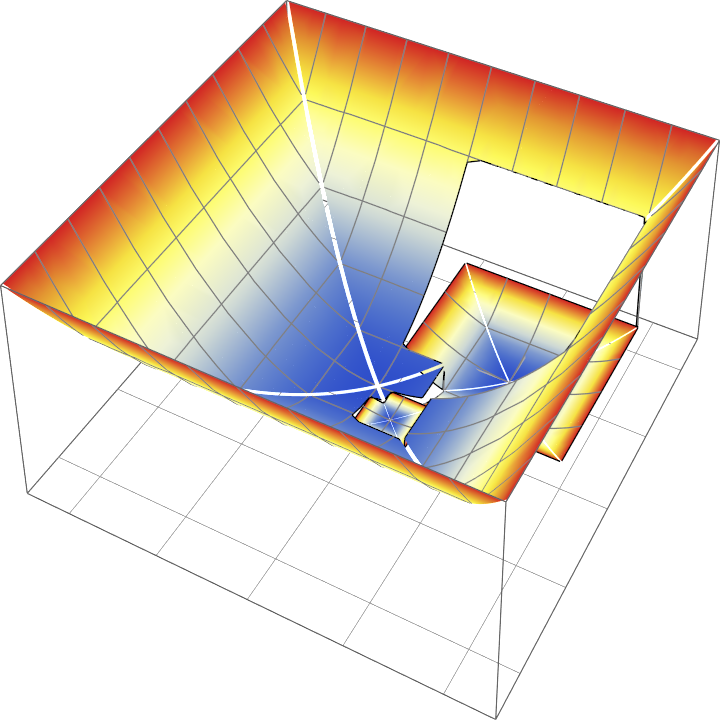} }
        \subfloat[$f_{j,2}^{\epsilon_j} (\x) $]{\includegraphics[width=0.32\textwidth]{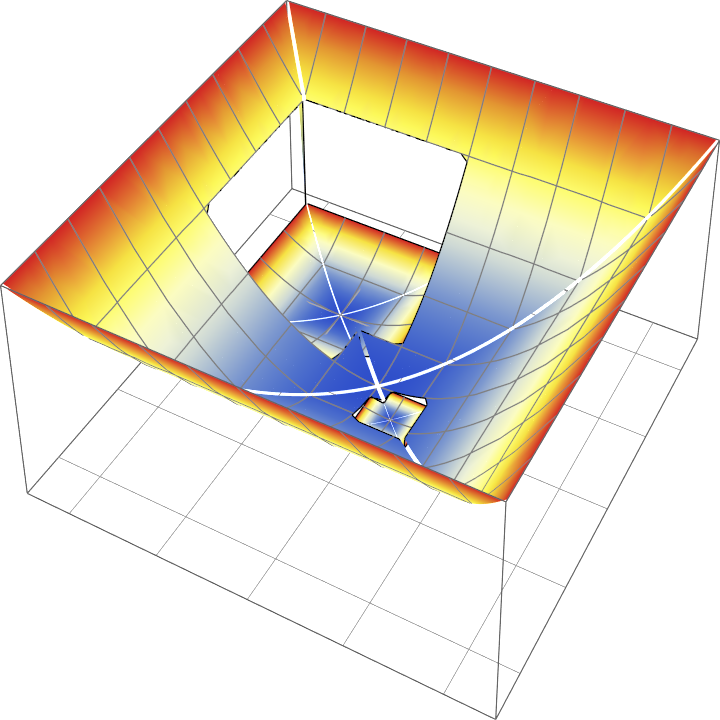} }
        \subfloat[$f_{j,3}^{\epsilon_j} ( \x ) $]{\includegraphics[width=0.32\textwidth]{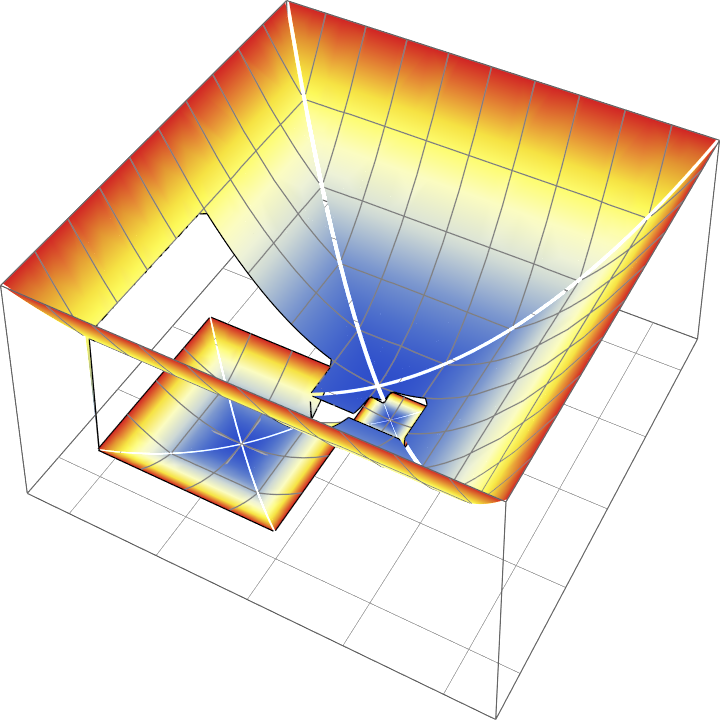} } 
    \caption{Example instance $f_{j,k}^{\epsilon_j} (\x)$ with $d=q=2$ for $1\leq j \leq M-1 $. The above three graphs from left to right show $f_{j,k}^{\epsilon_j}$ for $k=1,2,3$. }
    \label{fig:lower-jk}
\end{figure*}



On the basis of $ \{  f_{j,k} \}_{j \in [M], k \in [2^{d}-1]} $, we 
construct another series of problem instances $\{I_{j,k,l}\} _{j \in [M], k \in [2^d - 1],l \in [2^d]} $: 
\begin{itemize}[leftmargin = *]
    \item For $j < M$, $l \neq k$ and $l < 2^d$, the loss function of problems instance $I_{j,k,l}$ is defined as 
    \begin{align*} 
        &\; f_{j,k,l}^{\epsilon_j} (\x)  \\
        =
        &\; 
        \begin{cases} 
            \| \x - \x_{k,\epsilon_j}^* \|_{\infty}^q - \| \x_{k,\epsilon_j}^*\|_{\infty}^q, &\\
            \qquad \qquad \text{if } \x \in \B (\x_{k,\epsilon_j}^*, \epsilon_j ) \backslash \B (0, \frac{\epsilon_j}{2}) , \\ 
            \| \x - 2 ^\frac{1}{q} \cdot \x_{l,\epsilon_j}^* \|_{\infty}^q - \|2^\frac{1}{q} \cdot \x_{l,\epsilon_j}^*\|_{\infty}^q, & \\
            \qquad \qquad \text{if } \x \in \B (2^\frac{1}{q} \cdot \x_{l,\epsilon_j}^*, 2^\frac{1}{q} \cdot \epsilon_j ) \backslash \B (0, \frac{2^\frac{1}{q} \cdot \epsilon_j}{2}) , \\ 
           \| \x - \x_{2^d,\frac{\epsilon_M}{3}}^* \|_{\infty}^q - \| \x_{2^d,\frac{\epsilon_M}{3}}^*\|_{\infty}^q, & \\
           \qquad \qquad \text{if } \x \in \B (\x_{2^d,\frac{\epsilon_M}{3}}^*, \frac{\epsilon_M}{3} ) \backslash \B (0, \frac{\epsilon_M}{6}) , \\ 
            \| \x \|_{\infty}^q, \quad \text{otherwise}. 
        \end{cases} 
    \end{align*} 
    \item For $j < M$, $l = k < 2^d$, 
    we let $f_{j,k,l}^{\epsilon_j} (\x):=f_{j,k}^{\epsilon_j} (\x)$, which is defined in (\ref{eq:f_{j,k}}). 
    \item For $ j < M $, $k < 2^d$, and $l = 2^d$, we define 
    \begin{align*}
        &\; 
        f_{j,k,2^d}^{\epsilon_j} (\x) \\
        =& \;
        \begin{cases} 
            \| \x - \x_{k,\epsilon_j}^* \|_{\infty}^q - \| \x_{k,\epsilon_j}^*\|_{\infty}^q, & \\
            \qquad \qquad \text{if } \x \in \B (\x_{k,\epsilon_j}^*, \epsilon_j ) \backslash \B (0, \frac{\epsilon_j}{2}) , \\ 
            \| \x - 2 ^\frac{1}{q} \cdot \x_{2^d,\epsilon_j}^* \|_{\infty}^q - \|2^\frac{1}{q} \cdot \x_{2^d,\epsilon_j}^*\|_{\infty}^q, &\\
            \qquad \qquad \text{if } \x \in \B (2^\frac{1}{q} \cdot \x_{2^d,\epsilon_j}^*, 2^\frac{1}{q} \cdot \epsilon_j ) \backslash \B (0, \frac{2^\frac{1}{q} \cdot \epsilon_j}{2}) , \\ 
            \| \x \|_{\infty}^q, \quad \text{otherwise. } 
        \end{cases} 
    \end{align*} 
    \item For $j=M$, $k < 2^d$, and $l< 2^d $, the corresponding loss function is defined as
    \begin{align*}
        &\;f_{M,k,l}^{\epsilon_M} (\x) \\
        =& \;
        \begin{cases} 
            \| \x - 2 ^\frac{1}{q} \cdot \x_{l,\frac{\epsilon_M}{3}}^* \|_{\infty}^q - \| 2 ^\frac{1}{q} \cdot \x_{l,\frac{\epsilon_M}{3}}^*\|_{\infty}^q, & \\ \qquad \quad
             \text{if } \x \in \B (2 ^\frac{1}{q} \cdot \x_{l,\frac{\epsilon_M}{3}}^*,2^\frac{1}{q} \cdot \frac{\epsilon_M}{3} ) \backslash \B (0, \frac{2^\frac{1}{q} \cdot \epsilon_M}{6}) , \\ 
            \| \x - \x_{2^d,\frac{\epsilon_M}{3}}^* \|_{\infty}^q - \| \x_{2^d,\frac{\epsilon_M}{3}}^*\|_{\infty}^q, & \\ \qquad \quad
             \text{if } \x \in \B (\x_{2^d,\frac{\epsilon_M}{3}}^*, \frac{\epsilon_M}{3} ) \backslash \B (0, \frac{\epsilon_M}{6}) , \\ 
            \| \x \|_{\infty}^q, \qquad \text{otherwise. } 
        \end{cases}
    \end{align*}
    \item 
    For $j=M$, $k < 2^d$ and $l=2^d$, we define $f_{M,k,2^d}^{\epsilon_M} (\x):=f_{M,k}^{\epsilon_M} (\x)$. For the case where $j=M$, we keep the subscript $k$ for the same reason as in (\ref{eq:f_{M,k}}). 
\end{itemize}



Figure \ref{fig:fjkl} depicts the partitioning of space for the function $ f_{j,k,l}^{\epsilon_j} $ ($j< M, l < 2^d, l \neq k$). In orthant $ O_k $, $O_l$ and $O_{2^d}$, the function $ f_{j,k,l}^{\epsilon_j} $ differs from $ \| \x \|_\infty^q $ in a region of a bitten-apple shape.  

\begin{figure} 
    \centering 
    \includegraphics[width=3.25in]{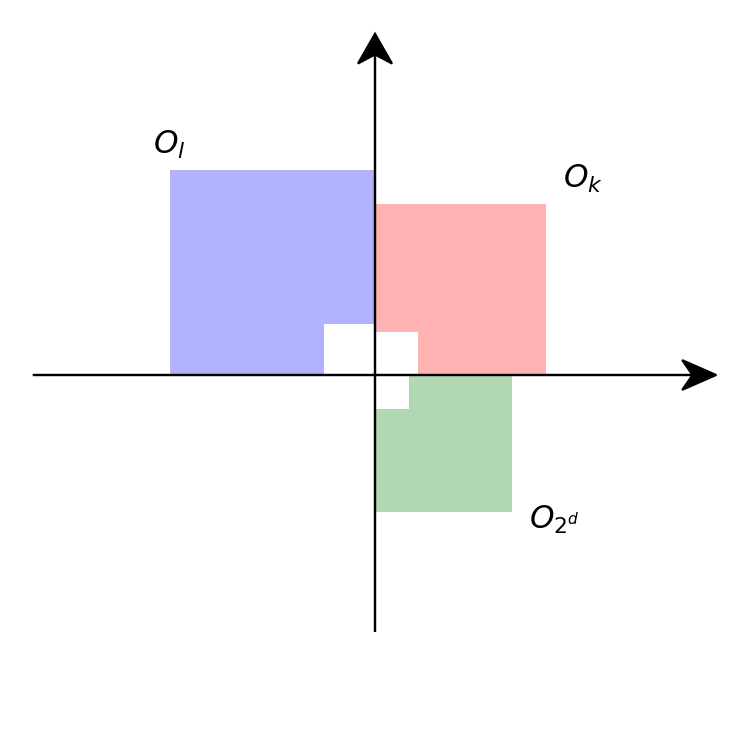}
    \caption{An illustration of how the space is partitioned for function $ f_{j,k,l}^{\epsilon_j} $. In some particular orthant, the function $ f_{j,k,l}^{\epsilon_j} $ differs from $\| \x \|_\infty^q$ in regions that resemble a bitten-apple shape. Such regions are illustrated as shaded areas in the figure. 
    \label{fig:fjkl}
    }
\end{figure}

First of all, we verify that these functions are nondegenerate functions.

\begin{proposition} 
    \label{prop:check-nondegen-1}
    The functions $ \{ f_{j,k}^{\epsilon_j} \}_{j \in [M], k \in [2^d-1]} $ 
    are nondegenerate with parameters independent of time horizon $T$, the doubling dimension $d$, and rounds of communications $M$. 
\end{proposition}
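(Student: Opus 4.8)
The plan is to verify directly the two conditions in Definition \ref{def:nondegen} for each function $f_{j,k}^{\epsilon_j}$. The key observation is that each $f_{j,k}^{\epsilon_j}$ is a piecewise modification of the ``baseline'' function $\|\x\|_\infty^q$, obtained by carving out one or two ``bitten-apple'' regions and replacing the baseline there by a shifted copy $\|\x-\x_0^*\|_\infty^q-\|\x_0^*\|_\infty^q$ centered at a nearby point $\x_0^*$. So I would first pin down the global minimizer: on the baseline part the function equals $\|\x\|_\infty^q\ge 0$ with equality only at $\x=0$, while inside each modified region the value is $\|\x-\x_0^*\|_\infty^q-\|\x_0^*\|_\infty^q$. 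I must check that the latter quantity is strictly positive throughout the region where it is used — i.e. on $\B(\x_0^*,\rho)\setminus\B(0,\rho/2)$ — because the region excludes a small ball around the origin, so $\|\x-\x_0^*\|_\infty$ stays bounded below by a fixed multiple of $\|\x_0^*\|_\infty$, and in fact this is exactly why the ``bite'' is taken out of the apple. This establishes that $\x^*=0$ is the unique minimizer with $f_{j,k}^{\epsilon_j}(0)=0$.

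Next I would establish the two-sided bound $\lambda\|\x\|_\infty^q\le f_{j,k}^{\epsilon_j}(\x)\le L\|\x\|_\infty^q$ with $q$ the given exponent and $L,\lambda$ absolute constants. On the baseline region this holds trivially with $\lambda=L=1$. On a modified region centered at $\x_0^*$ with radius $\rho$ (where $\rho\in\{\epsilon_j,\epsilon_M/3\}$ and $\|\x_0^*\|_\infty$ is comparable to $\rho$), I would use the triangle inequality for $\|\cdot\|_\infty$ together with the elementary inequalities relating $(a+b)^q$, $a^q$, $b^q$ (e.g. $(a+b)^q\le 2^{q-1}(a^q+b^q)$ and monotonicity) to sandwich $\|\x-\x_0^*\|_\infty^q-\|\x_0^*\|_\infty^q$ between constant multiples of $\|\x\|_\infty^q$. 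The point is that inside such a region $\|\x\|_\infty$ is itself comparable to $\rho$ (bounded above by $\|\x_0^*\|_\infty+\rho$ and below by $\rho/2$ since the inner ball is excised), and likewise $\|\x-\x_0^*\|_\infty\le 2\rho$ and $\ge$ a constant times $\rho$; hence both $f$ and $\|\x\|_\infty^q$ are comparable to $\rho^q$ there, which gives the bound with absolute constants. I would do this once for a generic carved region and then note it applies verbatim to each of the (at most two) regions appearing in the definition, and that the constants obtained are the worst case over these finitely many pieces.

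Finally I would observe that nothing in the above depends on $T$, $d$, or $M$: the constants $L,\lambda$ come purely from the geometry of the excised regions (ratios like $\rho$ to $\|\x_0^*\|_\infty$ to the inner excluded radius, all fixed numerically in \eqref{eq:f_{j,k}}) and from the fixed exponent $q$, while the scale parameters $\epsilon_j,\epsilon_M$ only set the overall size $\rho$ and cancel out of the comparison $f\asymp\|\x\|_\infty^q$. I would present this as a short lemma-style computation and then conclude. The main obstacle I anticipate is the bookkeeping at the \emph{boundaries} of the carved regions — one must check that the piecewise definition is consistent with a single nondegenerate sandwich, in particular that at the inner boundary $\partial\B(0,\rho/2)$ and the outer boundary $\partial\B(\x_0^*,\rho)$ the shifted-paraboloid value does not dip below $\lambda\|\x\|_\infty^q$ or exceed $L\|\x\|_\infty^q$; this is where the specific choice of the ``bite'' radius (half the apple radius, and centered at $0$) is essential, and is precisely the design reason for the bitten-apple construction. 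I expect the verification to reduce, after the triangle-inequality estimates, to checking a handful of scalar inequalities in the ratio $\|\x\|_\infty/\rho\in[1/2,\,1+\|\x_0^*\|_\infty/\rho]$, which are routine.
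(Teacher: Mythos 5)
There is a genuine error at the very first step of your plan: you misidentify the global minimizer. You claim that inside each carved region the value $\|\x-\x_0^*\|_\infty^q-\|\x_0^*\|_\infty^q$ is strictly positive ``because the region excludes a small ball around the origin,'' and conclude that $\x^*=0$ is the unique minimizer with $f_{j,k}^{\epsilon_j}(0)=0$. But the excised ball $\B(0,\epsilon_j/2)$ is centered at the \emph{origin}, not at the apple's center, so it bounds $\|\x\|_\infty$ from below, not $\|\x-\x_0^*\|_\infty$. The center $\x_{k,\epsilon_j}^*$ itself (which satisfies $\|\x_{k,\epsilon_j}^*\|_\infty=\epsilon_j>\epsilon_j/2$) survives the bite and lies in the first region, where $f_{j,k}^{\epsilon_j}(\x_{k,\epsilon_j}^*)=-\epsilon_j^q<0$. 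So the unique minimizer of $f_{j,k}^{\epsilon_j}$ is $\x_{k,\epsilon_j}^*$, not $0$ --- indeed the entire point of the lower-bound construction is to hide the optimum inside the bitten apple in orthant $O_k$. Consequently the sandwich you propose, $\lambda\|\x\|_\infty^q\le f_{j,k}^{\epsilon_j}(\x)\le L\|\x\|_\infty^q$, is both the wrong statement (Definition \ref{def:nondegen} requires the bounds in terms of $\D(\x,\x^*)$ with $\x^*$ the true minimizer) and false as written: the lower bound already fails at $\x=\x_{k,\epsilon_j}^*$, where the left side is $\lambda\epsilon_j^q>0$ while $f_{j,k}^{\epsilon_j}(\x_{k,\epsilon_j}^*)=-\epsilon_j^q$. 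The role of the bite is not to keep the dip nonnegative; it is to make the instances agree outside the apples and keep the nondegenerate ratios absolute.

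What the paper actually does is verify $\lambda\|\x-\x_{k,\epsilon_j}^*\|_\infty^q\le f_{j,k}^{\epsilon_j}(\x)-f_{j,k}^{\epsilon_j}(\x_{k,\epsilon_j}^*)\le L\|\x-\x_{k,\epsilon_j}^*\|_\infty^q$ by cases: on the first apple the identity is exact; on the second apple (around $\x_{2^d,\epsilon_M/3}^*$) it uses $\epsilon_M\le\epsilon_j$ and elementary estimates to get constants like $9^{-q}$ and $(2^q+1)^2$; and on the remainder of the domain it uses convexity of $\|\cdot\|_\infty^q$ and Jensen's inequality (giving $2^{-(q-1)}$ below) together with $\|\x_{k,\epsilon_j}^*\|_\infty\le 2\|\x-\x_{k,\epsilon_j}^*\|_\infty$ above. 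Your second-stage comparability heuristic ($f$ and the distance both of order $\rho^q$ on a carved region) also breaks down for the second apple, since there $\|\x-\x_{k,\epsilon_j}^*\|_\infty$ is of order $\epsilon_j$ while the local radius is $\epsilon_M/3$, which can be much smaller; the correct argument must compare against the distance to $\x_{k,\epsilon_j}^*$ across regions of very different scales, which is exactly what the paper's case analysis handles. You would need to redo your proof around the correct minimizer before any of the remaining steps can be salvaged.
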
 

\begin{proof}[Proof of Proposition \ref{prop:check-nondegen-1}]
    We first consider $f_{j,k}^{\epsilon_j}$. Note that the minimum of $f_{j,k}^{\epsilon_j}$ is obtained at $\x_{k,\epsilon_j}^*$. 
    
    \textbf{The lower bound: } 
    
    For $ \x \in \B (\x_{k,\epsilon_j}^*, \epsilon_j ) \backslash \B (0, \frac{\epsilon_j}{2}) $, we have $ f_{j,k}^{\epsilon_j} ( \x ) - f_{j,k}^{\epsilon_j} ( \x_{k,\epsilon_j}^* ) = \| \x - \x_{k,\epsilon_j}^* \|_\infty^q $, which clearly satisfies the nondegenerate condition. 
    
    
    For $ \x \in \B (\x_{2^d,\frac{\epsilon_M}{3}}^*, \frac{\epsilon_M}{3} ) \backslash \B (0, \frac{\epsilon_M}{6}) $, since $ \epsilon_M \le \epsilon_j $ for all $j = 1,2,\cdots,M$, we have, 
    \begin{align*} 
        &\| \x - \x_{k,\epsilon_j}^* \|_\infty^q 
        \le 
        \( 2 \| \x_{2^d, \frac{\epsilon_M}{3} }^* \|_\infty +  \| \x_{k,\epsilon_j}^* \|_\infty \)^q \\
        \le& 
        3^q \| \x_{k,\epsilon_j}^* \|_\infty^q 
        \le  
        3^q \cdot 3^q \( \| \x_{k,\epsilon_j}^* \|_\infty - \| \x_{2^d, \frac{\epsilon_M}{3} }^* \|_\infty \)^q \\
        \le&   
        9^q \( \| \x_{k,\epsilon_j}^* \|_\infty^q - \| \x_{2^d, \frac{\epsilon_M}{3} }^* \|_\infty^q + \| \x - \x_{2^d, \frac{\epsilon_M}{3} }^* \|_\infty^q \) \\
        =&  
        9^q \( f_{j,k}^{\epsilon_j} (\x) - f_{j,k}^{\epsilon_j} ( \x_{k,\epsilon_j}^* ) \) . 
    \end{align*} 


    For $\x$ in other parts of the domain, we have 
    \begin{align*} 
        &\; f_{j,k}^{\epsilon_j} (\x) - f_{j,k}^{\epsilon_j} ( \x_{k,\epsilon_j}^* ) \\
        =&\;  
        \| \x \|_{\infty}^q + \| \x_{k,\epsilon_j}^* \|_\infty^q 
        \ge
        \frac{1}{2^{q-1}} \| \x - \x_{k,\epsilon_j}^* \|_\infty^q , 
    \end{align*} 
    where the last inequality uses convexity of $ \| \cdot \|_\infty^q $ and Jensen's inequality. 

    \textbf{The upper bound: } 
    
    For $ \x \in \B (\x_{k,\epsilon_j}^*, \epsilon_j ) \backslash \B (0, \frac{\epsilon_j}{2}) $, the nondegenerate condition holds true. 

    For $ \x \in \B (\x_{2^d,\frac{\epsilon_M}{3}}^*, \frac{\epsilon_M}{3} ) \backslash \B (0, \frac{\epsilon_M}{6}) $, 
    \begin{align*}
        &\; f_{j,k}^{\epsilon_j} (\x) - f_{j,k}^{\epsilon_j} (\x_{k,\epsilon_j}^*) \\
        =&\;  
        \| \x - \x_{2^d,\epsilon_M}^* \|_\infty^q + \epsilon_j^q - \( \frac{\epsilon_M}{3} \)^q \\ 
        \le& \;  
        2^{q-1} \( \| \x - \x_{k,\epsilon_j}^* \|_\infty^q + \| \x_{k,\epsilon_j}^* - \x_{2^d,\frac{\epsilon_M}{3}}^* \|_\infty^q \) + \epsilon_j^q \\ 
        =& \;  
        2^{q-1} \| \x - \x_{k,\epsilon_j}^* \|_\infty^q + 2^{q-1} \( \epsilon_j + \frac{\epsilon_M}{3} \)^q + \epsilon_j^q \\ 
        \le&\; 
        \( 2^q + 1 \)^2 \| \x - \x_{k,\epsilon_j}^* \|_\infty^q ,  
    \end{align*} 
    where the inequality on the first line uses convexity of $ \| \cdot \|_\infty^q $ and Jensen's inequality. 

    For $ \x $ in other parts of the domain, we have 
    \begin{align*} 
        &\; f_{j,k}^{\epsilon_j} (\x) - f_{j,k}^{\epsilon_j} ( \x_{k,\epsilon_j}^* ) 
        = 
        \| \x \|_{\infty}^q + \| \x_{k,\epsilon_j}^* \|_\infty^q \\
        \le&\;  
        2^{q-1} \( \| \x - \x_{k,\epsilon_j}^* \|_{\infty}^q + \| \x_{k,\epsilon_j}^* \|_\infty^q  \) + \| \x_{k,\epsilon_j}^* \|_\infty^q . 
    \end{align*} 
    Since $ \| \x_{k,\epsilon_j}^* \|_\infty \le 2 \| \x - \x_{k,\epsilon_j}^* \|_\infty $ for $\x \notin \( \B (\x_{k,\epsilon_j}^*, \epsilon_j ) \backslash \B (0, \frac{\epsilon_j}{2}) \)$, we continue from the above inequality and get 
    \begin{align*} 
        f_{j,k}^{\epsilon_j} (\x) - f_{j,k}^{\epsilon_j} ( \x_{k,\epsilon_j}^* ) 
        \le 
        \( 2^q + 1 \)^2 \| \x - \x_{k, \epsilon_j}^* \|_\infty^q . 
    \end{align*} 

    Following the same procedure, we can check that the nondegenerate condition holds true for the function $f_{M,k}^{\epsilon_M}$. 
    
    
\end{proof}

\begin{proposition} 
    \label{prop:check-nondegen-2}
    The functions $ \{ f_{j,k,l}^{\epsilon_j} \}_{j \in [M], k \in [2^d - 1], l \in [2^d]} $ 
    are nondegenerate with parameters independent of time horizon $ T $, the doubling dimension $ d $, and rounds of communications $M$. 
\end{proposition}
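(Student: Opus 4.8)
The plan is to reuse the region-by-region template from the proof of Proposition \ref{prop:check-nondegen-1}, preceded by two bookkeeping reductions. First, for $l=k<2^d$ we have $f_{j,k,k}^{\epsilon_j}=f_{j,k}^{\epsilon_j}$ by definition, and for $j=M,\ l=2^d$ we have $f_{M,k,2^d}^{\epsilon_M}=f_{M,k}^{\epsilon_M}$; both are already covered by Proposition \ref{prop:check-nondegen-1}, so I may assume we are in one of the genuinely new cases. The second, conceptually central, reduction is to identify the minimizer: among the (two or three) ``bites'' that $f_{j,k,l}^{\epsilon_j}$ carves out of $\|\x\|_\infty^q$, the deepest is always the bite whose center is scaled by $2^{1/q}$. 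Thus $\x^*=2^{1/q}\x_{l,\epsilon_j}^*$ with $f(\x^*)=-(2^{1/q}\epsilon_j)^q=-2\epsilon_j^q$ when $j<M$, and $\x^*=2^{1/q}\x_{l,\epsilon_M/3}^*$ with $f(\x^*)=-2(\epsilon_M/3)^q$ when $j=M$. Uniqueness follows because on the bite containing $\x^*$ one has $f(\x)=\|\x-\x^*\|_\infty^q-\|\x^*\|_\infty^q$, which strictly exceeds $f(\x^*)$ off the center, whereas the shallower bites have infimum $-\epsilon_j^q$ (the $O_k$ bite), $-(\epsilon_M/3)^q$ (the $O_{2^d}$ bite), and the rest of the domain has $f=\|\x\|_\infty^q\ge0$; all of these are strictly above $-2\epsilon_j^q$, using $\epsilon_M\le\epsilon_j$ to compare scales.

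Next I would verify $\lambda\,\|\x-\x^*\|_\infty^q\le f(\x)-f(\x^*)\le L\,\|\x-\x^*\|_\infty^q$ region by region. On the bite containing $\x^*$ the difference is exactly $\|\x-\x^*\|_\infty^q$, so $\lambda=L=1$ there. On any other bite, with center $c$ in an orthant $O_m$, $m\ne l$, the difference equals $\|\x-c\|_\infty^q$ plus an additive constant of size $\Theta(\epsilon_j^q)$ (respectively $\Theta((\epsilon_M/3)^q)$ when $j=M$), and since $\|\x-c\|_\infty$ is at most that bite's radius, $f(\x)-f(\x^*)=\Theta(\epsilon_j^q)$ uniformly over the bite; meanwhile $\|\x-\x^*\|_\infty=\Theta(\epsilon_j)$ because $\x$ lies in $\bar O_m$ while $\x^*\in O_l$ with $m\ne l$ (see the next paragraph), so the ratio is pinned between constants. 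On the ``otherwise'' region, $f(\x)-f(\x^*)=\|\x\|_\infty^q+\|\x^*\|_\infty^q$; the lower bound is immediate from $\|\x-\x^*\|_\infty^q\le(\|\x\|_\infty+\|\x^*\|_\infty)^q\le 2^{q-1}(\|\x\|_\infty^q+\|\x^*\|_\infty^q)$, and for the upper bound I would use that every point lying outside the bite containing $\x^*$ is at $\ell_\infty$-distance at least half that bite's radius from $\x^*$ — and that radius equals $\|\x^*\|_\infty$ — so $\|\x^*\|_\infty\le 2\|\x-\x^*\|_\infty$, after which $\|\x\|_\infty^q+\|\x^*\|_\infty^q\le 2^{q-1}\|\x-\x^*\|_\infty^q+(2^{q-1}+1)\|\x^*\|_\infty^q$ is controlled. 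Taking $\lambda$ and $L$ to be the min and max of the finitely many constants produced yields nondegenerate parameters depending only on $q$ — in particular not on $T$, $d$, or $M$.

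The one step that genuinely needs the choice of metric, and which I expect to be the crux, is the claim $\|\x-\x^*\|_\infty\ge\|\x^*\|_\infty$ whenever $\x\in\bar O_m$ and $\x^*\in O_l$ with $m\ne l$. Since $m\ne l$, the sign vectors $(s_1^m,\dots,s_d^m)$ and $(s_1^l,\dots,s_d^l)$ disagree in some coordinate $i_0$; there $\x^*_{i_0}$ has sign $s_{i_0}^l$ with $|\x^*_{i_0}|=\|\x^*\|_\infty$, while $\x_{i_0}$ has the opposite sign or is zero, so $|\x_{i_0}-\x^*_{i_0}|\ge|\x^*_{i_0}|=\|\x^*\|_\infty$, and this single coordinate already realizes a lower bound on $\|\x-\x^*\|_\infty$. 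This is exactly where working over $(\R^d,\|\cdot\|_\infty)$ pays off: in the Euclidean norm the number of disagreeing coordinates could be as large as $d$, injecting a $\sqrt d$ factor, whereas in $\|\cdot\|_\infty$ one coordinate suffices and the bound is dimension-free. The matching upper estimates $\|\x-\x^*\|_\infty\le C\epsilon_j$ on the bites, and the ``distance $\ge$ half-radius'' fact used for the ``otherwise'' region, are routine triangle-inequality computations once one records the radii of the three bites and uses $\epsilon_M\le\epsilon_j$. I expect the most tedious (though not difficult) part to be matching each of the five displayed sub-cases in the definition of $f_{j,k,l}^{\epsilon_j}$ to the right collection of bites and tracking the $\Theta$-constants through it; the conceptual content is entirely in locating $\x^*$ and in the sign-disagreement estimate above.
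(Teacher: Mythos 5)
Your proposal is correct and follows essentially the same route as the paper: identify $2^{1/q}\x_{l,\epsilon_j}^*$ (resp. $2^{1/q}\x_{l,\epsilon_M/3}^*$) as the minimizer, verify the bound directly on the new bite via triangle/Jensen estimates in $\|\cdot\|_\infty$, and handle the remaining regions as in Proposition \ref{prop:check-nondegen-1}. Your explicit sign-disagreement argument is just a spelled-out version of the inequality $\epsilon_j \le \| \x - 2^{1/q}\x_{l,\epsilon_j}^*\|_\infty$ that the paper invokes, and your region-by-region treatment of the other bites and the ``otherwise'' region is, if anything, more careful than the paper's appeal to Proposition \ref{prop:check-nondegen-1}, since the minimizer there differs from the true minimizer of $f_{j,k,l}^{\epsilon_j}$.
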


\begin{proof}[Proof of Proposition \ref{prop:check-nondegen-2}]
    For $j \leq M-1$,
    first, consider $l < 2^d$. 
    For $\x \in \B (\x_{k,\epsilon_j}^*, \epsilon_j ) \backslash \B (0, \frac{\epsilon_j}{2})$, we have 
    \begin{align*} 
        & \; f_{j,k,l}^{\epsilon_j} ( \x ) - f_{j,k,l}^{\epsilon_j} (2^{1/q} \cdot\x_{l,\epsilon_j}^* ) \\ 
        =&\;  
        \| \x - \x_{k,\epsilon_j}^* \|_{\infty}^q - \| \x_{k,\epsilon_j}^*\|_{\infty}^q + \| 2^{1/q}\cdot \x_{l, \epsilon_j}^*\|_{\infty}^q \\ 
        \ge& \;  
        \| \x_{l,\epsilon_j}^* \|_\infty^q 
        \ge 
        \( \frac{1}{6} \| \x - 2^{1/q}\cdot\x_{l, \epsilon_j}^* \|_\infty \)^q , 
    \end{align*} 
    and 
    \begin{align*} 
        &\; f_{j,k,l}^{\epsilon_j} ( \x ) - f_{j,k,l}^{\epsilon_j} (2^{1/q} \cdot\x_{l,\epsilon_j}^* ) \\
        =&\;  
        \| \x - \x_{k,\epsilon_j}^* \|_{\infty}^q - \| \x_{k,\epsilon_j}^*\|_{\infty}^q + \| 2^{1/q}\cdot \x_{l, \epsilon_j}^*\|_{\infty}^q \\ 
        \le& \; 
        2^{q-1} \| \x - 2^{1/q} \cdot\x_{l,\epsilon_j}^* \|_\infty^q + 2^{q-1} \| \x_{k,\epsilon_j}^* - 2^{1/q} \cdot\x_{l, \epsilon_j}^* \|_\infty^q \\
        & - \| \x_{k,\epsilon_j}^*\|_{\infty}^q + \| 2^{1/q} \cdot\x_{l, \epsilon_j}^*\|_{\infty}^q \\ 
        \le & \; 
        2^{q-1} \| \x - 2^{1/q} \x_{l, \epsilon_j}^* \|_\infty^q + 2^{q-1}\cdot 3^q \epsilon_j^q - \epsilon_j^q + 2 \epsilon_j^q \\
        \le&\;  
        \(3^q+1\)^2 \| \x - 2^{1/q} \x_{l, \epsilon_j}^* \|_\infty^q , 
    \end{align*} 
    where the last inequality uses that $ \epsilon_j \le \| \x - 2^{1/q} \cdot \x_{l, \epsilon_j }^* \|_\infty $.

    For $\x$ in other parts of the domain, we use Proposition \ref{prop:check-nondegen-1}. 
    For the case where $l = 2^d$, we also apply Proposition \ref{prop:check-nondegen-1}. 

    For $j=M$, the proof follows analogously. 


\end{proof} 

In addition, we prove that the loss functions we construct satisfy the following properties. 

\begin{proposition}
    \label{prop:check-ins-gap-1}
    For any $j = 1,2,\cdots, M -1 $ and $k = 1,2,\cdots,2^d - 1$, it holds that 
    \begin{align*}
        &\; \left| f_{j,k}^{\epsilon_j} (\x) - f_{M,k}^{\epsilon_M} \(\x\) \right| \\ 
        \le& \;  
        \begin{cases}
            (2^q + 2) \epsilon_j^q , & \text{if } \x \in \B (\x_{k,\epsilon_j}^*, \epsilon_j ) \backslash \B (0, \frac{\epsilon_j}{2}) , \\ 
            0, & \text{otherwise}. 
        \end{cases}
    \end{align*}
\end{proposition}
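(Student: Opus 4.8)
The plan is to compare $f_{j,k}^{\epsilon_j}$ and $f_{M,k}^{\epsilon_M}$ piece by piece, following the case split in their definitions \eqref{eq:f_{j,k}} and \eqref{eq:f_{M,k}}. First I would observe that the only difference between the two functions is the ``bitten apple'' carved around $\x_{k,\epsilon_j}^*$ in orthant $O_k$: the function $f_{M,k}^{\epsilon_M}$ equals $\|\x\|_\infty^q$ on the region $\B(\x_{k,\epsilon_j}^*,\epsilon_j)\backslash\B(0,\tfrac{\epsilon_j}{2})$, whereas $f_{j,k}^{\epsilon_j}$ equals $\|\x-\x_{k,\epsilon_j}^*\|_\infty^q-\|\x_{k,\epsilon_j}^*\|_\infty^q$ there. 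Everywhere else (the small bitten apple around $\x_{2^d,\epsilon_M/3}^*$, and the generic part where both equal $\|\x\|_\infty^q$) the two functions agree identically, which immediately gives the ``otherwise'' case with bound $0$. One should double-check that the region $\B(\x_{2^d,\epsilon_M/3}^*,\tfrac{\epsilon_M}{3})\backslash\B(0,\tfrac{\epsilon_M}{6})$ does not intersect $\B(\x_{k,\epsilon_j}^*,\epsilon_j)\backslash\B(0,\tfrac{\epsilon_j}{2})$ for $k<2^d$, so the two modified regions are genuinely disjoint — this follows because $\x_{k,\epsilon_j}^*$ lies in orthant $O_k$ while $\x_{2^d,\epsilon_M/3}^*$ lies in orthant $O_{2^d}\neq O_k$, and the radii are small relative to $\epsilon_j$.

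The substantive case is $\x\in\B(\x_{k,\epsilon_j}^*,\epsilon_j)\backslash\B(0,\tfrac{\epsilon_j}{2})$. Here
\begin{align*}
    \left|f_{j,k}^{\epsilon_j}(\x)-f_{M,k}^{\epsilon_M}(\x)\right|
    =\left|\;\|\x-\x_{k,\epsilon_j}^*\|_\infty^q-\|\x_{k,\epsilon_j}^*\|_\infty^q-\|\x\|_\infty^q\;\right|.
\end{align*}
On this region $\|\x-\x_{k,\epsilon_j}^*\|_\infty\le\epsilon_j$ and $\|\x_{k,\epsilon_j}^*\|_\infty=\epsilon_j$, so $\|\x\|_\infty\le\|\x-\x_{k,\epsilon_j}^*\|_\infty+\|\x_{k,\epsilon_j}^*\|_\infty\le 2\epsilon_j$, hence $\|\x\|_\infty^q\le 2^q\epsilon_j^q$. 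Bounding each of the three nonnegative terms separately: $\|\x-\x_{k,\epsilon_j}^*\|_\infty^q\le\epsilon_j^q$, $\|\x_{k,\epsilon_j}^*\|_\infty^q=\epsilon_j^q$, and $\|\x\|_\infty^q\le 2^q\epsilon_j^q$. Since the expression inside the absolute value is a difference of a quantity in $[0,\epsilon_j^q]$ and a quantity in $[0,(2^q+1)\epsilon_j^q]$ — actually it is cleanest to note the whole thing lies between $-(2^q+1)\epsilon_j^q$ and $\epsilon_j^q$, so its absolute value is at most $(2^q+1)\epsilon_j^q\le(2^q+2)\epsilon_j^q$. This gives the claimed bound.

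I do not expect any genuine obstacle here; the proposition is a direct unwinding of the definitions combined with the triangle inequality for $\|\cdot\|_\infty$ and elementary monotonicity of $t\mapsto t^q$. The one place requiring a little care is confirming that the two ``bitten'' regions of $f_{j,k}^{\epsilon_j}$ and the single bitten region of $f_{M,k}^{\epsilon_M}$ overlap only where it is claimed, so that outside $\B(\x_{k,\epsilon_j}^*,\epsilon_j)\backslash\B(0,\tfrac{\epsilon_j}{2})$ the two functions truly coincide; this is a matter of checking orthant membership and comparing radii $\epsilon_j$ versus $\epsilon_M/3\le\epsilon_j/3$, which is routine. The slightly loose constant $2^q+2$ (rather than $2^q+1$) presumably leaves room for the same bound to be quoted uniformly, so I would simply carry the cruder estimate through.
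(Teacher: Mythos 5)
Your proposal is correct and follows essentially the same route as the paper: on $\B(\x_{k,\epsilon_j}^*,\epsilon_j)\backslash\B(0,\tfrac{\epsilon_j}{2})$ bound $\bigl|\,\|\x-\x_{k,\epsilon_j}^*\|_\infty^q-\|\x_{k,\epsilon_j}^*\|_\infty^q-\|\x\|_\infty^q\,\bigr|$ term by term via $\|\x\|_\infty\le 2\epsilon_j$, and note that the two functions coincide elsewhere. Your extra remark on checking that the bitten regions are disjoint (and the slightly sharper constant $2^q+1$) is fine but does not change the argument.
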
 

\begin{proof} 
    For $\x \in \B (\x_{k,\epsilon_j}^*, \epsilon_j ) \backslash \B (0, \frac{\epsilon_j}{2}) $, it holds that 
    \begin{align*} 
        &\; \left| f_{j,k}^{\epsilon_j} ( \x ) - f_{M,k}^{\epsilon_M} \(\x\) \right| \\
        =& \;  
        \left| \| \x - \x_{k,\epsilon_j}^* \|_{\infty}^q - \| \x_{k,\epsilon_j}^*\|_{\infty}^q - \| \x \|_\infty^q \right| \\
        \le& \;  
        \epsilon_j^q + \epsilon_j^q + 2^q \epsilon_j^q 
        = 
        (2^q + 2) \epsilon_j^q . 
    \end{align*} 
    For $\x \notin \B (\x_{k,\epsilon_j}^*, \epsilon_j ) \backslash \B (0, \frac{\epsilon_j}{2}) $, $ f_{j,k}^{\epsilon_j} (\x) $ is identical to $ f_{M,k}^{\epsilon_M} \(\x\) $. This concludes the proof. 
\end{proof}

Now for simplicity, we introduce the following notation: For $k = 1,2,\cdots, 2^d$, define 
\begin{align*} 
    S_k^\epsilon 
    := 
        \B (\x_{k,\epsilon}^*, \epsilon )  .   
\end{align*}

\begin{proposition}
    \label{prop:check-ins-gap-2}

    It holds that 
    \begin{itemize}
        \item If $ j < M $, $ k < 2^d $ and $l \neq k$
        \begin{align*}
            | f_{j,k,l}^{\epsilon_j} (\x) - f_{j,k,k}^{\epsilon_j}(\x) | 
            \le  
            \begin{cases}
                2(2^q + 2) \epsilon_j^q, & \text{if } \x \in S_l^{2^{\frac{1}{q}} \epsilon_j } 
                \\ 
                0, & \text{otherwise}. 
            \end{cases}
        \end{align*}
        \item 
        Also, if $ k < 2^d $ and $l < 2^d$, 
        \begin{align*} 
            | f_{M,k,l}^{\epsilon_M} (\x) \! - \! f_{M,k,2^d}^{\epsilon_M}(\x) | 
            \! \le \!
            \begin{cases}  
                2(2^q \!+\! 2) \epsilon_M^q, & \text{if } \x \! \in \! S_l^{2^{\frac{1}{q}} \epsilon_M } 
                \\
                0, & \text{otherwise}. 
            \end{cases} 
        \end{align*} 
        \item On instance $I_{j,k,l}$ ($j \in [M], k \in [2^d-1], l \in[2^d]$), pulling an arm that is not in $ S_l^{2^{1/q} \epsilon_j } $ incurs a regret no smaller than $\frac{\epsilon_j^q}{3^q}$. 
    \end{itemize} 
\end{proposition}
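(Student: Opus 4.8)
The plan is to handle the three bullets in turn, each by a direct computation comparing the piecewise definitions of the relevant loss functions, exploiting that all the pieces are of the form $\|\x - \z\|_\infty^q - \|\z\|_\infty^q$ for various centers $\z$ and that the functions differ only on prescribed ``bitten-apple'' regions. For the first bullet, recall that $f_{j,k,l}^{\epsilon_j}$ and $f_{j,k,k}^{\epsilon_j} = f_{j,k}^{\epsilon_j}$ agree everywhere except on $\B(2^{1/q}\x_{l,\epsilon_j}^*, 2^{1/q}\epsilon_j)\backslash \B(0, \tfrac{2^{1/q}\epsilon_j}{2})$, which is contained in $S_l^{2^{1/q}\epsilon_j}$; outside this set the difference is identically $0$, giving the second case immediately. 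On this set, $f_{j,k,l}^{\epsilon_j}(\x) = \|\x - 2^{1/q}\x_{l,\epsilon_j}^*\|_\infty^q - \|2^{1/q}\x_{l,\epsilon_j}^*\|_\infty^q$ while $f_{j,k,k}^{\epsilon_j}(\x) = \|\x\|_\infty^q$ (since such $\x$ lies in the ``otherwise'' region of $f_{j,k}^{\epsilon_j}$, as it sits in orthant $O_l$ with $l\ne k$, away from the region near $\x_{k,\epsilon_j}^*$ and from the $O_{2^d}$ bite). Then I bound each term: $\|\x - 2^{1/q}\x_{l,\epsilon_j}^*\|_\infty \le 2^{1/q}\epsilon_j$ so the first $q$-th power is at most $2\epsilon_j^q$; $\|2^{1/q}\x_{l,\epsilon_j}^*\|_\infty^q = 2\epsilon_j^q$; and $\|\x\|_\infty \le \|2^{1/q}\x_{l,\epsilon_j}^*\|_\infty + \|\x - 2^{1/q}\x_{l,\epsilon_j}^*\|_\infty \le 2\cdot 2^{1/q}\epsilon_j$, so $\|\x\|_\infty^q \le 2^q \cdot 2\epsilon_j^q$. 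Summing the absolute values of the contributions gives a bound of the form $(2 + 2 + 2^{q+1})\epsilon_j^q = 2(2^q+2)\epsilon_j^q$ as claimed. The second bullet is the $j = M$ analogue with $\epsilon_j$ replaced by $\epsilon_M/3$ and $f_{M,k,2^d}^{\epsilon_M} = f_{M,k}^{\epsilon_M}$; the computation is identical in structure, noting that the region $S_l^{2^{1/q}\epsilon_M}$ (scaled appropriately with the $\epsilon_M/3$ radii in the definition of $f_{M,k,l}^{\epsilon_M}$) is where the two differ and outside it they coincide.

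For the third bullet, I argue that on instance $I_{j,k,l}$ the minimizer $\x^*$ lies in $S_l^{2^{1/q}\epsilon_j}$ (it is $2^{1/q}\x_{l,\epsilon_j}^*$ for $j<M$, or $\x_{2^d,\epsilon_M/3}^*$ when $l=2^d$, etc.), and the loss value there is $-\|2^{1/q}\x_{l,\epsilon_j}^*\|_\infty^q = -2\epsilon_j^q$ (or $-(\epsilon_M/3)^q$ in the $l=2^d$ or $j=M$ cases). Any arm $\x \notin S_l^{2^{1/q}\epsilon_j}$ either falls in the ``otherwise'' region, where $f = \|\x\|_\infty^q \ge 0$ and one needs a lower bound on $\|\x\|_\infty$ coming from being outside the relevant ball, or in one of the other bite regions (the one near $\x_{k,\epsilon_j}^*$ or near $\x_{2^d,\epsilon_M/3}^*$), where $f(\x) = \|\x - \z\|_\infty^q - \|\z\|_\infty^q \ge -\|\z\|_\infty^q$; in every sub-case one checks the instantaneous regret $f(\x) - f(\x^*)$ is at least $\epsilon_j^q/3^q$ by comparing the relevant radii (the factor $3^q$ absorbs the ratio between $\epsilon_j$ and the sub-ball radii $\epsilon_j/2$, $\epsilon_M/3$, $\epsilon_M/6$ appearing in the excised inner balls). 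This is essentially a finite case check over the pieces of the piecewise definition.

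The main obstacle I anticipate is bookkeeping rather than any conceptual difficulty: one must be careful that the various ``bitten-apple'' regions (indexed by $k$, $l$, and the fixed $2^d$-orthant bite) are genuinely disjoint in the relevant orthants, so that on each region only one non-trivial piece is active and the comparison reduces to the two-term computation above; and one must track the scaling constants ($2^{1/q}$ on centers and radii, the $\epsilon_M/3$ versus $\epsilon_j$, the inner excised balls of radius a half/third/sixth) consistently so that the final constants come out to exactly $2(2^q+2)$ and $1/3^q$. I would organize the proof by first recording the disjointness of the bite regions as a preliminary observation, then doing the three bullets, reusing Proposition \ref{prop:check-ins-gap-1} wherever the comparison is between functions that already differ only through the $f_{j,k}^{\epsilon_j}$ vs $f_{M,k}^{\epsilon_M}$ mechanism.
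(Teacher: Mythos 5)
Your approach is the same as the paper's (direct piecewise comparison plus a finite case check), and your computations for the generic case of the first bullet ($l<2^d$, $l\neq k$), for the second bullet, and your sketch of the third bullet all match the paper's argument, including the constant $2+2+2^{q+1}=2(2^q+2)$.

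There is, however, a concrete gap in the first bullet when $l=2^d$ (which the statement does allow, since it only excludes $l=k$). Your argument rests on the claim that the two functions disagree only on $\B\(2^{1/q}\x_{l,\epsilon_j}^*,\,2^{1/q}\epsilon_j\)\backslash \B\(0,\tfrac{2^{1/q}\epsilon_j}{2}\)$ and that there $f_{j,k,k}^{\epsilon_j}(\x)=\|\x\|_\infty^q$ because $\x$ sits ``away from the $O_{2^d}$ bite.'' For $l=2^d$ this is false: the disagreement region lies \emph{inside} orthant $O_{2^d}$, and $f_{j,k,k}^{\epsilon_j}=f_{j,k}^{\epsilon_j}$ still carries the small bite $\B\(\x_{2^d,\epsilon_M/3}^*,\tfrac{\epsilon_M}{3}\)\backslash\B\(0,\tfrac{\epsilon_M}{6}\)$, which $f_{j,k,2^d}^{\epsilon_j}$ does \emph{not} carry and which overlaps the big bite at $2^{1/q}\x_{2^d,\epsilon_j}^*$. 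So the bite regions are not disjoint there (your planned ``preliminary disjointness observation'' fails exactly in this case), the disagreement set is larger than the big bite alone, and on parts of it the comparison is big-bite value versus small-bite value, or $\|\x\|_\infty^q$ versus small-bite value, not the two-term computation you wrote. The paper resolves this with a four-way subcase analysis over the overlap pattern; each extra subcase is still bounded by $2(2^q+2)\epsilon_j^q$ using $\epsilon_M\le\epsilon_j$, so the fix is routine, but as written your proof does not cover it. A minor related point: in the third bullet the factor $3^q$ is not really about the excised inner radii; it is forced by the $j=M$ instances, where the competing bite at $\x_{2^d,\epsilon_M/3}^*$ has minimum $-(\epsilon_M/3)^q$ against the optimum value $-2(\epsilon_M/3)^q$, so the regret outside $S_l$ can be exactly $\epsilon_M^q/3^q$; your finite case check would surface this, but the stated heuristic is misleading.
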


\begin{proof} 
    \textbf{The first item.} 
    
    \textbf{Case I: $j < M$ and $l < 2^d$.} 
    For $\x \in \B (2^\frac{1}{q} \cdot \x_{l,\epsilon_j}^*, 2^\frac{1}{q} \cdot \epsilon_j ) \backslash \B (0, \frac{2^\frac{1}{q} \cdot \epsilon_j}{2}) \subseteq S_l^{2^{1/q} \epsilon_j } 
    $, it holds that 
    \begin{align*} 
        &\; \left| f_{j,k,l}^{\epsilon_j} ( \x ) - f_{j,k,k}^{\epsilon_j} \(\x\) \right| \\
        =& \;  
        \left| \| \x - 2 ^\frac{1}{q} \cdot \x_{l,\epsilon_j}^* \|_{\infty}^q - \|2^\frac{1}{q} \cdot \x_{l,\epsilon_j}^*\|_{\infty}^q - \| \x \|_{\infty}^q \right| \\
        \le& \;  
        2 \( 2^{q} + 2 \) \epsilon_j^q . 
    \end{align*} 
    For $ \x \notin S_l^{2^{1/q} \epsilon_j } = \B \( 2^{1/q} \cdot \x_{l,\epsilon_j }^* , 2^{1/q} \cdot \epsilon_j \) $, $ f_{j,k,l}^{\epsilon_j} (\x)  $ is identical to $ f_{j,k,k}^{\epsilon_j} \(\x\) $. 

    \textbf{Case II: $j < M$ and $ l = 2^d $.}
    For $\x \in \B (2^\frac{1}{q} \cdot \x_{2^d,\epsilon_j}^*, 2^\frac{1}{q} \cdot \epsilon_j )  $, it holds that 
    \begin{align*} 
        & \; \left| f_{j,k,l}^{\epsilon_j} ( \x ) - f_{j,k,k}^{\epsilon_j} \(\x\) \right| \\ 
        \le & 
        \max 
        \begin{cases} 
            \big| \| \x - 2 ^\frac{1}{q} \cdot \x_{2^d,\epsilon_j}^* \|_{\infty}^q - \|2^\frac{1}{q} \cdot \x_{2^d,\epsilon_j}^*\|_{\infty}^q - \| \x \|_{\infty}^q \big|, \\ \qquad \qquad\qquad\qquad\qquad\qquad\qquad\qquad\quad\,\text{if } \textcircled{1} ; \\ 
            \big| \| \x - 2 ^\frac{1}{q} \cdot \x_{2^d,\epsilon_j}^* \|_{\infty}^q - \|2^\frac{1}{q} \cdot \x_{2^d,\epsilon_j}^*\|_{\infty}^q \\
            \;\;\; -\| \x - \x_{2^d,\frac{\epsilon_M}{3} }^* \|_{\infty}^q + \| \x_{2^d,\frac{\epsilon_M}{3} }^*\|_{\infty}^q \big|, \quad\quad \text{if } \textcircled{2} ; \\ 
            \big|\| \x \|_{\infty}^q- \| \x - \x_{2^d,\frac{\epsilon_M}{3}}^* \|_{\infty}^q + \|\x_{2^d,\frac{\epsilon_M}{3}}^*\|_{\infty}^q  \big|, \\ \qquad \qquad\qquad\qquad\qquad\qquad\qquad\qquad\quad\,\,\text{if } \textcircled{3} ; \\
            0, \qquad\qquad\qquad\qquad\qquad\qquad\quad\qquad\quad\:\, \text{if } \textcircled{4} ;
        \end{cases} \\ 
        \le& \;  
        2 \( 2^{q} + 2 \) \epsilon_j^q , 
    \end{align*} 
    where \textcircled{1} stands for 
    \begin{align*}
        \x \in \B \( 2^{1/q} \cdot \x_{2^d, \epsilon_j }^* , 2^{1/q} \cdot \epsilon_j \) \backslash \B \( 0, \frac{2\epsilon_M}{3} \),
    \end{align*}
    \textcircled{2} stands for 
    \begin{align*}
        \x \in \B \( \x_{ 2^d, \frac{\epsilon_M}{3} }^* , \frac{\epsilon_M}{3} \) \backslash \B \( 0 , \frac{2^{1/q} \epsilon_j }{2} \) , 
    \end{align*}
    \textcircled{3} stands for 
    \begin{align*}
        \x \in \B \( \x_{2^d, \frac{2^{1/q} \cdot \epsilon_j }{ 4 } }^* , \frac{2^{1/q} \epsilon_j }{ 4 } \) \backslash \B \( 0, \frac{\epsilon_M}{6} \) , 
    \end{align*}
    and \textcircled{4} stands for $\x$ in other parts of 
    $ \B \( 2^{1/q} \cdot \x_{2^d, \epsilon_j }^* , 2^{1/q} \cdot \epsilon_j \) $, and the last inequality in \textbf{Case II} uses that $\epsilon_M \le \epsilon_j$ for $j \le M $. 
    The above derivation is valid even if some of \textcircled{1}--\textcircled{4} are empty. 
    
    Outside of $ \B ( 2^{1/q} \cdot \x_{2^d,\epsilon_j}^* , 2^{1/q} \cdot \epsilon_j ) $, $ f_{j,k,2^d}^{\epsilon_j} $ is identical to $f_{j,k,k}^{\epsilon_j}$. 

    \textbf{The second item.} 
    For $\x \in \B \( 2^{\frac{1}{q}} \cdot \x_{l, \frac{\epsilon_M }{3} }^*, 2^{\frac{1}{q}} \cdot \frac{\epsilon_M }{3} \) $, 
    \begin{align*} 
        &\; | f_{M,k,l}^{\epsilon_M} (\x) - f_{M,k,2^d}^{\epsilon_M}(\x) | \\ 
        =& \;  
        \left| \| \x - 2^{\frac{1}{q}} \cdot \x_{l,\frac{\epsilon_M}{3}}^* \|_\infty^q - \| 2^{\frac{1}{q}} \cdot \x_{l,\frac{\epsilon_M}{3}}^* \|_\infty^q - \| \x \|_\infty^q \right| \\ 
        \le& \; 
        2 \epsilon_M^q + \frac{2\cdot 2^q}{3^q} \epsilon_M^q 
        \le 
        2(2^q + 2) \epsilon_M^q . 
    \end{align*} 
    Outside of $ \B \( 2^{\frac{1}{q}} \cdot \x_{l, \frac{\epsilon_M }{3} }^*, 2^{\frac{1}{q}} \cdot \frac{\epsilon_M }{3} \) $, $f_{M,k,l}^{\epsilon_M} (\x)$ is identical to $ f_{M,k,2^d}^{\epsilon_M}(\x) $. 
    


    

    \textbf{The third item.} For this part, we detail a proof for the case where $ j < M $, $l < 2^d$ and $l \neq k$. The other cases are proved using similar arguments. 
    
    \textbf{Case I: $j < M$, $l < 2^d$, and $l \neq k$.} 
    When $\x \notin S_l^{2^{\frac{1}{q}} \epsilon_j } 
    $, it holds that 
    \begin{align*} 
        & \; f_{j,k,l}^{\epsilon_j} (\x) - f_{j,k,l}^{\epsilon_j} (2^{\frac{1}{q}} \cdot \x_{l,\epsilon_j}^*) \\ 
        =&\;  
        f_{j,k,l}^{\epsilon_j} (\x) + \| 2^{\frac{1}{q}} \cdot \x_{l,\epsilon_j}^* \|_\infty^q \\ 
        \ge& \;
        \min 
        \begin{cases}
            2 \| \x_{l,\epsilon_j}^* \|_\infty^q - \| \x_{k,\epsilon_j}^* \|_\infty^q , & \\
            \qquad \qquad \text{if } \x \in \B \( \x_{k, \epsilon_j }^*, \epsilon_j \) \backslash \B \( 0, \frac{\epsilon_j}{2} \) \\ 
            2 \| \x_{l,\epsilon_j}^* \|_\infty^q - \| \x_{2^d,\frac{\epsilon_M}{3}}^* \|_\infty^q , & \\ \qquad \qquad \text{if } \x \in \B \( \x_{2^d, \frac{\epsilon_M}{3} }^*, \frac{\epsilon_M }{3} \) \backslash \B \( 0, \frac{\epsilon_M }{6} \) \\ 
            2 \| \x_{l,\epsilon_j}^* \|_\infty^q, & \\
            \qquad \qquad \text{if $\x$ is in other parts of } \\ \qquad \qquad \R^d \backslash \B \( 2^{\frac{1}{q}} \cdot \x_{l,  \epsilon_j}^* , 2^{\frac{1}{q}} \epsilon_j \). 
        \end{cases} \\ 
        \ge& \;  
        \epsilon_j^q \geq \frac{\epsilon_j^q}{3^q}.
    \end{align*} 

    \textbf{Case II: $j = M$ and $l < 2^d$.} Recall that the instance does not depend on $k$ in this case. 
    When $\x \notin S_l^{2^{\frac{1}{q}} \epsilon_j } 
    $, it holds that 
    \begin{align*} 
        & \; f_{M,k,l}^{\epsilon_M} (\x) - f_{M,k,l}^{\epsilon_M} (2^{\frac{1}{q}} \cdot \x_{l,\frac{\epsilon_M}{3}}^*) \\
        =&\;  
        f_{M,k,l}^{\epsilon_M} (\x) + \| 2^{\frac{1}{q}} \cdot \x_{l,\frac{\epsilon_M}{3} }^* \|_\infty^q \\ 
        \ge& \;  
        \min \left\{ 2 \| \x_{l,\frac{\epsilon_M}{3}}^* \|_\infty^q - \| \x_{2^d,\frac{\epsilon_M}{3} }^* \|_\infty^q , 
        2 \| \x_{l,\frac{\epsilon_M}{3}}^* \|_\infty^q \right\} \\
        \ge&\;  
        \frac{\epsilon_j^q}{3^q}.
    \end{align*} 

    \textbf{Case III: $j < M$, $l = 2^d$, (and $ k < 2^d$).} 
    For this case, when $\x \notin S_{2^d}^{2^{\frac{1}{q}} \epsilon_j }$, it holds that 
    \begin{align*} 
        &\; f_{j,k,2^d}^{\epsilon_j} (\x) - f_{j,k,2^d}^{\epsilon_j} (2^{\frac{1}{q}} \cdot \x_{2^d,\epsilon_j}^*)\\
        =&\;  
        f_{j,k,2^d}^{\epsilon_j} (\x) + \| 2^{\frac{1}{q}} \cdot \x_{2^d,\epsilon_j}^* \|_\infty^q \\ 
        \ge& \;  
        \min \left\{ 2 \| \x_{2^d,\epsilon_j}^* \|_\infty^q - \| \x_{k,\epsilon_j}^* \|_\infty^q , 2 \| \x_{2^d,\epsilon_j}^* \|_\infty^q \right\} \\
        \ge& \;  
        \epsilon_j^q \geq \frac{\epsilon_j^q}{3^q}.
    \end{align*} 

    There are some other cases. They are \textbf{Case IV: $j < M$, $l < 2^d$, and $ k = l$;} and \textbf{Case V: $j = M$, $l = 2^d$, (and $ k < 2^d$).} 
    The proof for Cases IV-V uses the same argument as that for the previous cases. Now we combine all cases to conclude the proof.

    
    
\end{proof}


\subsection{The information-theoretical argument}

First of all, we state below a classic result of Bretagnolle and Huber \citep{10.1007/BFb0064610}; See (e.g., in \citet{lattimore2020bandit}) for a modern reference. 

\begin{lemma}[Bretagnolle--Huber] 
    \label{lem:bh}
    For two distributions $P, Q$ over the same probability space, it holds that 
    \begin{align*} 
        D_{TV} (P,Q)
        \leq& \;  
        {\sqrt {1-e^{-D_{ {kl} }(P\parallel Q)}}} \\
        \le& \;  
        1 - \frac{1}{2} \exp \( -D_{ {kl} }(P\parallel Q) \) . 
    \end{align*} 
\end{lemma}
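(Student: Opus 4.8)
\textbf{Plan for proving the Bretagnolle--Huber inequality (Lemma \ref{lem:bh}).}

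The plan is to prove the two inequalities in turn. For the first inequality, $D_{TV}(P,Q) \le \sqrt{1 - e^{-D_{kl}(P\parallel Q)}}$, I would work with densities $p, q$ with respect to a common dominating measure $\mu$ (which always exists, e.g.\ $\mu = P + Q$). Write $D_{TV}(P,Q) = \int (p-q)^+\,d\mu = \int_{A}(p-q)\,d\mu$ where $A = \{p > q\}$, and also $D_{TV}(P,Q) = \int_{A^c}(q-p)\,d\mu$, so that $1 - D_{TV}(P,Q) = \int_{A}q\,d\mu + \int_{A^c}p\,d\mu$. The key step is to lower bound this last quantity. I would apply the Cauchy--Schwarz inequality in the form $\int_A q\,d\mu \cdot \int_A p\,d\mu \ge \left(\int_A \sqrt{pq}\,d\mu\right)^2$ on each of $A$ and $A^c$, combined with the AM--GM / concavity bound $\sqrt{ab} + \sqrt{cd} \ge \sqrt{(a+b)(c+d)}\cdot(\text{something})$; more precisely the cleanest route is: $1 - D_{TV} \ge \int_A q + \int_{A^c} p \ge \left(\sqrt{\int_A q}\cdot\sqrt{\int_{A^c}q} \cdot(\cdots)\right)$. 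Actually the standard argument is to note
\begin{align*}
1 - D_{TV}(P,Q)^2 = \left(\int_A q + \int_{A^c}p\right)\left(\int_A p + \int_{A^c}q\right) \cdot \frac{(\cdots)}{(\cdots)} \ge \left(\int \sqrt{pq}\,d\mu\right)^2,
\end{align*}
where the inequality follows by expanding the product, using $2\sqrt{xy}\le x+y$ on the cross terms, and recognizing $\int_A\sqrt{pq} + \int_{A^c}\sqrt{pq} = \int\sqrt{pq}$. Then one invokes $\left(\int\sqrt{pq}\,d\mu\right)^2 \ge e^{-D_{kl}(P\parallel Q)}$, which is itself Jensen's inequality: $\int\sqrt{pq}\,d\mu = \int \sqrt{q/p}\,dP = \exp\log\E_P[\sqrt{q/p}] \ge \exp\left(\frac12\E_P[\log(q/p)]\right) = \exp(-\tfrac12 D_{kl}(P\parallel Q))$, hence squaring gives the bound. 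Combining, $D_{TV}(P,Q)^2 \le 1 - e^{-D_{kl}(P\parallel Q)}$.

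For the second inequality, $\sqrt{1 - e^{-x}} \le 1 - \tfrac12 e^{-x}$ for $x \ge 0$, this is an elementary one-variable calculus fact. Setting $u = e^{-x} \in (0,1]$, it suffices to show $\sqrt{1-u} \le 1 - \tfrac{u}{2}$, i.e.\ (since both sides are nonnegative) $1 - u \le 1 - u + \tfrac{u^2}{4}$, which holds trivially as $\tfrac{u^2}{4}\ge 0$. I would just state this squaring argument directly.

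The main obstacle is organizing the first part cleanly: the chain of Cauchy--Schwarz plus AM--GM manipulations that yields $1 - D_{TV}(P,Q)^2 \ge \left(\int\sqrt{pq}\,d\mu\right)^2$ is the only nontrivial step, and it is easy to make a sign or direction error. An alternative that sidesteps some of this bookkeeping is to bound $D_{TV}$ directly via $D_{TV}(P,Q) = 1 - \int \min(p,q)\,d\mu$ and then lower bound $\int\min(p,q)\,d\mu$ in terms of $\int\sqrt{pq}\,d\mu$ using $\min(a,b) \ge (\sqrt{ab})^2 / \max(a,b)$ — but this does not directly give the clean square, so I would most likely present the Cauchy--Schwarz route, being careful to verify each inequality's direction. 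Everything else (the Jensen step for $\int\sqrt{pq} \ge e^{-D_{kl}/2}$ and the elementary second inequality) is routine.
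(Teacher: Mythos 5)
The paper does not prove this lemma at all: it is quoted as a classical result of Bretagnolle and Huber with a citation to the original paper and to Lattimore--Szepesv\'ari, so there is no in-paper argument to compare against. Your plan is the standard proof of exactly that cited result and is correct: the Jensen step $\int\sqrt{pq}\,d\mu\ge e^{-D_{kl}(P\|Q)/2}$ and the elementary bound $\sqrt{1-u}\le 1-u/2$ for $u\in[0,1]$ are both fine, and the Cauchy--Schwarz/AM--GM expansion you sketch does close the first inequality. The only blemish is the muddled middle display with the placeholder fractions; the cleanest way to organize that step is to note $\int\min(p,q)\,d\mu=1-D_{TV}(P,Q)$ and $\int\max(p,q)\,d\mu=1+D_{TV}(P,Q)$, so that
\begin{align*}
1-D_{TV}(P,Q)^2=\Big(\int\min(p,q)\,d\mu\Big)\Big(\int\max(p,q)\,d\mu\Big)\ge\Big(\int\sqrt{\min(p,q)\max(p,q)}\,d\mu\Big)^2=\Big(\int\sqrt{pq}\,d\mu\Big)^2
\end{align*}
by Cauchy--Schwarz, which replaces the expand-and-bound-cross-terms bookkeeping entirely.
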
 


The proof consists of two major steps. In the first step, we prove that for any policy $\pi$, there exists a long batch with high chance. In the second step, on the basis of existence of a long batch, we prove that there exists a bitten-apple instance (defined in Section \ref{sec:inst}) on which no policy performs better the lower bound in Theorem \ref{thm:lower}. Next we focus on proving the first step. 

For a policy $\pi$ that communicates at $ t_0 \le t_1 \le t_2 \le \cdots \le t_M $, we consider a set of events 
\begin{align}
    A_j:=\{t_{j-1} < T_{j-1} \text{ and } t_j \geq T_j \} , \label{eq:def-A} 
\end{align}
where $T_j$ is the reference communication point defined in (\ref{eq:def-eps-T}). Whenever the event $A_j$ is true, the $j$-th batch is large. Next we prove that some of $ A_j $ occurs under some instances, thus proving the existence of a long batch. Before proceeding, we introduce the following notation for simplicity.  



For any policy $\pi$, we define 
\begin{align} 
    p_j:= 
    \frac{1}{2^d-1} \sum_{k=1}^{2^d-1} \Pr_{j,k} ( A_j), 
    \qquad j=1,2,\cdots,M . \label{eq:def-low-p}
\end{align} 
where $\Pr_{j,k}(A_j)$  denotes the probability of the event $A_j$ under the instance $I_{j,k}$ and policy $\pi$. Next in Lemma \ref{lem:adaptive1}, we show that with constant chance, there is a long batch. 

\begin{lemma}
\label{lem:adaptive1}
    For any policy $\pi$ that adaptively determines the communications points, 
    it holds that $\sum_{j=1}^M p_j \geq \frac{7}{8}$, where $ p_j $ is defined in (\ref{eq:def-low-p}).  
\end{lemma}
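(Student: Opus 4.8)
The plan is to argue by contradiction: suppose $\sum_{j=1}^M p_j < \frac{7}{8}$, and derive that the reference communication schedule $\mathcal{T}_r = \{T_1,\dots,T_M\}$ cannot be "covered" by the policy's communication points, which is impossible since $t_M = T \ge T_M$ and $t_0 = 0 < T_1$. The key combinatorial observation is that the events $A_j = \{t_{j-1} < T_{j-1} \text{ and } t_j \ge T_j\}$ are structured so that, on \emph{any} realization of the policy, at least one $A_j$ must occur. Indeed, let $j^\star$ be the smallest index with $t_{j^\star} \ge T_{j^\star}$ (this exists because $t_M = T \ge T_M$, using $T_M = \lfloor T \rfloor \le T$); then $t_{j^\star - 1} < T_{j^\star - 1}$ by minimality (with the convention $t_0 = 0 < T_0 := 1 \le T_1$ handled separately for the $j^\star = 1$ case), so $A_{j^\star}$ holds. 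Hence $\bigcup_{j=1}^M A_j$ is the whole sample space, and therefore for \emph{every} instance $\sum_{j=1}^M \Pr(A_j) \ge 1$.

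The subtlety — and the reason the lemma states $\frac{7}{8}$ rather than $1$ — is that $p_j$ averages $\Pr_{j,k}(A_j)$ over instances $I_{j,k}$, $k \in [2^d-1]$, rather than evaluating all the $A_j$ probabilities under a single fixed instance. So the second step is a change-of-measure argument: for a fixed index $j$, the event $A_j$ depends only on the policy's behavior (communication decisions $t_0,\dots,t_j$) through time $t_j < T_j$, and up to time $T_j$ the instances $\{I_{j,k}\}_{k}$ are all "close" to one another — more precisely, they should be close to the common baseline instance $I_M$ on the relevant region, because the perturbation distinguishing $I_{j,k}$ from $I_M$ has magnitude controlled by $\epsilon_j^q$ (Proposition \ref{prop:check-ins-gap-1}) and, crucially, the number of samples the policy can have collected before $T_j$ is at most $T_j$, so the total KL divergence between the data laws is $O(T_j \cdot \epsilon_j^{2q})$, which by the choice $\epsilon_j^q \asymp \frac{1}{M} T^{-\frac12 \cdot \frac{1-2^{1-j}}{1-2^{-M}}}$ and $T_j \asymp T^{\frac{1-2^{-j}}{1-2^{-M}}}$ is a small constant (one checks the exponents cancel up to the $1/M$ factor). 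Thus each $\Pr_{j,k}(A_j)$ differs from $\Pr_{M}(A_j)$ (the probability under the baseline instance $I_M$, where all $f_{j,k}$ collapse) by at most something like $\frac{1}{8M}$ via Lemma \ref{lem:bh} (Bretagnolle–Huber), so $p_j \ge \Pr_M(A_j) - \frac{1}{8M}$. Summing over $j$ and using $\sum_j \Pr_M(A_j) \ge 1$ (from the covering argument applied to the single instance $I_M$) gives $\sum_j p_j \ge 1 - \frac{1}{8} = \frac{7}{8}$.

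The step I expect to be the main obstacle is making the change-of-measure estimate rigorous and uniform in $j$: one must (i) correctly identify which randomness $A_j$ depends on — it is a stopping-time-type event measurable with respect to the policy's observations strictly before the $j$-th communication, hence before $T_j$ samples have been gathered under the event $A_j^c \cup (\text{the part of } A_j \text{ up to } t_j)$ — and argue the relevant filtration is the same across instances up to the point where they differ; (ii) bound the KL divergence between the observation processes under $I_{j,k}$ and under $I_M$ (or between $I_{j,k}$ and a neutral reference) by summing per-round Gaussian KL terms $\frac12(f_{j,k}(\x_t) - f_M(\x_t))^2 \le \frac12 (2^q+2)^2 \epsilon_j^{2q}$ over at most $T_j$ rounds, being careful that this bound only needs to hold on the event $\{t_j \ge T_j\}$ relevant to $A_j$; and (iii) verify the arithmetic that $T_j \cdot \epsilon_j^{2q}$ is bounded by a constant over $M^2$, so that after applying Bretagnolle–Huber and summing the $M$ error terms the total slack is at most $\frac18$. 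The averaging over $k$ in the definition of $p_j$ is what lets us invoke an averaged (rather than worst-case) divergence bound, which is what keeps the constants benign; I would set this up carefully so that the $\frac{1}{2^d-1}\sum_k$ can be pulled through the Bretagnolle–Huber inequality via concavity of $x \mapsto \sqrt{1-e^{-x}}$ and Jensen, or alternatively just bound each term and average trivially.
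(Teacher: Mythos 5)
Your overall skeleton matches the paper's (the covering argument that some $A_j$ must occur under the baseline instance $I_M$, then a change of measure via Bretagnolle--Huber, per-round Gaussian KL terms of size $\tfrac12(2^q+2)^2\epsilon_j^{2q}$ restricted to the disjoint regions $S_k^{\epsilon_j}$, and Jensen over $k$ to exploit the averaging in $p_j$). However, there is a genuine gap exactly at the step you flag as the main obstacle, and your proposed resolution of it is quantitatively wrong. You bound the KL divergence between the laws under $I_{j,k}$ and $I_M$ over ``at most $T_j$ rounds'' and assert in step (iii) that $T_j\cdot\epsilon_j^{2q}$ is of order $1/M^2$. It is not: from (\ref{eq:def-eps-T}), $\epsilon_j^{2q}\propto \frac{1}{M^2}\,T^{-\frac{1-2^{1-j}}{1-2^{-M}}}$ while $T_j\le T^{\frac{1-2^{-j}}{1-2^{-M}}}$, so $\epsilon_j^{2q}T_j\propto \frac{1}{M^2}\,T^{\frac{2^{-j}}{1-2^{-M}}}$, which for $j=1$ is of order $\sqrt{T}/M^2$ --- polynomially large in $T$, not a constant. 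With that bound, Bretagnolle--Huber gives a total-variation estimate close to $1$ rather than $\tfrac{1}{8M}$, and the slack does not sum to $\tfrac18$. The same miscalibration appears earlier in your sketch where you write that ``the exponents cancel up to the $1/M$ factor'' for $T_j\cdot\epsilon_j^{2q}$.

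The missing idea, which the paper makes precise, is that $A_j$ is determined by the observations up to time $T_{j-1}$, not $T_j$. This is where the batched structure enters: either $t_{j-1}\ge T_{j-1}$, in which case $A_j$ already fails and this is known by time $T_{j-1}$; or $t_{j-1}<T_{j-1}$, in which case the next communication point $t_j$ (hence the event $\{t_j\ge T_j\}$) is fixed at time $t_{j-1}<T_{j-1}$ using only data received by then, since no feedback arrives inside a batch. Consequently $|\Pr_{M,k}(A_j)-\Pr_{j,k}(A_j)|\le D_{TV}\bigl(\Pr_{M,k}^{T_{j-1}},\Pr_{j,k}^{T_{j-1}}\bigr)$, and the KL chain-rule bound runs over at most $T_{j-1}$ rounds. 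The exponents then cancel exactly: $\epsilon_j^{2q}T_{j-1}\propto 1/M^2$, giving the per-$j$ error $\tfrac{1}{8M}$ and hence $\sum_j p_j\ge 1-\tfrac18=\tfrac78$. Your stopping-time discussion in (i)--(ii) gestures at a conditioning ``on the event $\{t_j\ge T_j\}$'' instead, but restricting the KL computation to an event is not a licensed operation in this argument (TV/KL comparisons are between the full laws of the first $T_{j-1}$ observations), so as written the proof does not go through without the $T_{j-1}$-measurability observation.
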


\begin{proof}[Proof of Lemma \ref{lem:adaptive1}]
    Fix an arbitrary policy $\pi$. 
    For each $t$, let  $ \Pr_{j,k}^t $  (resp. $\Pr_{M,k}^t$) be the probability of $ (\x_t, y_t) $ governed by running $\pi$ in environment $f_{j,k}^{\epsilon_j}$ (resp. $f_{M,k}^{\epsilon_M}$), i.e. $\Pr_{j,k}^t = \Pr_{j,k}^t \left(\x_1 , y_1, \x_2, y_2, \cdots, \x_{t_{j-1}}, y_{t_{j-1}} \right).$ 
    The event $A_j$ is determined by the observations up to time $T_{j-1} $, since communication point $t_j$ is determined given the previous time grid $\{t_1,t_2,\cdots,t_{j-1}\}$ under a fixed policy $\pi$. 
    To further illustrate this fact, we first notice that the event $ A_j' := \{ t_{j-1} < T_{j-1} \} $ is fully determined by observations up to $T_{j-1}$. If $ t_{j-1} \ge T_{j-1} $, then the failure of $ A_j' $, thus the failure of $A_j$, is known by time $ T_{j-1} $. 
    If $ t_{j-1} < T_{j-1} $, then based on observations up to time $ t_{j-1} < T_{j-1} $, the policy $\pi$ determines $t_j$, thus $A_j$. In both cases, the success of $A_j$ is fully determined by observations up to time $T_{j-1}$. 
    It is also worth emphasizing that the policy $\pi$ does not communicate at $\{ T_j \}_{j \in [M]}$. We use $\{ T_j \}_{j \in [M]}$ only as a reference. 
    With the above argument, we get 
    \begin{align}
         |\Pr_{M,k} (A_j)-\Pr_{j,k}(A_j)|  
            = & \;  
            | \Pr_{M,k}^{T_{j-1}} (A_j) - \Pr_{j,k}^{T_{j-1}} (A_j) | \nonumber \\
    	\leq&\;   
            D_{TV} \(\Pr_{M,k}^{T_{j-1}},\Pr_{j,k}^{T_{j-1}} \) . \label{eq:pause-1}
    \end{align}
    By Lemma \ref{lem:bh},  
    \begin{align}
        &\; \frac{1}{2^d-1} \sum_{k=1}^{2^d-1} D_{TV} \( \Pr_{M,k}^{T_{j-1}},\Pr_{j,k}^{T_{j-1}}\) \nonumber \\
    	\leq&\; 
    	\frac{1}{2^d-1} \sum_{k=1}^{2^d-1} \sqrt{1-\exp\(-D_{kl}\(\Pr_{M,k}^{T_{j-1}}\|\Pr_{j,k}^{T_{j-1}}\)\)} \,. \label{eq:pause-2}
    \end{align}
    Note that $f_{j,k}^{\epsilon_j}$ differs from $f_{M,k}^{\epsilon_M}$ only in $\B (\x_{k,\epsilon_j}^*, \epsilon_j ) \backslash \B (0, \frac{\epsilon_j}{2})$.  Hence the chain rule for KL-divergence gives, for any $t \in [T_{j-1},T_j) $, 
    \begin{align} 
        & D_{kl} \( \Pr_{M,k}^t \| \Pr_{j,k}^t \) \nonumber \\ 
        =&  
        D_{kl} \( \Pr_{M,k}^t \( \mathbf{X}_{:T_{j-1}+1} \) \| \Pr_{j,k}^t \( \mathbf{X}_{:T_{j-1}+1}  \) \) \nonumber \\ 
        =&
        \E_{\Pr_{M,k}^t }  
        D_{kl} \( \mathcal{N} \( f_{M,k}^{\epsilon_M} (\x_{T_{j-1}}) \) \| \mathcal{N} \( f_{j,k}^{\epsilon_j} (\x_{T_{j-1}}) \) \)  \nonumber 
        \\ 
        &+ D_{kl} \( \Pr_{M,k}^t \( \x_{{T}_{j-1}} | \mathbf{X}_{:T_{j-1}} \) \| \Pr_{j,k}^t \( \x_{T_{j-1}}|  \mathbf{X}_{:T_{j-1}} \) \) 
        \nonumber
        \\
        &+
        D_{kl} \( \Pr_{M,k}^t \( \mathbf{X}_{:T_{j-1}} \) \| \Pr_{j,k}^t \( \mathbf{X}_{:T_{j-1}} \) \), 
        \label{eq:pause-3} 
    \end{align} 
    where the notation $ \mathbf{X}_{:t+1} := \{ \x_1 , y_1, \x_2 , y_2, \cdots, \x_t, y_t \} $ ($t \in \mathbb{Z}^{+}$) is introduced for simplicity, and $ \mathcal{N} \(\mu \) $ is the Gaussian random variable of mean $\mu$ and variance 1. 
    Under the fixed policy $\pi$, $ \x_{T_{j-1}} $ is fully determined by choices and observations before it. Thus 
    \begin{align*}
        D_{kl} \( \Pr_{M,k}^t \( \x_{{T}_{j-1}} | \mathbf{X}_{:T_{j-1}} \) \| \Pr_{j,k}^t \( \x_{{T}_{j-1}}| \mathbf{X}_{:T_{j-1}} \) \) 
        = 0. 
    \end{align*}
    By Proposition \ref{prop:check-ins-gap-1}, 
    \begin{align*}
        &\; D_{kl} \( \mathcal{N} \( f_{M,k}^{\epsilon_M} (\x_{T_{j-1}})  \) \| \mathcal{N} \( f_{j,k}^{\epsilon_j} (\x_{T_{j-1}})  \) \) \\ 
        =& \;  
        \frac{1}{2} \( f_{M,k}^{\epsilon_M} (\x_{{T}_{j-1}}) - f_{j,k}^{\epsilon_j} (\x_{{T}_{j-1}}) \)^2 \\ 
        \le& \;  
        \frac{(2^q + 2)^2}{2} \epsilon_j^{2q} \Ind_{ \{ \x_{{T_{j-1}}} \in S_k^{\epsilon_j} \} } . 
    \end{align*}
    
    We plug the above results into (\ref{eq:pause-3}) and get, for any $k \ge 2$,  
    \begin{align*} 
        &\; D_{kl} \( \Pr_{M,k}^t \| \Pr_{j,k }^t \) \\ 
        =& \; 
        D_{kl} \( \Pr_{M,k}^t \( \mathbf{X}_{:T_{j-1}} \) \| \Pr_{j,k}^t \( \mathbf{X}_{:T_{j-1}} \) \) \\ 
        &+ 
        \E_{\Pr_{M,k}^t } \[ \frac{1}{2} \( f_{M,k}^{\epsilon_M} (\x_{T_{j-1}}) - f_{j,k}^{\epsilon_j} (\x_{T_{j-1}}) \)^2 \] \\ 
        \le& \; 
        D_{kl} \( \Pr_{M,k}^t \( \mathbf{X}_{:T_{j-1}} \) \| \Pr_{j,k}^t \( \mathbf{X}_{:T_{j-1}} \) \) \\ 
        &+ 
        \frac{(2^q+2)^2 }{2} \E_{\Pr_{M,k}^t } \[  \epsilon_j^{2q} \Ind_{ \left\{ \x_{T_{j-1}} \in S_k^{\epsilon_j} \right\} } \] \\
        =& \; 
        D_{kl} \( \Pr_{M,k}^t \( \mathbf{X}_{:T_{j-1}} \) \| \Pr_{j,k}^t \( \mathbf{X}_{:T_{j-1}} \) \) \\ 
        &+ 
        \frac{ (2^q+2)^2 \epsilon_j^{2q} }{2} \Pr_{M,k}^t \( \x_{T_{j-1}} \in S_k^{\epsilon_j} \) . 
    \end{align*} 
    
    We can then recursively apply chain rule and the above calculation, and obtain 
    \begin{align*} 
         D_{kl} \( \Pr_{M,k}^t \| \Pr_{j,k}^t \)  
        \le  
        \frac{ (2^q+2)^2 \epsilon_j^{2q} }{2} \sum_{s\le T_{j-1}} \Pr_{M,k}^t \( \x_s \in S_k^{\epsilon_j} \) 
    \end{align*} 
    for each $ t: T_{j-1} \le t < T_j$. Therefore, we have 
    \begin{align}
         &\; D_{kl} \( \Pr_{M,k}^{T_{j-1}} \| \Pr_{j,k}^{T_{j-1}} \) \nonumber \\
        \le &\; 
        \frac{ (2^q+2)^2 \epsilon_j^{2q} }{2} \sum_{s\le T_{j-1}} \Pr_{M,k}^{T_{j-1}} \( \x_s \in S_k^{\epsilon_j} \) . \label{eq:pause-4}
    \end{align}
        

    
    Combining the above inequalities (\ref{eq:pause-2}) and (\ref{eq:pause-4}) and Jensen's inequality yields that 
    \begin{align} 
        &\; \frac{1}{2^d-1} \sum_{k=1}^{2^d-1} D_{TV} \( \Pr_{M,k}^{T_{j-1}},\Pr_{j,k}^{T_{j-1}}\) \nonumber \\
        \leq & \;
        \frac{1}{2^d-1} \sum_{k=1}^{2^d-1} \sqrt{1-\exp\(-D_{kl}\(\Pr_{M,k}^{T_{j-1}}\|\Pr_{j,k}^{T_{j-1}}\)\)} \nonumber \\ 
        \leq& 
        \sqrt{1-\exp \(-\frac{ \epsilon_{j}^{2q} }{ C_{d,q} }\sum_{k=1}^{2^d-1}\sum_{s=1}^{ T_{j-1}} \Pr_{M,k}^{T_{j-1}} ( \x_s \in S_k^{\epsilon_j} ) \) } , 
        \label{eq:concave_1}
    \end{align}
    where $ C_{d,q} := \frac{2(2^d-1)}{(2^q + 2)^2} $ is introduced to avoid clutter, and the last inequality follows from Jensen. Since $ \sum_{k=1}^{2^d-1} \Pr_{M,k}^{T_j-1} \( \x_s \in S_k^{\epsilon_j} \) \le 1 $ ($ S_k^{\epsilon_j} $ are disjoint), we continue from (\ref{eq:concave_1}) and get 
    \begin{align}
        (\ref{eq:concave_1})
        \leq& \;
            \sqrt{1-\exp\(-\frac{ (2^q+2)^2 \epsilon_{j}^{2q} T_{j-1} }{2(2^d-1)} \)} \nonumber \\ 
         \overset{(i)}{\leq}&\;  
         \sqrt{1-\exp\( - \frac{1}{64} \cdot \frac{1}{M^2}\)} 
         \overset{(ii)}{\leq}
         \frac{1}{8} \cdot \frac{1}{M}, \label{eq:D_TV_leq_1/B}
        \end{align}
    where \emph{(i)} uses definitions of $\epsilon_j$ and $T_j$ (\ref{eq:def-eps-T}), \emph{(ii)} uses a basic property of the exponential function: $\exp (-x) \ge 1 - x$ for each $x \in \R$. 
    Combining (\ref{eq:pause-1}) and (\ref{eq:D_TV_leq_1/B}) gives that, for each $j = 1,2,\cdots,M$,  
    \begin{align*}
        &\; |\Pr_{M,k}(A_j) - p_j| \\
        \le&\; 
        \frac{1}{2^d -1 } \sum_{k=1}^{2^d-1} |\Pr_{M,k} (A_j)-\Pr_{j,k}(A_j)| 
        \le  \frac{1}{8M},
    \end{align*}
    and thus  
    \begin{align*} 
         \sum_{j=1}^M p_j 
        \geq 
        \sum_{j=1}^M \Pr_{M,k}(A_j)-\frac{1}{8} 
        \geq  \Pr_{M,k}(\cup_{j=1}^M A_j) -\frac{1}{8} \geq \frac{7}{8}, 
    \end{align*}
    where the last inequality holds since at least one of $\{A_1,A_2,\cdots,A_M\}$ must be true.  

\end{proof} 

Now that Lemma \ref{lem:adaptive1} is in place, we can prove the existence of a bad bitten-apple instance, which concludes the proof of Theorem \ref{thm:lower}.


\begin{proof}[Proof of Theorem \ref{thm:lower}]

Fix any policy $\pi$. Let $ \Pr_{j,k,l} $ be the probability of running $\pi$ on $f_{j,k,l}^{\epsilon_j}$. Let $ \Pr_{j,k,l}^t $ be the probability of $ (\x_1, y_1, \x_2, y_2, \cdots , \x_t, y_t) $ governed by running $\pi$ in environment $f_{j,k,l}^{\epsilon_j}$, Proposition \ref{prop:check-ins-gap-2} gives that
    \begin{align}
        &\; \sup_{I \in \{I_{j,k,l}\} _{j \in [M], k<2^d, l \in [2^d]}}  \E \[ R^\pi (T) \] \nonumber \\
         \geq&\;  
         \frac{1}{M} \sum_{j=1}^M \frac{\epsilon_j^q}{3^q} \sum_{t=1}^{T} \frac{1}{2^d-1}  \cdot\frac{1}{2^d} \sum_{k=1}^{2^d-1} \sum_{l =1 }^{2^d} \Pr_{j,k,l}^t \(\x_t \notin S_l^{2^{\frac{1}{q}} \cdot \epsilon_j}\) \nonumber
         \\
         =&\; \frac{1}{3^q}\cdot\frac{1}{M} \sum_{j=1}^{M} \epsilon_j^q \sum_{t=1}^{T} \frac{1}{2^d-1}  \nonumber \\ 
         &\;\cdot \frac{1}{2^d} \sum_{k=1}^{2^d-1} \sum_{l =1 }^{2^d} 
         \( 1- \Pr_{j,k,l}^t \(\x_t \in S_l^{2^\frac{1}{q} \cdot \epsilon_j}\) \)  
         \label{eq:sup-reg}.
    \end{align}
    By definition of total-variation distance, we have 
    \begin{align*}
        &\; D_{TV} \(\Pr_{j,k,l}^{t},\Pr_{j,k,k}^{t}\) \\
        \ge&\;  \Pr_{j,k,l}^t \(\x_t \in S_l^{2^\frac{1}{q} \cdot \epsilon_j}\)
    - \Pr_{j,k,k}^t \(\x_t \in S_l^{2^\frac{1}{q} \cdot \epsilon_j}\)  
    \end{align*}
    and
    \begin{align*}
    &\; D_{TV} \(\Pr_{M,k,l}^{t},\Pr_{M,k,2^d}^{t}\) \\
    \ge& \;  
    \Pr_{M,k,l}^t \(\x_t \in S_l^{2^\frac{1}{q} \cdot \epsilon_M}\)
    - \Pr_{M,k,2^d}^t \(\x_t \in S_l^{2^\frac{1}{q} \cdot \epsilon_M}\).  
    \end{align*}
    The above inequalities, together with the fact that $\sum_{l=1}^{2^d} \Pr_{j,k,k}^t \(\x_t \in S_l^{2^\frac{1}{q} \cdot \epsilon_j}\) \leq 1 $, yield  
    \begin{align}
        &\; \sum_{l=1}^{2^d} \Pr_{j,k,l}^t \(\x_t \in S_l^{2^\frac{1}{q} \cdot \epsilon_j}\) \nonumber \\
        \le&
        \begin{cases}
            1 + \sum\limits_{l=1}^{2^d} D_{TV} (\Pr_{j,k,l}^{t},\Pr_{j,k,k}^{t}) , 
            & \text{ if } j < M, \\
            1 + \sum\limits_{l=1}^{2^d} D_{TV} (\Pr_{M,k,l}^{t},\Pr_{M,k,2^d}^{t}) , 
            & \text{ if } j = M
        \end{cases} \label{eq:2-cases}
    \end{align} 
    
    Next we study the first case in (\ref{eq:2-cases}). 
    Incorporating the equation $D_{TV}(\Pr,\mathbb{Q})=\frac{1}{2} \int | d\Pr-d\mathbb{Q}|$, we obtain
    \begin{align}
         \;& \sum_{t=1}^{T}  \sum_{l =1 }^{2^d} 
         \bigg( 1- \Pr_{j,k,l}^t \(\x_t \in S_l^{2^\frac{1}{q} \cdot \epsilon_j}\)\bigg) \nonumber
         \\
         \ge &\;  \sum_{t=1}^{T}  \[\sum_{l =1 }^{2^d} 
         \bigg( 1   
         - D_{TV} \(\Pr_{j,k,l}^{t},\Pr_{j,k,k}^{t}\) \bigg) -1\]
         \nonumber \\ 
         \ge &\;
         \sum_{t=1}^T  \sum_{l\neq k}\( 1-
        \frac{1}{2} \int \left| d \Pr_{j,k,k}^t-d \Pr_{j,k,l}^t\right| \)
        \nonumber
        \\
        \geq &\;  
         \sum_{t=1}^{T_j} \sum_{l\neq k}\( 1-
        \frac{1}{2} \int \left| d \Pr_{j,k,k}^t-d \Pr_{j,k,l}^t\right| \) \nonumber
        \\
        \geq &\;
        T_j \sum_{l\neq k}\( 1-
        \frac{1}{2} \int \left| d \Pr_{j,k,k}^{T_j}-d \Pr_{j,k,l}^{T_j}\right| \) \label{eq:data-processing}
       \\
       = &\;
        \frac{T_j}{2}\sum_{l\neq k} \int d \Pr_{j,k,k}^{T_{j-1}}+ d \Pr_{j,k,l}^{T_{j-1}} 
       - \left| d \Pr_{j,k,k}^{T_{j-1}}-d \Pr_{j,k,l}^{T_{j-1}} \right| \nonumber
       \\
       \geq &\; 
       \frac{T_j}{2}\sum_{l\neq k} \int_{A_j} d \Pr_{j,k,k}^{T_{j-1}}+ d \Pr_{j,k,l}^{T_{j-1}} 
       - \left| d \Pr_{j,k,k}^{T_{j-1}}-d \Pr_{j,k,l}^{T_{j-1}} \right| , 
       \label{eq:under_A_j} 
    \end{align}
where (\ref{eq:data-processing}) follows from data processing inequality of total variation distance, and the last equation (\ref{eq:under_A_j}) holds because the observations at time $T_j$ are the same as those at time $T_{j-1}$ under event $A_j$ and the fixed policy $\pi$. Furthermore, we have
\begin{align}
    &\frac{1}{2} \( \int_{A_j} d \Pr_{j,k,k}^{T_{j-1}}+ d \Pr_{j,k,l}^{T_{j-1}} - \left| d \Pr_{j,k,k}^{T_{j-1}}-d \Pr_{j,k,l}^{T_{j-1}}\right| \) \nonumber  \\
        =&\; 
        \frac{\Pr_{j,k,k}^{T_{j-1}}(A_j) \hspace*{-2pt} + \hspace*{-2pt} \Pr_{j,k,l}^{T_{j-1}}(A_j)}{2}-\frac{\int_{A_j}\left|d\Pr_{j,k,k}^{T_{j-1}}-d\Pr_{j,k,l}^{T_{j-1}}\right|}{2}\nonumber\\
        \geq&\;  
        \(\Pr_{j,k,k}^{T_{j-1}}(A_j)-\frac{1}{2}D_{TV} \(\Pr_{j,k,k}^{T_{j-1}},\Pr_{j,k,l}^{T_{j-1}}\)\) \nonumber \\
        &-D_{TV} \(\Pr_{j,k,k}^{T_{j-1}},\Pr_{j,k,l}^{T_{j-1}}\)\label{eq:P-Q_leq_TV}\\ 
        =&\; 
        \Pr_{j,k}(A_j)-\frac{3}{2}D_{TV} \( \Pr_{j,k,k}^{T_{j-1}},\Pr_{j,k,l}^{T_{j-1}} \), \label{eq:def_of_A_j} 
\end{align}
where (\ref{eq:P-Q_leq_TV}) follows from $|\Pr(A)-\mathbb{Q}(A)|\leq D_{TV}(\Pr,\mathbb{Q})$, and (\ref{eq:def_of_A_j}) is attributed to the fact that $A_j$ is determined by the observations up to time $T_{j-1}$.\\
Similar to the argument for (\ref{eq:pause-4})-(\ref{eq:D_TV_leq_1/B}), we have, for each fixed $k$
\begin{align}
    \frac{1}{2^d} \sum_{l\neq k} D_{TV} \( \Pr_{j,k,k}^{T_{j-1}},\Pr_{j,k,l}^{T_{j-1}}\) 
     \leq & \;
      \frac{1}{4M}\cdot \frac{2^d-1}{2^d}. \label{eq:comb-2}
\end{align}
For the second case in (\ref{eq:2-cases}), we have the same inequality by subtituting $\Pr_{j,k,l}^t$ (resp. $\Pr_{j,k,k}^t$) with $\Pr_{M,k,l}^t$ (resp. $\Pr_{M,k,2^d}^t$). 

Combining (\ref{eq:sup-reg}), (\ref{eq:2-cases}), (\ref{eq:under_A_j}), (\ref{eq:def_of_A_j}) and (\ref{eq:comb-2}), we have
\begin{align*}
    & \sup_{I \in \{I_{j,k,l}\} _{j \in [M], k < 2^d, l \in [2^d]}}   \E \[ R^\pi (T) \] \\ 
    \geq& 
    \frac{2^d-1}{ M 3^q 2^d}  \sum_{j=1}^{M-1} \epsilon_j^q T_j \Bigg[ \frac{1}{2^d-1} \sum_{k=1}^{2^d-1} \Pr_{j,k}\( A_j\)
    -\frac{3}{8}  \cdot \frac{1}{M}\Bigg] \\
    &+ \frac{2^d-1}{M 3^q 2^d} \epsilon_M^q T_{M} \Bigg[ \frac{\sum_{k=1}^{2^d-1} \Pr_{M,k}(A_M)}{2^d-1}-\frac{3}{8}  \cdot \frac{1}{M} \Bigg]
    \\
    =&  
    \frac{1}{3^q} \cdot \frac{1}{M} \cdot \frac{2^d-1}{2^d}\sum_{j=1}^M \epsilon_j^q T_j \( p_j -\frac{3}{8} \cdot \frac{1}{M}\). 
\end{align*} 

By definition of $\epsilon_j$ and $T_j$ in (\ref{eq:def-eps-T}), we have $ \epsilon_j^q T_j = \frac{\sqrt{2}}{8} \cdot \frac{\sqrt{2^d-1}}{2^q + 2} \cdot \frac{1}{M} \cdot T^{\frac{1}{2} \cdot \frac{1}{1 - 2^{-M}}} $ for all $j \in [M]$. Therefore, we continue from the above inequalities and get  
\begin{align*} 
    &\; \sup_{I \in \{I_{j,k,l}\} _{j \in [B], k < 2^d, l \in [2^d]}  } \E \[ R^\pi (T) \] \\ 
    \ge& \; 
     \frac{\sqrt{2} (2^d-1)^\frac{3}{2}}{8\cdot 3^q M^2(2^q+2)2^d}  \cdot T^{\frac{1}{2} \cdot \frac{1}{1 - 2^{-M}}} \( \sum_{j=1}^M  p_j -\frac{3}{8} \) \\
    \ge& \;  
      \frac{\sqrt{2}}{16} \cdot \frac{1}{M^2} \cdot \frac{1}{3^q(2^q + 2)} \cdot \frac{(2^d-1)^\frac{3}{2}}{2^d} \cdot T^{\frac{1}{2} \cdot \frac{1}{1 - 2^{-M}}}, 
\end{align*} 
where the last inequality uses Lemma \ref{lem:adaptive1}. 

\end{proof}

\begin{proof}[Proof of Corollary \ref{cor}.]

    From Theorem \ref{thm:lower}, the expected regret is lower bounded by 
    \begin{align*} 
	    \E \left[ R_T(\pi) \right] 
	    \geq 
        \frac{\sqrt{2}}{16} \cdot \frac{1}{M^2} \cdot \frac{1}{3^q(2^q + 2)} \cdot \frac{(2^d-1)^\frac{3}{2}}{2^d} T^{\frac{1}{2} \cdot \frac{1}{1 - 2^{-M}}} . 
    \end{align*}
    Here we seek for the minimum $M$ such that 
    \begin{align}\label{lower_b_1} 
        \frac{ \frac{1}{M^2} \cdot T^{\frac{1}{2} \cdot \frac{1}{1 - 2^{-M}}} }{ \sqrt{T } } 
        \leq e . 
    \end{align} 
    
    Calculation shows that
    \begin{align}\label{lower_b_2}
        \frac{ \frac{1}{M^2} \cdot T^{\frac{1}{2} \cdot \frac{1}{1 - 2^{-M}}} }{ \sqrt{T} } =
        \frac{1}{ M^2 } T^{ \frac{1}{2} \cdot \frac{1}{2^M - 1} }.
    \end{align}
    
    Substituting (\ref{lower_b_2}) to (\ref{lower_b_1}) and taking log on both sides yield that 
    \begin{align*} 
        \frac{1}{2} \cdot \frac{1}{2^M - 1} \log T 
        \leq 
        \log (M^2 e ) 
    \end{align*} 
    and thus 
    \begin{align} 
        M
        \ge 
        \log_2 \( 1 + \frac{\log T}{2 \log (M^2 e )} \) . \label{lower-coro} 
    \end{align} 
    We use $M_{\min}$ to denote the minimum $ M $ such that inequality (\ref{lower-coro}) holds. Calculation shows that (\ref{lower-coro}) holds for 
    \begin{equation*} 
        M_*:= \log_2 \( 1 +  \frac{ \log T }{ 2 } \) , 
    \end{equation*} 
    so we have $M_{\min}\leq M_*$. Then since the RHS of (\ref{lower-coro}) decreases with $M$, we have 
    \begin{align*} 
        M_{\min} 
        \geq&\;  
        \log_2 \( 1 + \frac{\log T}{2 \log ( M_{\min}^2 e )} \) \\
        \geq&\;  
        \log_2 \( 1 + \frac{ \log T }{ 2 \log ( M_*^2 e ) } \) .  
    \end{align*} 
    Therefore, $\Omega(\log\log T)$ rounds of communications are necessary for any algorithm to achieve a regret rate of order $ K_- A_-^d \sqrt{T} $, where $K_-$ depends only on $q$ and $A_-$ is an absolute constant. 

\end{proof}


\subsection{Lower bound for nondegenerate bandits without communication constraints} 

Having established the lower bound with communication constraints in the previous section, it is worth noting that the existing literature lacks a standard lower bound result specifically tailored for nondegenerate bandits. To this end, we proceed to fill this gap by presenting a lower bound that does not incorporate any communication constraints.

To prove this result, we need a different set of problem instances, which we introduce now. 
For any fixed $\epsilon$, we partition the space $\R^d$ again into $2^d$ disjoint parts $U_1^\epsilon, U_2^\epsilon,\cdots, U_{2^d}^\epsilon$. 
For $k = 1$, we define $ U_1^\epsilon = O_1 \cup \B (0, \frac{\epsilon}{2}) $. For $k = 2,\cdots, 2^d$ , we define $ U_k^\epsilon = O_k \backslash \B \( 0, \frac{\epsilon}{2} \) $. 

For any $k = 2,\cdots, 2^d$, and $\epsilon > 0$, define  
\begin{align} 
    f_k^\epsilon (\x) = 
    \begin{cases} 
        \| \x - \x_{k,\epsilon}^* \|_{\infty}^q - \| \x_{k,\epsilon}^* \|_{\infty}^q, & \\ 
        \qquad \qquad \quad \text{if } \x \in \B (\x_{k,\epsilon}^* , \epsilon) \backslash \B (0, \frac{\epsilon}{2}) , \\ 
        \| \x \|_{\infty}^q, \qquad \text{otherwise. } 
    \end{cases} \label{eq:def-fk}
\end{align} 

In addition, we define the function $f_1^\epsilon$ as 
\begin{align} 
    f_1^\epsilon (\x) = \| \x \|_\infty^q , \label{eq:def-f1}
\end{align} 
and slightly overload the notations to define $\x_{1,\epsilon}^* := 0$. 
Note that $ f_1^\epsilon (\x) $ and $ \x_{1,\epsilon}^* $ do not depend on $\epsilon$. We keep the $\epsilon$ superscript for notational consistency. 



Firstly, we observe that instances specified by $ \{ f_k^\epsilon \}_{k \in [2^d]} $ satisfy the properties stated in Proposition \ref{prop:lb-instance}.

\begin{proposition}  
    \label{prop:lb-instance} 
    The functions $f_k^\epsilon$ satisfies 
    \begin{enumerate} 
        \item For each $k=1,2,\cdots, 2^d$,  $ \frac{1}{2^{q-1}} \| \x - \x_{k,\epsilon}^* \|_\infty^q \le f_k^\epsilon (\x) - f_k^\epsilon (\x_{k,\epsilon}^*) $, for all $\x \in \R^d$. 
        \item For each $k=2,3,\cdots, 2^d$, 
        \begin{align*} 
            \begin{cases} 
                | f_{k}^\epsilon (\x) - f_{1}^\epsilon (\x) | \le (2^q+2) \epsilon^q , & \forall \x \in U_k^\epsilon, \\ 
                | f_{k}^\epsilon (\x) - f_1^\epsilon (\x) | = 0  , & \forall \x \notin U_k^\epsilon . 
            \end{cases} 
        \end{align*} 
        \item For each $k=1,2,\cdots, 2^d$, $  f_k^\epsilon (\x) - f_k^\epsilon (\x_{k,\epsilon}^*) \le 3^{q+1} \| \x - \x_{k,\epsilon}^* \|_\infty^q $, for all $\x \in \R^d$. 
    \end{enumerate} 
\end{proposition}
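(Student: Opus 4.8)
The plan is to establish the three items by a direct case analysis on the piecewise definition of $f_k^\epsilon$, following closely the template already used in the proofs of Propositions~\ref{prop:check-nondegen-1} and~\ref{prop:check-ins-gap-1}. As a preliminary bookkeeping step I would record the base values of the functions at their minimizers: for $k \ge 2$ the point $\x_{k,\epsilon}^*$ lies in $\B(\x_{k,\epsilon}^*,\epsilon)\setminus\B(0,\epsilon/2)$, since $\|\x_{k,\epsilon}^*\|_\infty=\epsilon>\epsilon/2$, so $f_k^\epsilon(\x_{k,\epsilon}^*)=-\epsilon^q$; while for $k=1$ we have $f_1^\epsilon(\x_{1,\epsilon}^*)=f_1^\epsilon(0)=0$. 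I would also observe that $\B(\x_{k,\epsilon}^*,\epsilon)$ is contained in the closed orthant $O_k$, so the region on which $f_k^\epsilon$ deviates from $\|\cdot\|_\infty^q$, namely $\B(\x_{k,\epsilon}^*,\epsilon)\setminus\B(0,\epsilon/2)$, sits inside $U_k^\epsilon$; this single observation takes care of most of item~2 for free.

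For item~1, on the \emph{apple-shaped} region $\B(\x_{k,\epsilon}^*,\epsilon)\setminus\B(0,\epsilon/2)$ the difference $f_k^\epsilon(\x)-f_k^\epsilon(\x_{k,\epsilon}^*)$ equals exactly $\|\x-\x_{k,\epsilon}^*\|_\infty^q$, which trivially dominates $2^{-(q-1)}\|\x-\x_{k,\epsilon}^*\|_\infty^q$; off that region (and for $k=1$ everywhere) the difference equals $\|\x\|_\infty^q+\|\x_{k,\epsilon}^*\|_\infty^q$, and convexity of $\|\cdot\|_\infty^q$ combined with Jensen's inequality gives $2^{-q}\|\x-\x_{k,\epsilon}^*\|_\infty^q\le\tfrac12\bigl(\|\x\|_\infty^q+\|\x_{k,\epsilon}^*\|_\infty^q\bigr)$, which is exactly the claimed bound. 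For item~2, the two functions agree off $\B(\x_{k,\epsilon}^*,\epsilon)\setminus\B(0,\epsilon/2)$, which by the preliminary observation lies in $U_k^\epsilon$, so the $\x\notin U_k^\epsilon$ half is automatic and the difference also vanishes on $U_k^\epsilon$ away from the apple region; on the apple region itself $|f_k^\epsilon(\x)-f_1^\epsilon(\x)|=\bigl|\,\|\x-\x_{k,\epsilon}^*\|_\infty^q-\epsilon^q-\|\x\|_\infty^q\,\bigr|\le\epsilon^q+\epsilon^q+(2\epsilon)^q=(2^q+2)\epsilon^q$, using $\|\x-\x_{k,\epsilon}^*\|_\infty\le\epsilon$ and $\|\x\|_\infty\le\|\x-\x_{k,\epsilon}^*\|_\infty+\|\x_{k,\epsilon}^*\|_\infty\le2\epsilon$.

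For item~3, on the apple region the difference is again exactly $\|\x-\x_{k,\epsilon}^*\|_\infty^q\le3^{q+1}\|\x-\x_{k,\epsilon}^*\|_\infty^q$; off that region (for $k\ge2$) the difference equals $\|\x\|_\infty^q+\epsilon^q$, and I would split it into the subcase $\|\x-\x_{k,\epsilon}^*\|_\infty>\epsilon$, where $\epsilon^q<\|\x-\x_{k,\epsilon}^*\|_\infty^q$ and $\|\x\|_\infty\le2\|\x-\x_{k,\epsilon}^*\|_\infty$ yield the bound $(2^q+1)\|\x-\x_{k,\epsilon}^*\|_\infty^q$, and the subcase $\x\in\B(0,\epsilon/2)$, where $\|\x-\x_{k,\epsilon}^*\|_\infty\ge\epsilon/2$ and $\|\x\|_\infty\le\epsilon/2$ yield the bound $(1+2^q)\|\x-\x_{k,\epsilon}^*\|_\infty^q$; both constants are at most $3^{q+1}$ for $q\ge1$, and the $k=1$ case is immediate. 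I do not expect a real obstacle here: the only step needing care is the geometric bookkeeping --- verifying $\B(\x_{k,\epsilon}^*,\epsilon)\subseteq O_k$ so that the deviation regions genuinely land inside the declared $U_k^\epsilon$, keeping the two sub-regions $\{\|\x-\x_{k,\epsilon}^*\|_\infty>\epsilon\}$ and $\B(0,\epsilon/2)$ of the \emph{otherwise} branch distinct, and checking elementary numerical facts such as $2^q+1\le3^{q+1}$ for all $q\ge1$.
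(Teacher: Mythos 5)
Your proposal is correct and takes essentially the same approach as the paper: Jensen's inequality for item 1 off the deviation region, the identical triangle-type computation giving $\epsilon^q+\epsilon^q+(2\epsilon)^q$ for item 2, and an elementary case analysis for item 3. The only (cosmetic) difference is in item 3, where your split into the subcases $\|\x-\x_{k,\epsilon}^*\|_\infty>\epsilon$ and $\x\in\B(0,\epsilon/2)$ yields the constant $2^q+1$, while the paper argues on spheres of radius $r>\epsilon/2$ to get $2^q+3^q$; both are at most $3^{q+1}$, so the conclusions coincide.
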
 

\begin{proof} 
    Item 1 is clearly true when $\x \in U_k^\epsilon$, it remains to consider $\x \notin U_k^\epsilon $. 
    For item 1, we use Jensen's inequality to get 
    \begin{align*}
        \left\| \frac{ \x - \y }{2} \right\|_{\infty}^q \le \frac{ \| \x \|_{\infty}^q + \| \y \|_{\infty}^q }{2}, \; \forall q \ge 1 , \forall \x,\y \in \R^d . 
    \end{align*}
    Rearranging terms, and substituting $\y = \x_{k,\epsilon}^* $ in the above inequality gives that, for any $\x \notin U_k^\epsilon$,  
    \begin{align*} 
         \frac{1}{2^{q-1}} \| \x - \x_{k,\epsilon}^* \|_\infty^q 
        \le \;  
        \| \x \|_\infty^q + \| \x_{k,\epsilon}^* \|^q 
        = \; f (\x) - f (\x_{k,\epsilon}^*) . 
    \end{align*} 
    For item 2, we have, for each $k$ and $\x \in \B (\x_{k,\epsilon}^*, \epsilon ) \backslash \B (0, \frac{\epsilon}{2})$, 
    \begin{align*} 
         | f_{k}^\epsilon (\x) - f_1^\epsilon (\x) | 
        =& 
        \left| \left\| \x - \x^* \right\|_\infty^q - \| \x^* \|_\infty^q - \| \x \|_\infty^q \right|  \\ 
        \le& \epsilon^q + \epsilon^q + (2\epsilon)^q = (2^q+2) \epsilon^q
    \end{align*} 
    where the last inequality uses $ \| \x \|_\infty \le 2 \epsilon $ for all $ \x \in \B (\x_{k,\epsilon}^*, \epsilon ) $. 

    Next we proof item 3. Fix any $r \in ( \frac{\epsilon}{2}, \infty) $. For any $\x \in \S ( \x_{k,\epsilon}^*, r ) $, we have $ \| \x \|_\infty \le r + \epsilon $, and thus 
    \begin{align*} 
        3^q \| \x - \x_{k,\epsilon}^* \|_\infty^q 
        = 
        3^q r^q 
        \ge 
        \( r + \epsilon \)^q \ge \| \x \|_\infty^q . 
    \end{align*} 
    The above inequality gives, $ \forall \x \notin \B (\x_{k,\epsilon}^* , \epsilon) \backslash \B(0,\frac{\epsilon}{2}) $
    \begin{align*}
        &\; 3^{q+1} \| \x - \x_{k,\epsilon}^* \|_\infty^q 
        \ge 
        (2^q+3^q) \| \x - \x_{k,\epsilon}^* \|_\infty^q \\
        \ge&\;  \| \x \|_\infty^q + \| \x_{k,\epsilon}^* \|_\infty^q = f (\x) - f (\x^*) . 
    \end{align*}
    We conclude the proof by noticing that item 3 is clearly true when $ \x \in \B (\x_{k,\epsilon}^*, \epsilon) \backslash \B(0,\frac{\epsilon}{2})$. 
    
\end{proof} 



\begin{proof}[Proof of Theorem \ref{thm:standard}] 
    
    Fix any policy $\pi$. Let $ \Pr_{k,\epsilon} $ be the probability of running $\pi$ on $f_{k}^\epsilon$. 
    Let $ \E_{k,\epsilon} $ be the expectation with respect to $ \Pr_{k,\epsilon} $. 
    
    Firstly, we note that $ \{ \x_t \notin U_k^\epsilon \} \implies \{ f_k^{\epsilon} (\x_t) - f_k^\epsilon (\x_k^*) \ge 2^{-2q+1} \epsilon^q \} $. 
    Thus we have 
    \begin{align} 
        & \frac{1}{2^d} \sum_{k=1}^{2^d} \E_{k,\epsilon} \[ R_T(\pi) \] \nonumber \\
        \ge& \; 
        \frac{1}{2^d} \sum_{k=1}^{2^d} \sum_{t=1}^T \E_{k,\epsilon}^t \[ f_{k}^\epsilon (\x_t) - f_k^\epsilon (\x_{k,\epsilon}^*) \] \nonumber \\ 
        \ge& \; 
        \frac{2^{-2q+1} \epsilon^q }{2^d} \sum_{k=1}^{2^d} \sum_{t=1}^T \Pr_{k,\epsilon} \( f_{k}^\epsilon (\x_t) - f_k^\epsilon (\x_{k,\epsilon}^*) \ge  \frac{\epsilon^q}{2^{2q-1}} \) \nonumber \\ 
        \ge& \; 
        \frac{2^{-2q+1} \epsilon^q }{2^d} \sum_{k=1}^{2^d} \sum_{t=1}^T \Pr_{k,\epsilon} \( \x_t \notin U_k^\epsilon \) . 
        \label{eq:tmp-1} 
    \end{align} 
    
    Building on the previous derivation, we now turn our attention to the summation term and arrive at 
    \begin{align} 
        &\; \sum_{k=1}^{2^d} \sum_{t=1}^T \Pr_{k,\epsilon} \( \x_t \notin U_k^\epsilon \) \nonumber \\ 
        \ge& \; 
        \sum_{k=1}^{2^d} \sum_{t=1}^T \( 1 - \Pr_{k, \epsilon} \( \x_t \in U_k^\epsilon \) \) \nonumber \\ 
        \ge& \; 
        \sum_{k=1}^{2^d} \sum_{t=1}^T \( 1 - \Pr_{1,\epsilon} \( \x_t \in U_k^\epsilon \) 
        - D_{TV} \( \Pr_{1,\epsilon}, \Pr_{k,\epsilon} \) \) \nonumber \\ 
        =& \; 
        (2^d-1)T
        - 
        \sum_{k=2}^{2^d} \sum_{t=1}^T  D_{TV} \( \Pr_{1,\epsilon}, \Pr_{k,\epsilon} \) \nonumber \\
        \ge& \; 
        (2^d-1)T
        - 
        \sum_{k=2}^{2^d} \sum_{t=1}^T \( 1 
        - \frac{\exp \( - D_{kl} \( \Pr_{1,\epsilon} \| \Pr_{k,\epsilon} \) \)}{2}  \) \nonumber \\
        =& \; 
        \frac{1}{2}
        \sum_{k=2}^{2^d} \sum_{t=1}^T \exp \( - D_{kl} \( \Pr_{1,\epsilon} \| \Pr_{k,\epsilon} \) \) \nonumber \\ 
        \ge&\; 
        \frac{2^d-1}{2}
        \sum_{t=1}^T  \exp \( - \frac{1}{2^d - 1} 
        \sum_{k=2}^{2^d} D_{kl} \( \Pr_{1,\epsilon} \| \Pr_{k,\epsilon} \) \) , 
        \label{eq:tmp-2}
    \end{align} 
    where the fourth line uses $ \sum_{k=1}^{2^d} \Pr_{1,\epsilon} \( \x_t \in U_k^\epsilon \) = 1 $, the fifth line uses Lemma \ref{lem:bh}, and the last line uses Jensen's inequality. 
    
    By the chain rule of KL-divergence, we have 
    \begin{align} 
        &\; D_{kl} \( \Pr_{1,\epsilon} \| \Pr_{k,\epsilon} \) \nonumber \\ 
        =& \; 
        D_{kl} ( \Pr_{1,\epsilon} \( \mathbf{X}_{: T+1} \) 
        \| \Pr_{k,\epsilon} \( \mathbf{X}_{: T+1} \) ) \nonumber \\ 
        =& \; 
        D_{kl} ( \Pr_{1,\epsilon} \( \mathbf{X}_{: T} \) 
        \| \Pr_{k,\epsilon} \( \mathbf{X}_{: T} \) ) \nonumber \\  
        &+ 
        \E_{\Pr_{1,\epsilon} } \[ D_{kl} \( \mathcal{N} \( f_{1}^\epsilon (\x_{T})  \) \| \mathcal{N} \( f_{k}^\epsilon (\x_{T} )  \) \) \] \nonumber \\ 
        & + D_{kl} ( \Pr_{1,\epsilon} \( \x_{T } | \mathbf{X}_{: T} \) 
        \| \Pr_{k,\epsilon} \( \x_{T }| \mathbf{X}_{: T} \) ) 
        \label{eq:tmp-3} 
    \end{align} 
    where $ \mathbf{X}_{: t+1} = \{\x_1, y_1, \cdots, \x_t, y_t\} $, and $ \mathcal{N} \(\mu \) $ is the Gaussian random variable of mean $\mu$ and variance 1. Under the fixed policy $\pi$, $ \x_{T } $ is fully determined by choices and observations before it. Thus 
    \begin{align*}
        &\;D_{kl} ( \Pr_{1,\epsilon} \( \x_{T } | \mathbf{X}_{: T} \) 
        \| \Pr_{k,\epsilon} \( \x_{T }| \mathbf{X}_{: T} \) ) 
        = 0. 
    \end{align*} 
    Also, it holds that $ D_{kl} \( \mathcal{N} \( f_1^\epsilon (\x_{T })  \) \| \mathcal{N} \( f_{k}^\epsilon (\x_{T }) \) \) = \frac{1}{2} \( f_1^\epsilon (\x_{T }) - f_k^\epsilon (\x_{T }) \)^2 $. We plug the above results into (\ref{eq:tmp-3}) and get, for any $k \ge 2$,  
    \begin{align*} 
        &\;D_{kl} \( \Pr_{1,\epsilon} \| \Pr_{k,\epsilon} \)
        \\
        =& \; 
        D_{kl} \( \Pr_{1,\epsilon} \( \mathbf{X}_{: T} \) 
        \| \Pr_{k,\epsilon} \( \mathbf{X}_{: T} \) \) \\ 
        &+ 
        \E_{\Pr_{1,\epsilon} } \[ \frac{1}{2} \( f_1^\epsilon (\x_{T }) - f_k^\epsilon (\x_{T }) \)^2 \] \\ 
        \le& \; 
        D_{kl} \( \Pr_{1,\epsilon} \( \mathbf{X}_{: T} \) \| \Pr_{k,\epsilon} \( \mathbf{X}_{: T} \) \) \\ 
        &+ 
        \frac{(2^q+2)^2 }{2} \E_{\Pr_{1,\epsilon} } \[  \epsilon^{2q} \Ind_{ \left\{ \x_{T } \in U_k^\epsilon \right\} } \] \\
        =& \; 
        D_{kl} \( \Pr_{1,\epsilon} \( \mathbf{X}_{: T} \) \| \Pr_{k,\epsilon} \( \mathbf{X}_{: T} \) \) \\ 
        &+ 
        \frac{ (2^q+2)^2 \epsilon^{2q} }{2} \Pr_{1,\epsilon} \( \x_{T } \in U_k^\epsilon \) . 
    \end{align*} 
    
    We can then recursively apply chain rule and the above calculation, and obtain 
    \begin{align} 
        D_{kl} \( \Pr_{1,\epsilon} \| \Pr_{k,\epsilon} \)  
        \le 
        \frac{ (2^q+2)^2 \epsilon^{2q} }{2} \sum_{s = 1}^T \Pr_{1,\epsilon} \( \x_s \in U_k^\epsilon \) . \nonumber 
    \end{align} 
    
    
    
    Combining the above inequality with (\ref{eq:tmp-1}) and (\ref{eq:tmp-2}) gives 
    \begin{align} 
        & \; \frac{1}{2^d} \sum_{k=1}^{2^d} \E_{k,\epsilon} \[ R_T(\pi) \] \nonumber \\ 
        \ge& \;  
         \frac{1}{2^{2q+1}} \sum_{t=1}^T \epsilon^q \exp \( - \frac{1}{2^d - 1} \sum_{k=2}^{2^d} D_{kl} \( \Pr_{1,\epsilon} \| \Pr_{k,\epsilon} \) \) \nonumber \\ 
        \ge& \; 
         \frac{1}{2^{2q+1}} \sum_{t=1}^T \epsilon^q \exp \( - \frac{ \epsilon^{2q}}{C_{d,q}} \sum_{k=2}^{2^d}  \sum_{s =1 }^T \Pr_{1,\epsilon} \( \x_s \in U_k^\epsilon \) \) \nonumber \\ 
        \ge& \; 
         \frac{1}{2^{2q+1}} \sum_{t=1}^T \epsilon^q \exp \( - \frac{1}{2^d - 1} \cdot \frac{ (2^q+2)^2 \epsilon^{2q} }{2} T \) , \nonumber  
    \end{align}
    where $ C_{d,q} = \frac{2(2^d-1)}{(2^q + 2)^2} $, and the last line uses $ \sum_{k=2}^{2^d} \Pr_{1,\epsilon} \( \x_s \in U_k^\epsilon \) \le 1 $, since $U_k^\epsilon$ are disjoint. 

    By picking $ \epsilon^q = \frac{ \sqrt{ 2(2^d - 1) } }{ 2^q + 2 } \cdot \sqrt{ \frac{1}{T} } $, we have 
    \begin{align*}
        \frac{1}{2^d} \sum_{k=1}^{2^d} \E_{k,\epsilon} \[ R_T(\pi) \]  
        \ge&  \; 
        \frac{\sqrt{2(2^d-1)}}{ (2^q + 2) 2^{2q+1} } e^{-1} \sqrt{T} . 
    \end{align*}
    
    

\end{proof}

\section{Conclusion}

This paper studies the nondegenerate bandit problem with communication constraints. 
The nondegenerate bandit problem is important in that it encapsulates important problem classes, ranging from dynamic pricing to Riemannian optimization. 
We introduce the Geometric Narrowing (GN) algorithm that solves such problems in a near-optimal way. 
We establish that, when compared to GN, there is little room for improvement in terms of regret order or communication complexity. 


\appendix

%

\section{The Geometric Narrowing Algorithm for Level-Smooth Functions} 
\label{appe:L-S} 


When the space is Euclidean with the Lebesgue measure $\mu_0$, the Geometric Narrowing algorithm admits a variation that efficiently solves stochastic optimization problems under mild conditions. In this part, we introduce a slight variant of the GN algorithm. This algorithm solves the stochastic optimization problem proposed by \citet{wang2018optimization}, under conditions milder than those used by \citet{wang2018optimization}. 

\begin{remark} 
    It is important to note that the original GN algorithm, as outlined in Section \ref{sec:algo}, does not depend on the existence of a measure or the well-definedness of partial differentiability. In the Appendix, we focus on scenarios where such notions are well-defined, and explore stochastic optimization problems in these contexts. 
\end{remark} 




\begin{figure*}[h!] 
        \centering 
        \includegraphics[scale=0.38]{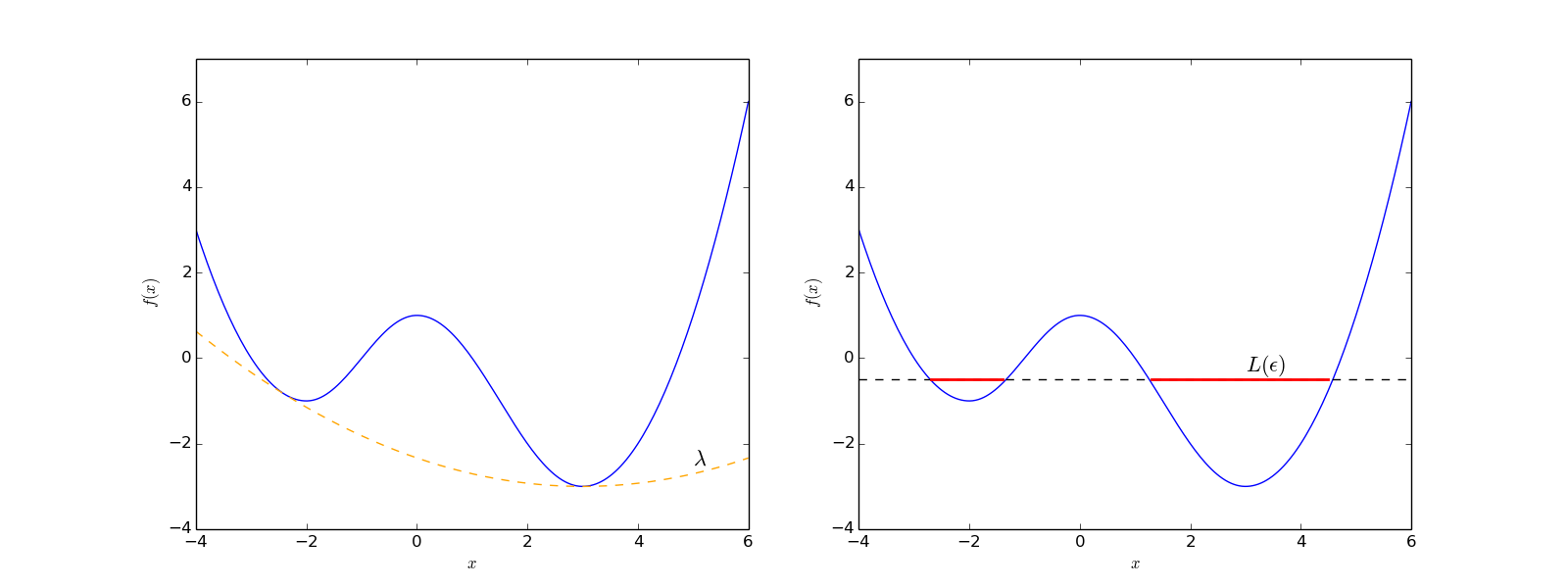} 
        \caption{Approaches to handling two distant local minima. Identify parameters that satisfy the nondegenerate assumptions for function $f$ (Left), or evaluate the $\epsilon$-level set of function $f$ (Right).} 
        \label{fig:response-level-set} 
\end{figure*} 

Below we first specify a condition weaker than that of \citet{wang2018optimization}, and demonstrate that GN successively solves the stochastic optimization problem under this weaker assumption. 

\subsection{The (L-S) condition} 

        The Level-Smooth (L-S) assumption encompasses conditions related to both the level set and the smoothness of the function, hence its name. This condition is weaker than that used by \citet{wang2018optimization}, but is sufficient to guarantee the success of Geometric Narrowing. 
        Given any $\epsilon > 0$, the $\epsilon$-level set for function $f$ is defined as $ L (\epsilon) := \{ \x \in \mathcal{X} : f (\x) \le f (\x^*) + \epsilon \} $, where $\x^* = \arg\min_{\x \in \X} f(\x)$.


\begin{definition}[Level-Smooth assumption] 
        Let $\X \subseteq \R^d$ be measurable. Let $\D$ be a doubling metric on $\X$, and let $ \mu_0 $ be the Lebesgue measure on $\R^d$. 
        We say differentiable function $f$ satisfies the Level-Smooth assumption, abbreviated as (L-S) assumption, if there exist positive constants $\lambda$ and $\ell$ such that 
    \begin{align} 
        \left\{
            \begin{aligned}
                 &\mu_0 (L(\epsilon)) \le \mu_0 \(\B \( \x^*, \frac{\epsilon}{\lambda} \) \),  && \forall \epsilon \in \R_+, \\
                 &| f'(\x) | \le \ell,  &&\forall \x \in \X, 
            \end{aligned}
        \right.
    \end{align}
    where $f'$ is the gradient of $f$.
    \label{def:l-s}
\end{definition}

    The (L-S) assumption encompasses two requirements. The former emphasizes the measure of $\epsilon$-level set, which should remain within a reasonable bound. The latter enhances the smoothness of function $f$.
    Without loss of generality, both of the inequalities in (L-S) assumption hold globally over $\X$. 

   \begin{remark}
        Let $\x^* = \arg\min_{\x \in \X} f(\x)$. If there are several minima, consider arbitrary $\x^*$ such that $f(\x^*) = \min_{\x \in \X} f(\x)$. Due to the translation-invariant property of the measure $\mu_0$, subsequent analysis does not rely on the uniqueness of $\x^*$.
    \end{remark}

    It is worth reiterating that the (L-S) assumption is weaker than the assumptions adopted by \citet{wang2018optimization}. These two assumptions of \citet{wang2018optimization} are listed below as (A1) and (A2). 
    

        

    \begin{itemize} 
        \item[(A1)] There exist constants $M > 0$ and $\alpha \ge 1$, $f$ belongs to the Hölder class $\Sigma^\alpha(M)$ and for any $x, x^{\prime} \in \X \subseteq \mathbb{R}^d$, the partial derivative satisfies 
    \begin{align*} 
        &\sum_{j=0}^k \sum_{\beta_1+\ldots+\beta_d=j}\left|f^{(\bm{\beta}, j)} (x)\right| \\ 
        &+\sum_{\beta_1+\ldots+\beta_d=k} \frac{\left|f^{(\bm{\beta}, k)}(x)-f^{(\bm{\beta}, k)}\left(x^{\prime}\right)\right|}{\left\|x-x^{\prime}\right\|_{\infty}^{\alpha-k}} \leqslant M , 
    \end{align*} 
    where $k = \lfloor \alpha \rfloor $ is the largest integer lower bounding $\alpha$, and $f^{(\bm{\beta}, j)}:= \frac{\partial^j f (x)}{\partial x_1^{\beta_1} \cdots \partial x_d^{\beta_d}}$. 
        \item[(A2)] There exist constants $ C_0$, s.t. Lebesgure measure $\mu_0$ and covering number $N$ satisfy $N(L(\epsilon), \delta) \le C_0 [ 1+ \mu_0(L(\epsilon))\delta^{-d} ]$ for any $\epsilon, \delta$.
    \end{itemize}

    In simple terms, (A1) requires that function $f$ is $k$-times differentiable; (A2) ensures that the set of points with values close to $f^*$ is not excessively large, where $f^* = \inf f(\x)$. 

    \begin{remark} 
        All of (L-S) and (A1, A2) need only hold true on a set $L(\kappa)$ for some $\kappa > 0$. In such cases, a pre-screening procedure can be utilized \citep{wang2018optimization} to narrow the focus to $ L(\kappa) $. 
    \end{remark} 

    \begin{proposition}
        The condition (A1) and (A2) jointly imply (L-S).  
    \end{proposition}

    \begin{proof}
        In (A1), $f$ is $k$-H\"older where $k=\lfloor\alpha\rfloor$ is the largest integer lower bounding $\alpha$.
        On contrary, the second condition in (L-S) only requires boundedness of the first-order differential of function $f$, which is weaker. 

        Next we show that (A2) implies the first condition in (L-S). For this, we notice that $\forall \epsilon, \delta > 0,$
        \begin{align*} 
            \mu_0 (L (\epsilon)) 
            &\le 
            \mu_0 ( \mathbb{B} (0, \delta ) )  N (L (\epsilon), \delta) \\
            &\le
            \mu_0 ( \mathbb{B} (0, \delta ) ) C_0 [ 1+ \mu_0(L(\epsilon))\delta^{-d} ].
        \end{align*} 
        Choosing $\delta = \frac{\epsilon}{\lambda}$, we have
        \begin{align*}
            \mu_0 (L (\epsilon))
            \!\le\!
            C_0 \!\( \!  1\!+\!\( \frac{\epsilon}{\lambda} \)^{-d}\! \mu_0 (L(\epsilon)) \!\) \!\mu_0\! \( \mathbb{B}\! \(0, \frac{\epsilon}{\lambda}\) \!\) .
        \end{align*}
        We can select an appropriate parameter $\lambda$. Hence, (A2) implies the first condition of (L-S). 
        
        


    \end{proof}

In the following part, we will explain in detail how our approach can operate under the (L-S) assumption, which is weaker than (A1) and (A2). 
That is to say, when only requiring smoothness and an upper bound on the measure of points with values close to $f^*$, our modified algorithm can still achieve a $\widetilde{O}(\sqrt{T})$ regret bound and $O(\log \log T )$ communications with minor adjustments. The specific results are articulated in Proposition \ref{prop:l-s}. Below is the detailed explanation of our approach. 

    \subsection{The GN' Algorithm for (L-S) Functions}




This part introduces the GN' algorithm, a variation of the GN algorithm for (L-S) functions. 
Compared to GN algorithm, the narrowing procedure of GN' shifts to retaining balls solely on the basis of estimated values. This shift is necessitated by the fact that regions with smaller values (the $\epsilon$-level set) may not necessarily be located in proximity to $\x^*$. 
To adopt a smaller batch number $M$, we continue to utilize the Rounded Radius sequence $\{\bar{r}_m\}_{m=1}^{2M}$ with $q=1$ from GN algorithm.

\begin{figure*}[t]
    \begin{minipage}{\textwidth}
\begin{algorithm}[H] 
	\caption{Geometric Narrowing' (GN') for (L-S) Functions}
    \label{alg:GN'}
	\begin{algorithmic}[1]  
		\STATE \textbf{Input.} Space $(\mathcal{X}, \D, \mu_0)$; time horizon $T$; Number of batches $2M$; Diameter $ \mathrm{Dim} \( \mathcal{X} \) = 1 $. 
		\STATE \textbf{Initialization.} 
            Rounded Radius sequence $\{\bar{r}_m\}_{m=1}^{2M}$ with $q=1$; The first communication point $t_0=0$; Cover $ \X $ by $\bar{r}_1$-balls, and define $\mathcal{A}_{1}^{pre}$ as the collection of these balls. 
		\STATE Compute $ n_m = \frac{ 16 \log T }{ \lambda^2 \bar{r}_m^{2} } $ for $m=1,\cdots,2M$. 
		
		\FOR{$m=1,2,\cdots,2M$}
            \STATE If $ \bar{r}_{m+1} > \bar{r}_{m} $, then \textbf{continue}.
            \STATE For each ball $B\in\mathcal{A}_m^{pre}$, play arms $\x_{B,1},\cdots, \x_{B,n_m}$, all located at the region of $ B $. 
            \STATE Collect the loss samples $y_{B,1},\cdots, y_{B,n_m}$ and compute the average loss  $\wh{f}_m(B):=\frac{\sum_{i=1}^{n_m}y_{B,i}}{n_m}$ for each ball $B\in\mathcal{A}_m^{pre}$. 
			\STATE Sort $B$ based on $\wh{f}_m(B)$, and define $s(B)$ as the reverse order of $\wh{f}_m(B)$ in $\{\wh{f}_m(B) : B\in\mathcal{A}_m^{pre} \}$. That is to say, $s(B_m^{\min})=1$ when $\wh{f}_m(B_m^{\min})=\min_{B\in\mathcal{A}_m^{pre}}\wh{f}_m (B)$. 
            \STATE Define 
            \begin{align*} 
                \mathcal{A}_m := \left\{ B \in \mathcal{A}_m^{pre} : s(B) \le \[ \frac{9 \ell + 2\lambda }{\lambda} \]_2^d \right\}. 
            \end{align*} 
            
            \STATE For each ball $ B \in \mathcal{A}_m$, use $\left(\bar{r}_m/\bar{r}_{m+1}\right)^d$ balls of radius $ \bar{r}_{m+1} $ to cover $ B $, and define $\mathcal{A}_{m+1}^{pre}$ as the collection of these balls.  
            \STATE Compute $t_{m+1}=t_m+(\bar{r}_{m}/\bar{r}_{m+1})^d \cdot|\mathcal{A}_m|\cdot n_{m+1}$. 
            If $t_{m+1}\geq T$ then \textbf{break}.
    \ENDFOR
    \STATE \textbf{Cleanup and Output } 
	\end{algorithmic} 
\end{algorithm} 
    \end{minipage}
\end{figure*}

GN' algorithm solves batched bandit learning problems for (L-S) functions. In Proposition \ref{prop:l-s}, we present a regret upper bound for GN', which demonstrates the algorithm's effectiveness.

\begin{proposition} 
    \label{prop:l-s} 
    Let $\X \subseteq \R^d$ be measurable. Let $\D$ be a doubling metric on $\X$, and let $ \mu_0 $ be the Lebesgue measure on $\R^d$. 
    Let function $f $ satisfy (L-S) assumption. 
    Consider a stochastic bandit learning environment. 
    For any $T \in \mathbb{N}_+$, with probability exceeding $1-2T^{-1}$, the $T$-step total regret of Geometric Narrowing', written $R^{GN'}(T)$, 
    satisfies 
    \begin{align*}
        R^{GN'} (T) \le {K_+} A_+^{d} \sqrt{T \log T} \log \log \frac{T}{\log T},
    \end{align*}
    where $d$ is the doubling dimension of $(\X, \D)$, and $K_+$ and $A_+$ are constants independent of $d$ and $T$. 
    In addition, only $\mathcal{O}\( \log \log T \)$ communication points are needed to achieve this regret rate. 
\end{proposition} 

To prove Proposition \ref{prop:l-s}, we first measure the regularity of $L(\epsilon)$ using its covering number.

    \begin{lemma} 
        \label{lem:l-s}
        For (L-S) function $f$, define $N(L(\epsilon), \delta)$ as the covering number of cover $L(\epsilon)$ by $\delta$-balls. Then,
        \begin{align}
            \[ \frac{\epsilon}{\delta\ell} \]_2^d \le
            N(L(\epsilon), \delta) \le
            \[ \frac{2\epsilon + \delta\ell}{\delta \lambda} \]_2^d. \label{eq:l-s}
        \end{align} 
    \end{lemma} 
    
    \begin{proof}[Proof of Lemma \ref{lem:l-s}]
        Lower bound of $N(L(\epsilon), \delta)$ is straightforward using the bounded derivative property, and we proceed to prove the upper bound. Consider cover $L(\epsilon)$ by $\delta$-balls: $\{ \B(\x_i, \delta) \}_{i=1}^{N(L(\epsilon), \delta)}$. 
        For $i=1,2,\cdots,N(L(\epsilon), \delta)$, it holds that 
        \begin{align*} 
            \min_{\x \in \B(\x_i, \delta) } f(\x) \le f(\x^*)+\epsilon . 
        \end{align*} 
        Thanks to the bound on maximal absolute derivative $\ell$, for all $i$ we have $\max_{\x \in \B(\x_i, \delta) } f(\x) \le f(\x^*)+\epsilon + 2\delta\ell$, which implies balls $\{ \B(\x_i, \delta) \}_{i=1}^{N(L(\epsilon), \delta)}$ are part of $L(\epsilon + 2\delta\ell)$.
    
        According to the relationship between covering and packing, the number of $\delta$-packing in some space is larger than the number of $\delta$-covering in this space. The property of bounded gradient $\ell$ is used here.
        Subsequently,  $N(L(\epsilon), \delta)$ is smaller than the packing number of $\delta/2$-balls in $L(\epsilon + \delta \ell / 2)$. 
        Since $\mu_0 (L(\epsilon + \delta \ell / 2)) \le \mu_0 \( \B \(\x^*, \(\epsilon + \delta \ell/2\) /\lambda\) \)$, we have $N(L(\epsilon), \delta) \le \[ \frac{2\epsilon + \delta\ell}{\delta \lambda} \]_2^d$.
    \end{proof}

The proof structure of Proposition \ref{prop:l-s} is similar to analysis of the GN algorithm. We estimate the regret for each batch individually, and then sum them up.
Below is a detailed proof. 

\begin{proof}[Proof of Proposition \ref{prop:l-s}]
    Under event $\mathcal{E}$ (defined in Lemma \ref{lem:concen}), the GN' algorithm guarantees that  for any $m$ and $B \in \mathcal{A}_m^{pre}$,
    \begin{align*} 
        \left| \wh{f}_m(B) - \E \[ \wh{f}_m(B) \] \right| \leq \sqrt{\frac{ 4 \log T}{n_m}}= \frac{\lambda \bar{r}_m}{2}. 
    \end{align*} 
     By smoothness, 
    \begin{align*} 
        \min_{\x \in B}\! f(\x) \!-\! \frac{\lambda \bar{r}_m}{2} \!\le\! \wh{f}_m(B) \!\le\! \min_{\x \in B}\! f(\x) \!+\! 2 \bar{r}_{m} \ell \!+\! \frac{\lambda \bar{r}_m}{2}.
    \end{align*}
    Note that for $B \in \mathcal{A}_m^{pre}$, $\wh{f}_m(B)$ being smaller than $f(\x^*) + 2 \bar{r}_{m} \ell + \frac{\lambda \bar{r}_m}{2}$ implies that $\min_{\x \in B} f(\x) \le f(\x^*) + 2 \bar{r}_{m} \ell + \lambda \bar{r}_m$. 
    We get the inclusion relationship 
    \begin{align*}
        &\; \cup \{ B \in  \mathcal{A}_m^{pre} : \wh{f}_m(B) \le f(\x^*) + 2 \bar{r}_{m} \ell + \frac{\lambda \bar{r}_m}{2}\} \\
        \subseteq &\; \{ \x \in \X : f(\x) \le f(\x^*) + 4 \bar{r}_{m} \ell + \lambda \bar{r}_m \}.
    \end{align*}
    By applying Lemma \ref{lem:l-s} for covering $L(4 \bar{r}_{m} \ell + \lambda \bar{r}_m)$ with $\bar{r}_m$-balls, we know there are at most $\[ \frac{9 \ell + 2\lambda }{\lambda} \]_2^d$ balls in $\mathcal{A}_m^{pre}$ satisfying $\wh{f}_m(B) \le f(\x^*) + 2 \bar{r}_{m} \ell + \frac{\lambda \bar{r}_m}{2}$.

    Specially, $\wh{f}_m(B_m^*) \le f(\x^*) + 2 \bar{r}_{m} \ell + \frac{\lambda \bar{r}_m}{2}$ where $B_m^*$ $\in$ $\mathcal{A}_m^{pre}$ is the ball which contains $\x^*$. That is to say, if we pick $\[ \frac{9 \ell + 2\lambda }{\lambda} \]_2^d$ balls with the smallest estimators, the ball constains $\x^*$ will be retained.

    Now, let us count the number of balls in $\mathcal{A}_m^{pre}$ with $\wh{f}_m(B) \le f(\x^*) + 9 \frac{\ell^2}{\lambda} \bar{r}_{m} + 4 \ell \bar{r}_m + \frac{\lambda \bar{r}_m}{2}$.
    Under event $\mathcal{E}$, there is the following inclusion relationship: 
    \begin{align*}
        &\{ B \!\in\!  \mathcal{A}_m^{pre}\! :\! \min_{\x \in B} f(\x) \le f(\x^*) +  \frac{9\ell^2}{\lambda} \bar{r}_{m} + 2 \ell \bar{r}_m \} \\
        \subseteq
        &\{ B \!\in\!  \mathcal{A}_m^{pre}\! :\! \wh{f}_m(B) \!\le\! f(\x^*) \!+\!  \frac{9\ell^2+\!4\ell\lambda\!+\!\lambda^2/2}{\lambda} \bar{r}_{m}  \} \\
        \subseteq
        &\{ B \!\in\!  \mathcal{A}_m^{pre}\! :\! \min_{\x \in B}\! f(\x) \!\le\! f(\x^*) \!+\!  \frac{9\ell^2\!+\!4\ell\lambda\!+\!\lambda^2}{\lambda} \bar{r}_{m} \},
    \end{align*}
    where the first term contains at least $\[ \frac{9 \ell + 2\lambda }{\lambda} \]_2^d$ balls in $\mathcal{A}_m^{pre}$, as determined by the left side inequality of (\ref{eq:l-s}).

    Therefore, for $B \in \mathcal{A}_m$, $\wh{f}_m(B)$ is smaller than $f(\x^*) + \frac{9\ell^2}{\lambda} \bar{r}_{m} + 4 \ell \bar{r}_m + \frac{\lambda \bar{r}_m}{2}$.
    At the same time, by the inclusion relationship, each arm $\x$ in $B$ ($B \in \mathcal{A}_m$) has an expected loss $f(\x) - f(\x^*)$, which is upper bounded by $\frac{9\ell^2}{\lambda} \bar{r}_{m} + 4 \ell \bar{r}_m+ \lambda \bar{r}_m = \left( \frac{9\ell^2+4\ell\lambda+\lambda^2}{\lambda}\right) \bar{r}_m$.
    
    The proposed selection rule for $\mathcal{A}_m$ satisfies
    \begin{align*}
        &\left| \mathcal{A}_m \right| = \[ \frac{9 \ell + 2\lambda }{\lambda} \]_2^d,
        \quad \text{and} \quad \\ 
        &| \mathcal{A}_m^{pre} | 
        \le 
        \( \frac{ \bar{r}_{m-1} }{ \bar{r}_m} \)^d \[ \frac{9 \ell + 2\lambda }{\lambda} \]_2^d .
    \end{align*}
    
    Similar to the proof frame in Theorem 1, for $m=1,2,\cdots ,2M$,
    \begin{align*} 
        R_{m} 
        \le& \;  
        |\mathcal{A}_{m}^{pre}| \cdot n_{m} \cdot\left( \frac{9\ell^2+4\ell\lambda+\lambda^2}{\lambda}\right) \bar{r}_{m-1} \\ 
        \le& \;  
        \( \frac{ \bar{r}_{m-1} }{ \bar{r}_{m}} \)^d \[ \frac{9 \ell + 2\lambda }{\lambda} \]_2^d \cdot \frac{16 \log T}{\lambda^2 \bar{r}_{m}^2} D_{\lambda,\ell} \bar{r}_{m-1} \\
        \le& \;  
        2^{d+2} D_{\lambda,\ell} \frac{16}{\lambda^2} \[ \frac{9 \ell + 2\lambda }{\lambda} \]_2^d \sqrt{T \log T},
    \end{align*}
    where $D_{\lambda,\ell}:=\frac{9\ell^2+4\ell\lambda+\lambda^2}{\lambda}$ is introduced for simplicity.
    
    For the cleanup phase, the regret (written $R_{2M+1}$) is bounded by  
    \begin{align*} 
        R_{2M+1}  
        \le 
        D_{\lambda,\ell} \Bar{r}_{2M} T  
        \le 
        D_{\lambda,\ell} \sqrt{T \log T} \(\frac{T}{\log T} \)^{\frac{1}{2}\hat{\eta}^M} . 
    \end{align*} 
    If choose $M= \frac{\log\log \frac{T}{\log T}}{\log \frac{1}{\hat{\eta}}}$ with $\hat{\eta}=\frac{d+1}{d+2}$, we have $\hat{\eta}^{\hat{M}} = \( \log \frac{T}{\log T} \) ^{-1}$, then 
     \begin{align*}
         R^{GN'} (T)
         \le&  
         2^{d+3} D_{\lambda,\ell}\sqrt{T \log T}  
         \\
        &\cdot\( \frac{16}{\lambda^2}\! \[ \frac{9 \ell + 2\lambda }{\lambda} \]_2^d \frac{\log\log \frac{T}{\log T}}{\log (d\!+\!2) \!-\! \log (d\!+\!1)} \!+\! e^\frac{1}{2} \) . 
     \end{align*}
\end{proof}

\color{black}


\bibliographystyle{ieeetr} 
\bibliography{references}

\end{document}